\newcommand{\commentout}[1]{}
\newcommand{\eat}[1]{}
\newcommand{\topic}[1]{\vspace{0.2cm}\noindent{\bf {#1}:}}
\newcommand{\calH}{{\mathcal H}}
\newcommand{\calC}{{\mathcal C}}
\newcommand{\calP}{{\mathcal P}}
\newcommand{\calQ}{{\mathcal Q}}
\newcommand{\tcalQ}{{\widetilde{\mathcal Q}}}
\newcommand{\calK}{{\mathcal K}}
\newcommand{\calE}{{\mathcal E}}
\newcommand{\calN}{{\mathcal N}}
\newcommand{\barB}{\overline{B}}
\newcommand{\R}{\mathbb{R}}
\newcommand{\Exp}{{\mathbb{E}}}
\newcommand{\Var}{\operatorname{Var}}
\newcommand{\LP}{\mathsf{LP}}
\newcommand{\pas}{\mathrm{Pas}}
\DeclareMathOperator{\tran}{Tran}
\DeclareMathOperator{\poly}{poly}
\newcommand{\lip}{1\text{-}\mathsf{Lip}}
\newcommand{\Lip}[1]{ #1 \text{-}\mathsf{Lip}}
\newcommand{\freq}{\mathsf{fq}}
\newcommand{\nfreq}{\mathsf{nfq}}
\newcommand{\tfreq}{\widetilde{\mathsf{fq}}}
\newcommand{\tnfreq}{\widetilde{\mathsf{nfq}}}
\newcommand{\mix}{\vartheta}
\newcommand{\tmix}{\widetilde{\vartheta}}
\newcommand{\hmix}{\widehat{\vartheta}}
\newcommand{\simplex}{\Delta}
\newcommand{\proj}[1]{\Pi_{#1}}
\newcommand{\B}{\mathsf{B}}
\newcommand{\tU}{\widetilde{U}}
\newcommand{\tV}{\widetilde{V}}
\newcommand{\tr}{\widetilde{r}}
\newcommand{\tg}{\widetilde{g}}
\newcommand{\tSigma}{\widetilde{\Sigma}}
\newcommand{\tA}{\widetilde{A}}
\newcommand{\tB}{\widetilde{B}}
\renewcommand{\i}{\mathrm{i}}
\renewcommand{\d}{\mathrm{d}}
\DeclareMathOperator{\Span}{Span}
\DeclareMathOperator{\sspan}{Span}
\newcommand{\supp}{\mathrm{Sp}}
\newcommand{\support}{\mathrm{Support}}
\newcommand{\innerprod}[2]{\langle #1,#2\rangle}
\newcommand{\sample}{\mathsf{s}}
\newcommand{\ep}{\widetilde{\mu}}
\newcommand{\jiannote}[1]{\textcolor{red}{#1}}
\newtheorem{reduction}{Reduction}
\newtheorem{theorem}{Theorem}[section]
\newtheorem{lemma}[theorem]{Lemma}
\newtheorem{proposition}[theorem]{Proposition}
\theoremstyle{remark} }
\theoremstyle{definition} }
\newenvironment{proofof}[1]{\begin{proof}[Proof of #1]}{\end{proof}}
\renewcommand{\qedsymbol}{\ensuremath{\blacksquare}}
\definecolor{olive}{rgb}{0.3, 0.4, .1}
\definecolor{fore}{RGB}{249,242,215}
\definecolor{back}{RGB}{51,51,51}
\definecolor{title}{RGB}{255,0,90}
\definecolor{dgreen}{rgb}{0.,0.6,0.}
\definecolor{gold}{rgb}{1.,0.84,0.}
\definecolor{JungleGreen}{cmyk}{0.99,0,0.52,0}
\definecolor{BlueGreen}{cmyk}{0.85,0,0.33,0}
\definecolor{RawSienna}{cmyk}{0,0.72,1,0.45}
\definecolor{Magenta}{cmyk}{0,1,0,0}
\newcommand{\eps}{\epsilon}
\newcommand{\e}{\epsilon}
\newcommand{\Dt}{\Delta}
\title{Learning Arbitrary Statistical Mixtures of Discrete Distributions}
\author{
     Jian Li\thanks{Institute for Interdisciplinary Information Sciences,
     Tsinghua University, Beijing, China 100084.
     Research supported in part by the National Basic Research Program of China grants
     2015CB358700, 2011CBA00300, 2011CBA00301, and the National Natural Science Foundation
     of China grants 61202009, 61033001, 61361136003. 
     Work performed in part at the Simons Institute for the Theory of Computing.
     Email: {\tt lapordge@gmail.com}.}
\and Yuval Rabani\thanks{The Rachel and Selim Benin School
     of Computer Science and Engineering, The Hebrew University
     of Jerusalem, Jerusalem 91904, Israel.
     Research supported by BSF grant number 2012333,
     and by the Israeli Center of Excellence on Algorithms.
     Email: {\tt yrabani@cs.huji.ac.il}.}
\and Leonard J. Schulman\thanks{California Institute of Technology,
     Pasadena, CA 91125, USA. Supported in part by NSF grant 1319745. Work performed in part
     at the Simons Institute for the Theory of Computing.
     Email: {\tt schulman@caltech.edu}.}
\and
     Chaitanya Swamy\thanks{
     Dept. of Combinatorics and Optimization, Univ. Waterloo,
     Waterloo, ON N2L 3G1, Canada. Supported in part by NSERC grant
     32760-06, an NSERC Discovery Accelerator Supplement Award, and an Ontario Early
     Researcher Award.
     Email: {\tt cswamy@uwaterloo.ca}.}
}
\date{\today}
\begin{document}
\sloppy

\maketitle
\def\thepage{}
\thispagestyle{empty}

\begin{abstract}
We study the problem of learning from unlabeled samples
very general statistical mixture models on large finite sets. Specifically, the model to
be learned, $\mix$, is a probability
distribution over probability distributions $p$, where each such $p$ is a probability
distribution over $[n] = \{1,2,\dots,n\}$. When
we sample from $\mix$, we do not observe $p$ directly, but only indirectly and in
very noisy fashion, by sampling from $[n]$
repeatedly, independently $K$ times from the distribution $p$. The problem is to infer
$\mix$ to high accuracy in
transportation (earthmover) distance.

We give the first efficient algorithms for learning this mixture model
without making any restricting assumptions on the structure of the distribution
$\mix$. We bound the quality of the solution as a function
of the size of the samples $K$ and the number of samples used.
Our model and results have applications to a variety of unsupervised
learning scenarios, including learning topic models and
collaborative filtering.
\eat{
We study the problem of learning from unlabeled samples
a statistical mixture model that is
a combination of distributions over a common large discrete domain.
More precisely, we study learning a mixture model from a collection
of independent samples
from the following source. A sample point contains
$K$ items from the set $[n] = \{1,2,\dots,n\}$. It
is generated by picking $K$ items independently from some
probability distribution $p$ on $[n]$. The distribution on items
$p$ is not fixed, but is chosen at random from a probability
measure $\mix$ on the convex hull of $k$ ``pure''
distributions on items $p^1,p^2,\dots,p^k$.
We give the first efficient algorithms for learning this mixture model
without making any restricting assumptions on the structure of
the supporting distributions $p^1,\dots,p^k$ or the distribution
$\mix$. We bound the quality of the solution as a function
of the size of the samples $K$ and the number of samples used.
Our model and results have applications to a variety of unsupervised
learning scenarios, including learning topic models and
collaborative filtering.
}
\end{abstract}

\newpage
\pagenumbering{arabic}
\normalsize

\section{Introduction}

We study the problem of learning from unlabeled samples
a statistical mixture model that is
a combination of distributions over a common large discrete domain $[n]=\{1,2,\ldots,n\}$.
This is a model that has applications to a variety of unsupervised
learning scenarios, including learning {\em topic models}~\cite{Hof99,PRTV97}
and {\em collaborative filtering}~\cite{HP99}.
For instance, in the setting of topic models,
we are given a corpus of documents, where each document is a ``bag of words''
(that is, each document is an unordered multiset of words).
The words in a document reflect the topics that this document relates
to. The assumption is that there is a small number of ``pure'' topics, where each topic is
a distribution over the underlying vocabulary of $n$ words,
and that each document is some combination of topics. 
Specifically, a $K$-word document is generated by selecting a ``mixed'' topic from a probability
distribution over convex combinations of pure topics, and then sampling $K$ words from
this mixed topic.
A good example is the so-called latent
Dirichlet allocation model of~\cite{BNJ03}, where the distribution
over topic-combinations is the Dirichlet distribution.

\vspace{-1ex}
\paragraph{The mixture model.}
In this paper, we consider {\em arbitrary} such mixtures (of a more general form), and our
goal is to learn the mixture distribution, which could be discrete, i.e., have finite
support, or continuous.
More precisely, the mixture distribution, $\mix$,
is a probability distribution over
probability distributions on $[n]$.
(Equivalently, $\mix$ is a distribution over the
$(n-1)$-simplex $\simplex_n=\{x\in \R_+^n\mid \|x\|_1=1\}$.)
When we draw a sample from $\mix$,
we obtain a distribution $p\in \simplex_n$.
However, we do not observe $p$
directly, but only indirectly and in very noisy fashion,
by sampling $K$ times independently from $p$.
Thus, our sample is a string of length $K$ over the alphabet $[n]$ where each letter is an
iid sample from $p$. We call such a sample a {\em $K$-snapshot} of $p$. (A $k$-snapshot
corresponds to a document of length $K$ in the topic-model example.) 
The problem is to learn $\mix$ with high accuracy.

Our mixture model is more general than that in the topic-model learning example,
in that we do not assume that $\mix$ is supported on the convex hull of $k$ distributions.
It is an example of a {\em statistical mixture model},
where the probability distribution from which the learning algorithm
gets samples (the mixed topic generating a document, in our topic-model example) is a
mixture of 
other probability distributions (pure topics, in our example) that are called
the mixture {\em constituents}.

\vspace{-1ex}
\paragraph{Our results.}
We give the {\em first} efficient algorithms for learning a mixture model
{\em without placing any restrictions} on the mixture.
We bound the quality of the solution as a function
of the size of the samples; clearly, larger samples give
better results.
%
A natural way to measure the accuracy of an estimate $\tmix$ in our general mixture model
is to consider the {\em transportation distance} (aka {\em earthmover distance})
between $\tmix$ and $\mix$ (see Section~\ref{sec:prelim})
where the underlying metric on distributions over $[n]$ is the {\em $L_1$ 
(or total variation) distance}. 

\eat{
More precisely, we give efficient algorithms for
learning a mixture model from a collection of independent samples
from the following source. A sample point contains,
say, $K$ items from the set $[n] = \{1,2,\dots,n\}$. It
is generated by picking $K$ items independently from some
probability distribution $p$ on $[n]$. (We call such a sample
a $K$-snapshot of $p$.) The distribution on items $p$
is not fixed, but is chosen at random from a probability
measure $\mix$ on the convex hull of $k$ ``pure''
distributions on items
$p^1,p^2,\dots,p^k$ (the constituents of the mixture).
}

Given a mixture $\mix$ supported on a $k$-dimensional subspace,
our algorithms return an estimate $\tmix$ that is
$\eps$-close to $\mix$ in transportation distance,
for any $\eps>0$, using $K$-snapshot samples for $K=K(\eps,k)$ and sample size
that is $\poly(n)$ and a suitable function of $k$ and $\eps$.
(Note that the intersection of a $k$-dimensional subspace with $\simplex_n$
could have $\exp(k)$ extreme points; so 
saying that $\mix$ lies in a $k$-dimensional subspace is substantially weaker than
assuming that $\mix$ is supported on the convex hull of $k$ points.)
Our main result (Theorem~\ref{thm:kspace}) is an efficient learning algorithm
that uses 
$O(k^4n^3\log n/\eps^6)$ 1- and 2- snapshot samples, 
and $(k/\eps)^{O(k)}$ $K$-snapshot samples, 
where $K = \widetilde{\Omega}(k^{11}/\eps^{10})=\poly(k, 1/\epsilon)$.
We also devise algorithms with different tradeoffs between the sample size and the
{\em aperture}, which is the maximum number of snapshots used per sample point (i.e.,
document size), for some special cases of the problem.
This includes, most notably, the case where $\mix$ is a
{\em $k$-spike mixture}, i.e., is supported on $k$ points in $\simplex_n$
(Theorem~\ref{thm:kspike}).
This setting has been considered previously (see below), but our algorithm
is cleaner and fits into our more general method; and more importantly, our
bounds do not depend
on distribution-dependent parameters (see the discussion below).

To put our bounds in perspective, first notice importantly that we consider transportation 
distance with respect to the $L_1$-metric on distributions.
This 
yields quite strong guarantees on the quality of our reconstruction, however working with
the $L_1$-metric (instead of $L_2$) makes the reconstruction task much harder, both
in terms of technical difficulty (see ``Our techniques'' below) and the
sample-size required:   
the $L_1$ distance between two distributions can be much larger than their
$L_2$ distance, so it is much more demanding to bound the $L_1$-error. 
In particular, this implies that the sample size must depend on $n$: 
as noted in~\cite{rabani2014learning}, with aperture independent of $n$, a sample size
of $\Omega(n)$ is {\em necessary} to recover even the expectation of the mixture
distribution with constant $L_1$-error.
The sample size needs to depend exponentially on the dimension $k$ because one can
have an $\exp(k)$-spike mixture $\mix$ (on $\simplex_n$) lying in a $k$-dimensional
subspace whose
constituents are $\Omega(1)$ $L_1$-distance apart; recovering
an $\eps$-close estimate now entails that we isolate the locations of the spikes
reasonably accurately, which necessitates $\exp(k)$ sample size.
Finally, the aperture must depend on $k$ and $\eps$. The dependence on $k$ is simply
because our learning task is at least as hard as learning $k$-spike mixtures for which
aperture $2k-1$ is necessary~\cite{rabani2014learning}. The dependence on $\eps$ is
because the lower bounds in~\cite{rabani2014learning} show that there are two (even
single-dimensional) $\ell$-spike mixtures, where $\ell=\Theta(1/\eps)$, with
transportation distance $\Omega(\eps)$ that yield identical $K$-snapshot distributions for
all $K<2\ell-1$.

A noteworthy feature of {\em all} our results is that our bounds depend
{\em only} on $n$, $k$, and $\eps$.
In contrast, all previous results for learning topic models (including those that consider
only $k$-spike mixtures)
obtain bounds that depend on distribution-dependent parameters such as some measure of the
separation between mixture constituents~\cite{PRTV97,rabani2014learning},
the minimum weight placed on a mixture
constituent, and/or the eigenvalues (or singular values) of the covariance matrix
(e.g., bounds on $\sigma_k$, or $L_1$-condition numbers, or the robustly simplicial
condition)~\cite{kleinberg2008using,AGM12,AHK12,anandkumar2012spectral}.
The distribution-free nature of our bounds is clearly a desirable feature; if the
desired accuracy is cruder than the distribution-dependent parameters, then fewer samples
are needed.

\vspace{-1ex}
\paragraph{Our techniques.}
The main result (Theorem~\ref{thm:kspace}) is derived as follows. 
First, we 
use spectral methods to compute from $1$- and $2$-snapshot samples a basis $B$ for a 
subspace $\Span(B)$ of dimension at most $k$ that nearly contains
the support of $\mix$, and such that learning the projection $\mix_B$ of $\mix$ on 
$\Span(B)$ suffices to learn $\mix$ (Section~\ref{sec:reduction}).
We need to choose $B$ carefully so as to overcome various technical challenges
that arise because we work with transportation distance in the $L_1$-metric.       
Specifically, we need to move between the $L_1$ and $L_2$ metrics at various points (the  
rotational invariance of the $L_2$-metric makes it easier to work with), and 
to avoid a $\sqrt{n}$-factor distortion due to this movement, we need to establish that an 
$L_1$-ball in $\Span(B)$ is close to being an $L_2$-ball 
in $\Span(B)$ (see Lemma~\ref{lm:vectorinB}).   
This allows one to argue that: (a) $\mix_B$ is supported in an $L_2$-ball of radius
$O\bigl(\frac{1}{\sqrt{n}}\bigr)$, which makes it feasible to learn it within
$L_2$-error $\frac{\e}{\sqrt{n}}$ (and hence $L_1$-error $\e$); and (b) projecting this  
reconstructed mixture to $\Dt_n$ preserves the $L_1$-error (up to a 
$\poly\bigl(k,\frac{1}{\e}\bigr)$ factor).  
We remark that the standard SVD technique does not suffice for our purpose, since the
resulting subspace need not satisfy the above ``spherical'' property of $L_1$-balls (see 
also the discussion in Section~\ref{sec:reduction}).
%
Next, we define a projection of the $K$-snapshot samples using $B$.
We compute the estimate $\tmix_B$ of $\mix_B$ by
averaging the projections and transforming the result to $\Span(B)$ (see
Section~\ref{sec:directlearn}).
The proof relies on large deviation bounds. One can
show that $\mix$ is close to $\mix_B$. 
The output $\tmix_B$ converges to this projection as the number
of samples grows. The rate of convergence can be bounded using
tools from approximation theory.

\smallskip
The result for the special case of $k$-spike mixtures (i.e., $\mix$ is supported on $k$
distributions) 
uses a three-step approach analogous to the argument
in~\cite{rabani2014learning}, but the implementation of each
step is different). The first step finds $B$ as in the general case.
In the second step (Section~\ref{subsec:project1d}), the algorithm projects the sample data onto
the basis vectors in $B$. From this data, the algorithm computes
a good approximation to the projection of $\mix$ onto each
axis. The idea is to use linear programming to compute a piecewise
constant discretization of the projected measure such that the first
$K$ moments are close to the empirical
moments derived from the samples of $K$-snapshots.
The analysis uses a classical result in approximation theory due
to Jackson that estimates the error in approximating
a $1$-Lipschitz function on $[0,1]$ by the first $K$ Chebyshev
polynomials. (In fact, this step, too, does not use the special
structure of the mixture. It works in the case of an arbitrary
measure $\mix$, and our error estimates are asymptotically optimal
in general.) In the third step (Section~\ref{subsec:reconstruct}), we use the approximate
projected measures to compute a good approximation for the
projection of $\mix$ on $\Span(B)$, giving our algorithm's
output. The main idea here is similar to that of the second step.
We discretize the projection and use a linear program to compute
a discretized measure whose projections onto the axes used in
the second step give a good match to the computed approximations
on those axes. The analysis of this algorithm uses Yudin's
multidimensional generalization of
Jackson's theorem~\cite{yudin1976multidimensional}. Both the
second step and the third step use Kantorovich-Rubinstein duality
to relate the results from approximation theory to the approximation
guarantees in terms of the transportation distance.

\vspace{-1ex}
\paragraph{Related work.}
Generally speaking, our problem is an example of learning a
{\em mixture model}. Unlike our case, other mixture learning
problems, such as learning a mixture of
Gaussians (see~\cite{Das99,BS10,MV10}), assume
a special structure of the distributions that contribute to the
mixture. We discuss this related literature below.

A few previous papers consider the problem of learning a topic
model~\cite{AGM12,AFHKL12,AHK12,rabani2014learning}.
They all make
limiting assumptions on the structure of the mixture model. The
only paper that considers an arbitrary distribution $\mix$
over combinations
of topics is~\cite{AGM12}. However, this paper assumes that the
pure topics are $\rho$-separated, which means that each topic
has an anchor word that has probability at least $\rho$ in this
topic, and probability $0$ in any other topic. In the case of an
arbitrary $\mix$ (over such topics),
the paper~\cite{AGM12} learns the correlation matrix for pairs of
pure topics and not $\mix$.
In the special case of latent Dirichlet allocation, the paper also
reconstructs $\mix$. The latent Dirichlet allocation
setting is also considered in~\cite{AFHKL12}. For this special case,
they relax the condition in~\cite{AGM12} to the requirement that
the matrix whose columns are the word distributions of the $k$
pure topics has full rank $k$. The constraints on the model that
are imposed in~\cite{AGM12,AFHKL12} allow them to achieve
their learning goals using documents of constant size that is independent
of the number of pure topics $k$ and the desired accuracy $\eps$.
As we show in this paper, this is impossible in the general case.
The remaining two papers mentioned above~\cite{AHK12,rabani2014learning}
consider only the case where each document is generated from
a single pure topic, so $\mix$ is a discrete distribution with
support of size $k$. The first paper~\cite{AHK12} imposes on the
pure topics the same rank condition as in~\cite{AFHKL12}, and
thus is able to learn the model from constant size documents.
The second paper~\cite{rabani2014learning} studies the general pure topic
documents case and shows how to learn the model from documents
of size $2k-1$, which is a tight requirement. Notice that in this case,
the document size is independent of the desired accuracy. Our
results specialized to this case are motivated by the techniques
in~\cite{rabani2014learning}. They give a simpler and cleaner proof
that roughly matches the results there (in particular, the mixture
model is recovered using $K$-snapshots for $K=2k-1$, which is
optimal).

Learning statistical mixture models has been studied in the theory
community for about twenty years. The defining problem
of this area was the problem of learning a mixture of high-dimensional
Gaussians. Starting with the ground-breaking result of~\cite{Das99},
a sequence of improved
results~\cite{DS00,AK01,VW02,KSV05,AM05,FOS06,BV08,KMV10,BS10,MV10}
resolved the problem. Beyond Gaussians, various recent papers
analyze learning other highly structured mixture models (e.g., mixtures of
discrete product
distributions)~\cite{KMRRSS94,FreundM99,CryanGG02,BGK04,MR05,DHKS05,
FeldmanOS05,KSV05,CHRZ07,CR08a,CR08b,DaskalakisDS12}.
An important difference between this work and ours is that the structure of the
mixtures that they discuss enables learning using samples that consist of a
$1$-snapshot of a random mixture constituent (which is impossible in our setting).
Since Gaussians and other structured mixtures can be learned from
1-snapshot samples, the issue of the samples themselves being
generated from a combination of the mixture constituents does not arise
there. Our problem is unique to learning from multi-snapshot samples.

\section{Preliminaries and notation} \label{sec:prelim}
Let $T: X\rightarrow Y$
be a transformation from a normed space $X$ (with norm $\|\cdot\|_X$) to a normed space $Y$ (with norm $\|\cdot\|_Y$).
Let $\mu$ be a measure defined over $X$.
We use $\mu\circ T^{-1}$ to denote the image measure (or pushforward measure) defined over $Y$:
$\mu\circ T^{-1}(U)=\mu(T^{-1}(U))$ for all measurable $U\subset Y$.
It is a simple fact that (see e.g., \cite{dudley2002real}) that for any measurable function $f$,
\begin{align}
\label{eq:change}
\int_Y f \,\d(\mu \circ T^{-1})=\int_X f\circ T \,\d \mu.
\end{align}
For ease of notation, we sometimes write $T\mu$ to denote the image measure $\mu\circ T^{-1}$.
For a vector $v$, we use $\|v\|$ to denote its $L_2$ norm, and for an operator $T$,
we use $\|T\|_{X\rightarrow Y}$ to denote its
operator norm
(i.e., $\|T\|_{X\rightarrow Y}=\sup\{\|Tx\|_Y \mid x\in X, \|x\|_X=1\}$).
For ease of notation, we use $\|T\|$ to denote the $L_2\rightarrow L_2$ operator norm of $T$.

\topic{Transportation Distance}
Let $(X, d)$ be a separable metric space.
Recall that for any two distributions $P$ and $Q$ on $S$, the transportation distance $\tran(P,Q)$ (also called Rubinstein distance, Wasserstein distance
or earth mover distance in literature)
is defined as
\begin{align}
\label{eq:transprimal}
\tran(P,Q):= \inf \left\{\int d(x,y) \,\d\mu(x,y) : \mu\in M(P,Q) \right\},
\end{align}
where $M(P,Q)$ is the set of all joint distributions (also called coupling) on $X\times X$ with marginals $P$ and $Q$.
For the discrete case (say $X$ is a finite set of discrete points $v_1,\ldots, v_n$),
\eqref{eq:transprimal} is in fact the following familiar transportation LP:
\eat{
\begin{align*}
\text{ minimize } & \quad\sum_{i,j} d(v_i,v_j) x_{ij} \\
\text{ subject to } & \quad\sum_{j} x_{ij} = P(\{v_i\}),\,\,\, \forall i\in [n], \quad \sum_{i} x_{ij} = Q(\{v_j\}), \,\,\,\forall i\in[n], \quad x_{ij}\in [0,1] \,\,\forall i\in[n],j\in[n].
\end{align*}
}
$
\text{ minimize }  \sum_{i,j} d(v_i,v_j) x_{ij} \,\,
\text{ subject to }  \sum_{j} x_{ij} = P(\{v_i\}),\,\,\, \forall i\in [n], \sum_{i} x_{ij} = Q(\{v_j\}), \,\,\,\forall i\in[n],  x_{ij}\in [0,1] \,\,\forall i\in[n],j\in[n].
$
Any feasible solution $\{x_{ij}\}_{i,j}$ of the above LP is in fact a coupling of $P$ and $Q$, since
it can be interpreted as a joint distribution over $X\times X$, and the constraints of the LP dictate
the first marginal of $\{x_{ij}\}$ is $P$ and the second is $Q$.

Suppose $\mu$ is a measure on some metric space $(X, d)$.
Let $T: X\rightarrow X$ be an operator.
$T$ naturally defines a coupling $W$ between $\mu$ and the image measure $T\mu$:
for any $R\subseteq X\times X$, let $W(R)=\mu(\{x\mid (x, Tx)\in R\})$
(so for any measurable $S\subseteq X$, $W(S\times T(S))=\mu(S)$).
For ease of description, for such a coupling, we often say ``we couple
$x$ with $Tx$ together".

Let $\lip$ be the set of 1-Lipschitz functions on $X$, i.e.,
$\lip:=\{f: X\rightarrow \R \mid |f(x)-f(y)|\leq d(x,y) \text{ for any }x,y\in X\}$.
We need the following important theorem by Kantorovich and Rubinstein (see e.g., \cite{dudley2002real}):
\begin{align}
\label{eq:trans}
\tran(P,Q)=\sup\left\{\left|\int f \d(P-Q)\right |: f\in \lip\right\}.
\end{align}
In the discrete case, Kantorovich-Rubinstein theorem is exactly LP-duality
(the dual of the aforementioned LP is:
$
\text{ maximize }  \sum_{i} f_i (P(\{v_i\})-Q(\{v_i\})), \,
\text{ subject to }  f_i-f_j \leq d(v_i,v_j) \,\forall i\in [n], j\in[n].
$
).

It is important to notice the transportation distance and the Lipschitz condition are
associated with the same metric $d(x,y)$.
We use $\tran_1$ and $\tran_2$
to denote the transportation distance for $L_1$ and $L_2$ metrics respectively.
In 1-dimensional space, $L_1$ and $L_2$ are the same and we simply use $\tran$.
The following simple lemma will be useful in several places.
The proofs are standard; we include them in Appendix~\ref{app:prelim} for
completeness.

\begin{lemma}
\label{lm:proj3}
$(X, \|\cdot\|_X)$ and $(Y,\|\cdot\|_Y)$ are two normed spaces.
We are given two probability measures $\mu, \nu$ defined over $X$ such that
$\tran(\mu,\nu)\leq \epsilon$.
\begin{list}{{\upshape (\roman{enumi})}}{\usecounter{enumi} \topsep=0ex \itemsep=0ex
    \addtolength{\leftmargin}{-1ex} \addtolength{\labelwidth}{\widthof{(iii)}}}
\item
Suppose $T: X\rightarrow Y$ is a transformation from $X$ to $Y$.
$
\tran(T\mu,T\nu) \leq \epsilon\cdot \|T \|_{X\rightarrow Y}.
$
\item Furthermore, if both $\mu$ and $\nu$ are supported on a subspace $V\subset X$, then
$
\tran(T\mu,T\nu) \leq  \epsilon\cdot \|T \|_V,
$
where $\|T\|_V = \sup_{x\in V}\|T x \|_Y/ \|x\|_X$.
\item We are given two operators $T$ and $T'$ such that $\|T-T'\|_{X\rightarrow Y}\leq \epsilon$.
Suppose $\|T\|_{X\rightarrow Y}=O(1)$ and $\|x'\|_X=O(1)$ for all $x'\in \support(\nu)$.
Then, we have that
$\tran(T\mu, T'\nu)\leq O(\epsilon)$.
\end{list}
\end{lemma}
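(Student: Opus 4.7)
The plan is to prove all three parts via the primal (coupling) definition \eqref{eq:transprimal} of the transportation distance rather than the Kantorovich-Rubinstein dual \eqref{eq:trans}, by constructing explicit couplings whose costs can be bounded directly using operator norms.

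For part (i), I would take a coupling $\gamma \in M(\mu,\nu)$ that (up to arbitrarily small slack) witnesses $\tran(\mu,\nu) \le \epsilon$, i.e.\ satisfies $\int \|x-y\|_X \, \d\gamma(x,y) \le \epsilon$. The natural candidate coupling of $T\mu$ and $T\nu$ is the pushforward $(T\times T)\gamma$, namely ``couple $Tx$ with $Ty$ whenever $\gamma$ couples $(x,y)$.'' A short check using the image-measure identity \eqref{eq:change} confirms that its marginals are indeed $T\mu$ and $T\nu$. Its cost is then
\[
\int \|Tx-Ty\|_Y \, \d\gamma(x,y) \;\le\; \|T\|_{X\to Y}\int \|x-y\|_X \, \d\gamma(x,y) \;\le\; \epsilon\cdot \|T\|_{X\to Y},
\]
which yields the bound.

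For part (ii), the same pushforward coupling works; the only additional observation is that since $\mu$ and $\nu$ are supported on the subspace $V$, every $(x,y)$ in the support of $\gamma$ lies in $V\times V$, so $x-y\in V$ (using that $V$ is closed under differences). Hence in the chain of inequalities above, $\|T(x-y)\|_Y \le \|T\|_V\cdot \|x-y\|_X$, and $\|T\|_{X\to Y}$ may be replaced by $\|T\|_V$.

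For part (iii), I would apply the triangle inequality for $\tran$, namely $\tran(T\mu, T'\nu) \le \tran(T\mu, T\nu) + \tran(T\nu, T'\nu)$. The first term is $O(\epsilon)$ by part~(i) together with $\|T\|_{X\to Y}=O(1)$. For the second term, use the ``diagonal'' coupling that pairs each $x'\in \support(\nu)$ under $T$ with the same $x'$ under $T'$; this is a valid coupling of $T\nu$ and $T'\nu$, with cost
\[
\int \|Tx'-T'x'\|_Y\, \d\nu(x') \;\le\; \|T-T'\|_{X\to Y}\int \|x'\|_X\, \d\nu(x') \;\le\; \epsilon\cdot O(1) \;=\; O(\epsilon).
\]
None of the steps is substantive; the main conceptual point is that working in the primal (rather than invoking Lipschitz test functions via the dual) makes both the operator-norm factor in (i)--(ii) and the perturbation bound in (iii) immediately visible, with the subspace refinement of (ii) coming essentially for free from the same construction used in (i).
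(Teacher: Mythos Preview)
Your proposal is correct and follows essentially the same approach as the paper's own proof: construct an (almost) optimal coupling $W$ of $\mu$ and $\nu$, push it forward by $T\times T$, and bound the transported cost via the operator norm. The only cosmetic difference is in part~(iii): the paper keeps a single coupling $W$ and applies the pointwise triangle inequality $\|Tx-T'x'\|_Y\le \|Tx-Tx'\|_Y+\|Tx'-T'x'\|_Y$ inside one integral, whereas you first invoke the triangle inequality for $\tran$ and then bound $\tran(T\mu,T\nu)$ and $\tran(T\nu,T'\nu)$ separately; these amount to the same computation.
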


We state the following
standard Chernoff-Hoeffding bound and Bernstein inequality.

\begin{proposition}\label{prop:chernoff}
Let $X_i (1 \leq i \leq n)$ be independent random variables with values in $[0,1]$.
Let $X =  \sum_{i=1}^n X_i$. 
For every $t > 0$, we have that
$\Pr\Big[\,|X-\Exp[X]| >  t\Big] < 2\exp(-2t^2/ n)$.
\end{proposition}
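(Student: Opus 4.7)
The plan is to prove this via the standard Chernoff--Cram\'er method: bound the moment generating function and then optimize an exponential Markov inequality. First, I would apply Markov's inequality to the random variable $e^{\lambda(X - \Exp[X])}$ for a parameter $\lambda > 0$ to be chosen later, yielding
\[
\Pr[X - \Exp[X] > t] \leq e^{-\lambda t}\,\Exp\!\left[e^{\lambda(X - \Exp[X])}\right].
\]
By the independence of the $X_i$, the MGF factorizes:
\[
\Exp\!\left[e^{\lambda(X - \Exp[X])}\right] = \prod_{i=1}^n \Exp\!\left[e^{\lambda(X_i - \Exp[X_i])}\right].
\]

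Next, I would invoke Hoeffding's lemma to control each factor: for any random variable $Y \in [a,b]$ with $\Exp[Y] = 0$, one has $\Exp[e^{\lambda Y}] \leq e^{\lambda^2 (b-a)^2/8}$. Applied to $Y_i = X_i - \Exp[X_i] \in [-\Exp[X_i],\, 1 - \Exp[X_i]]$, which lies in an interval of length $1$, this gives $\Exp[e^{\lambda Y_i}] \leq e^{\lambda^2/8}$. Substituting into the product bound yields
\[
\Pr[X - \Exp[X] > t] \leq e^{-\lambda t + n\lambda^2/8}.
\]

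I would then optimize over $\lambda$ by minimizing the exponent, which gives $\lambda^{\ast} = 4t/n$ and thus
\[
\Pr[X - \Exp[X] > t] \leq \exp(-2t^2/n).
\]
The symmetric lower-tail bound follows by applying the same argument to $-X_i \in [-1,0]$ (which still lies in an interval of length $1$, so Hoeffding's lemma applies identically), and combining the two tails by a union bound produces the stated factor of $2$.

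The main obstacle is Hoeffding's lemma itself, whose proof I would only sketch: by convexity of $y \mapsto e^{\lambda y}$, for $Y \in [a,b]$ with mean $0$ one writes $e^{\lambda Y} \leq \tfrac{b-Y}{b-a}e^{\lambda a} + \tfrac{Y-a}{b-a}e^{\lambda b}$, takes expectations, and then shows via elementary calculus (by studying the log of the resulting function of $\lambda$ and bounding its second derivative by $(b-a)^2/4$) that the upper bound is at most $e^{\lambda^2(b-a)^2/8}$. Everything else in the argument is routine algebra, so this convexity/calculus step is the only nontrivial ingredient.
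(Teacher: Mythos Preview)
Your argument is correct and is exactly the standard textbook derivation of the Hoeffding bound via the Chernoff--Cram\'er method and Hoeffding's lemma. The paper itself does not supply a proof of this proposition; it simply states it as the ``standard Chernoff--Hoeffding bound,'' so there is nothing to compare against beyond noting that your write-up is the canonical proof one would find in any reference.
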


\begin{proposition}
\label{prop:bernstein}
Let $X_i (1\leq i\leq n)$ be independent random variables with $\|X_i\|\leq 1$, $\Exp[X_i]=0$
for all $i$.
Let $X =  \sum_{i=1}^n X_i$.
Let $\sigma^2=\Var[X]=\sum_{i=1}^n \Var[X_i]$.
Then,
$\Pr\big[\,|X|>t\,\big] \leq 2\exp\left(-\frac{t^2}{2(\sigma^2+t/3)}\right)$.
\end{proposition}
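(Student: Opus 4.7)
The plan is to apply the Cram\'er--Chernoff method: estimate the moment generating function (MGF) of each centered bounded $X_i$, multiply the MGFs using independence, and then optimize the exponential tilt to obtain the quadratic-in-$t$ behavior for small $t$ and linear-in-$t$ behavior for large $t$ that is characteristic of Bernstein's inequality.

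First I would invoke the exponential Markov inequality: for any $\lambda>0$,
$\Pr[X>t]\le e^{-\lambda t}\,\Exp[e^{\lambda X}]=e^{-\lambda t}\prod_{i=1}^{n}\Exp[e^{\lambda X_i}]$. Next I would bound each factor $\Exp[e^{\lambda X_i}]$. Expanding the exponential as a Taylor series and using $\Exp[X_i]=0$ kills the linear term, leaving $\Exp[e^{\lambda X_i}]=1+\sum_{k\ge 2}\lambda^k\Exp[X_i^k]/k!$. Since $|X_i|\le 1$, for every $k\ge 2$ we have $|X_i|^k\le X_i^2$, hence $\Exp[X_i^k]\le \Var[X_i]$. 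Therefore $\Exp[e^{\lambda X_i}]\le 1+\Var[X_i](e^\lambda-1-\lambda)\le \exp\!\bigl(\Var[X_i](e^\lambda-1-\lambda)\bigr)$.

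Multiplying these bounds over $i$ by independence gives $\Exp[e^{\lambda X}]\le\exp\!\bigl(\sigma^2(e^\lambda-1-\lambda)\bigr)$, so $\Pr[X>t]\le\exp\!\bigl(\sigma^2(e^\lambda-1-\lambda)-\lambda t\bigr)$. To tame this exponent I would use the standard calculus inequality $e^\lambda-1-\lambda\le\frac{\lambda^2/2}{1-\lambda/3}$, valid for $0\le\lambda<3$, which follows from a term-by-term comparison of Taylor coefficients since $\lambda^k/k!\le(\lambda^2/2)(\lambda/3)^{k-2}$ for $k\ge 2$ (equivalently, $2\cdot 3^{k-2}\le k!$). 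Substituting and then choosing $\lambda=t/(\sigma^2+t/3)\in[0,3)$ makes the exponent collapse exactly to $-\frac{t^2}{2(\sigma^2+t/3)}$, as a short calculation shows. Finally, repeating the whole argument with $-X$ in place of $X$ (the hypotheses are symmetric under sign flip) and taking a union bound over $\{X>t\}$ and $\{-X>t\}$ yields the stated two-sided bound.

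The main technical point---and the only real ``obstacle''---is obtaining the crucial $t/3$ term in the denominator, which is what separates this estimate from the weaker Hoeffding-type bound $\exp(-2t^2/n)$ of Proposition~\ref{prop:chernoff}. This term must come from an MGF bound tighter than the variance-oblivious $\exp(\lambda^2/8)$ one gets from Hoeffding's lemma, and it crucially exploits both $\Exp[X_i]=0$ and $|X_i|\le 1$ through the bound $\Exp[X_i^k]\le\Var[X_i]$ for $k\ge 2$. Beyond this, the argument is a routine optimization, and no additional structure of the $X_i$ beyond boundedness, centeredness, and independence is needed.
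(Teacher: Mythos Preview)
Your proof is correct and is the standard Cram\'er--Chernoff derivation of Bernstein's inequality. The paper itself does not prove this proposition: it is stated there as a standard concentration inequality (alongside the Chernoff--Hoeffding bound of Proposition~\ref{prop:chernoff}) and invoked as a black box, so there is no ``paper's own proof'' to compare against. Your write-up would serve as a perfectly adequate proof if one were desired.
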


We will use the following results from the matrix perturbation and random matrix theory.
\begin{theorem}
\label{thm:wedin}
(Wedin's theorem, see e.g., \cite[pp.261]{stewartmatrix})
Let $A, \tA \in \R^{m\times n}$ with $m\geq n$ be given.
Let the singular value decompositions of $A$ and $\tA$ be
$$
(U_1,U_2,U_3)^T A (V_1,V_2) =
\left(
\begin{array}{cc}
\Sigma_1  & 0  \\
0  &   \Sigma_2 \\
 0 &   0
\end{array}
\right),
\quad
(\tU_1,\tU_2,\tU_3)^T \tA (\tV_1,\tV_2) =
\left(
\begin{array}{cc}
\tSigma_1  & 0  \\
0  &   \tSigma_2 \\
 0 &   0
\end{array}
\right)
$$
Let $\Phi$ be the matrix of canonical angles between $\sspan(U_1)$ and $\sspan(\tU_1)$
and $\Theta$ be that between $\sspan(V_1)$ and $\sspan(\tV_1)$.
If there exists $\delta, \alpha>0$ such that $\min_i \sigma_i(\tSigma_1)\geq \alpha +\delta$
and $\max_i \sigma_i(\Sigma_2)\leq \alpha$, then
$\max\{\| \sin \Phi\|, \| \sin \Theta\|\} \leq \frac{\|A-\tA \|}{\delta}$.
Moreover, 
$\| \proj{A} - \proj{\tA} \| = \|\sin \Phi \|$
(see e.g., \cite[pp.43]{stewartmatrix}).
\end{theorem}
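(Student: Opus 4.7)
The plan is to derive both sine bounds from residual identities that couple $A$ and $\tA$ through the perturbation $E := \tA - A$, and then extract the quantitative estimate by solving Sylvester equations whose spectral separation is controlled by the gap $\delta$. Since $\tA = \tU_1\tSigma_1\tV_1^T + \tU_2\tSigma_2\tV_2^T$, the defining relation $\tA\tV_1 = \tU_1\tSigma_1$ gives the identity
$$A\tV_1 - \tU_1\tSigma_1 = -E\tV_1,$$
whose spectral norm is at most $\|E\|$, and by an analogous computation $A^T\tU_1 - \tV_1\tSigma_1 = -E^T\tU_1$. These are the only places where the perturbation enters the argument.

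Next I would substitute $A = U_1\Sigma_1V_1^T + U_2\Sigma_2V_2^T$ into the first residual and multiply on the left by $U_2^T$; orthogonality of the SVD factors collapses most terms and yields
$$\Sigma_2(V_2^T\tV_1) - (U_2^T\tU_1)\,\tSigma_1 = -U_2^T E \tV_1.$$
Multiplying the second residual on the left by $V_2^T$ produces the companion equation with the two ``cross'' blocks interchanged. Setting $X := V_2^T\tV_1$ and $Y := U_2^T\tU_1$, adding and subtracting gives a pair of Sylvester-type equations for $X\pm Y$, each with right-hand side of norm at most $2\|E\|$. The gap hypothesis $\sigma_{\min}(\tSigma_1)\ge \alpha+\delta$ and $\sigma_{\max}(\Sigma_2)\le\alpha$ guarantees that both Sylvester operators $M\mapsto \Sigma_2 M \pm M\tSigma_1$ are invertible with spectral-norm inverse at most $1/\delta$: since $\Sigma_2$ and $\tSigma_1$ are diagonal, in the ``$-$'' case their spectra are $\delta$-separated, while in the ``$+$'' case their sum stays above $\alpha+\delta$.

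It remains to control the ``tall'' block $U_3^T\tU_1$, which has no counterpart on the $V$-side. For this I would use that $U_3^T A = 0$ (because $U_3$ is orthogonal to $\sspan(U_1)\cup \sspan(U_2)$), hence $U_3^T\tA = U_3^T E$; right-multiplying by $\tV_1$ and inverting $\tSigma_1$ gives $\|U_3^T\tU_1\|\le \|E\|/(\alpha+\delta)$. Assembling the pieces via the block identity $(I - U_1U_1^T)\tU_1 = (U_2,U_3)\bigl(\!\begin{smallmatrix}U_2^T\tU_1\\ U_3^T\tU_1\end{smallmatrix}\!\bigr)$, the norm of the stacked block equals $\|\sin\Phi\|$, and an identical argument with the roles of $U$ and $V$ swapped handles $\|\sin\Theta\|$.

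The main obstacle will be obtaining the tight constant $1$ in the final bound, rather than a $\sqrt{2}$ (or $2$) lost when estimating $X$ and $Y$ individually from bounds on $X\pm Y$. The cleanest remedy is to reduce the entire problem to the Hermitian setting via the Jordan--Wielandt dilation $\bigl(\!\begin{smallmatrix}0 & A\\ A^T & 0\end{smallmatrix}\!\bigr)$, whose eigenpairs are the signed singular pairs of $A$, and then to invoke the classical Davis--Kahan $\sin\theta$ theorem on the perturbation to this Hermitian matrix; this delivers the sharp bound $\|E\|/\delta$ in a single step. Finally, the identity $\|\proj{A}-\proj{\tA}\| = \|\sin\Phi\|$ is the standard spectral-norm formula for the difference of two orthogonal projectors onto equi-dimensional subspaces, provable in two lines from the CS decomposition of the pair $(U_1,\tU_1)$.
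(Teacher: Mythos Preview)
The paper does not prove this statement at all: Theorem~\ref{thm:wedin} is quoted from the literature with explicit citations to \cite{stewartmatrix} and is used as a black box (in the proof of Lemma~\ref{lm:mixandmixQ2}). There is therefore no ``paper's own proof'' against which to compare your proposal.

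That said, your sketch is a reasonable outline of the standard argument. The residual identities and the Sylvester-equation reduction are exactly how Wedin's proof proceeds, and you correctly identify the constant-tracking issue and the Jordan--Wielandt dilation as the clean fix. One point to watch if you ever want to flesh this out: after passing to the Hermitian dilation, you need to verify that the eigenvalue-gap hypothesis of Davis--Kahan is implied by the singular-value gap $\sigma_{\min}(\tSigma_1)\ge\alpha+\delta$, $\sigma_{\max}(\Sigma_2)\le\alpha$ for \emph{both} signs of the eigenvalues of the dilation (the spectrum is $\{\pm\sigma_i\}$), and that the resulting eigenspace angles translate back to the singular subspace angles $\Phi,\Theta$ without loss. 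This is routine but not automatic, and it is where a careless argument can reintroduce a factor of $2$.
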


\begin{theorem}[\cite{vu2005spectral}]
\label{thm:vu2005spectral}
For every constant $c>0$, there is a constant $C>0$ such that
the following holds.
Let $A$ be a symmetric with entries $a_{ij}=a_{ji}=X_{ij}$, where
$X_{ij}$, $1\leq i\leq j\leq n$ are independent random variables.
Suppose each $X_{ij}$ is such that $|X_{ij}|<K$, $\Exp[X_{ij}]=0$ and
$\Var[X_{ij}]\leq \sigma^2$ where $\sigma\geq C^2K\ln^2 n/\sqrt{n}$.
Then, it holds that
$$
\Pr[\|A\|\leq 2\sigma \sqrt{n} +C(K\sigma)^{1/2} n^{1/4}\ln n ] \geq 1- 1/n^c.
$$
\end{theorem}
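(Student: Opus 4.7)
The plan is to prove this via the moment (trace) method, which is the standard route for Wigner-type spectral norm bounds. Since $A$ is symmetric, $\|A\|^{2k} \le \trace(A^{2k}) = \sum_{i_0,i_1,\dots,i_{2k-1}} A_{i_0 i_1} A_{i_1 i_2} \cdots A_{i_{2k-1} i_0}$, where the sum runs over closed walks of length $2k$ on the complete graph with vertices $[n]$. Taking expectations and using independence together with $\Exp[X_{ij}]=0$, only those walks in which every edge is traversed at least twice contribute. For any such contributing walk, the number of distinct edges is at most $k$, so the walk's underlying graph has at most $k+1$ distinct vertices (as connected graphs on $\ell$ vertices have $\ge \ell-1$ edges). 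This is the starting point: classify contributing walks by (a) the number of distinct edges and (b) the multiset of edge-multiplicities, bound the moment contribution from each class via $|X_{ij}|\le K$ and $\Var[X_{ij}]\le \sigma^2$, and then count walks combinatorially.

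First I would handle the leading term. For walks in which each edge is used exactly twice (so the walk is a "double tree" closed walk on $k+1$ vertices), the expectation of each summand is at most $\sigma^{2k}$, the number of vertex-choices is $\le n^{k+1}$, and the number of combinatorial walk shapes is the Catalan number $C_k \le 4^k$. This yields a contribution of at most $n \cdot (2\sigma\sqrt{n})^{2k}$, giving the sharp Wigner constant $2\sigma\sqrt{n}$. Next I would bound the remaining "excess" walks -- those in which some edge is used $\ge 3$ times, or the walk has fewer distinct vertices than $k+1$. Each such excess either loses a factor of $\sqrt{n}$ in the vertex count or replaces a pair of $\sigma^2$ factors by a single $\sigma K$ factor (using $|X_{ij}|\le K$ to control higher moments), which is the source of the $(K\sigma)^{1/2} n^{1/4}$ correction when one optimizes over $k$.

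With $k \approx \log n$ (the precise choice determined by the hypothesis $\sigma \ge C^2 K \ln^2 n/\sqrt{n}$), one shows
\[
\Exp[\trace(A^{2k})] \le n \bigl(2\sigma\sqrt{n} + C(K\sigma)^{1/2} n^{1/4}\ln n\bigr)^{2k}.
\]
Markov's inequality applied to $\|A\|^{2k}$ then gives a probability of exceeding the stated threshold of at most $n \cdot (1 + \delta)^{-2k}$ for a small $\delta>0$; choosing the constant $C$ in the theorem large enough compared to the desired constant $c$ makes this at most $1/n^c$.

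The main obstacle, as in Vu's original argument, is the careful combinatorial accounting for the excess walks: one must track simultaneously the number of distinct vertices, the edge multiplicities, and the number of distinct walk shapes in a way that aggregates to a clean geometric correction of order $(K\sigma)^{1/2} n^{1/4}\ln n$ rather than something polynomially larger. The condition $\sigma \ge C^2 K \ln^2 n/\sqrt{n}$ is precisely what allows this second-order term to be dominated by the first and the failure probability to be driven below $n^{-c}$; verifying that the moment computations are tight enough to exploit this is where the proof's technical weight lies. The looser bound $\|A\| \le (1+o(1)) \cdot 2\sigma\sqrt{n}$ under similar hypotheses can be obtained with a much coarser counting, so the work is all in matching the sharp leading constant together with the sharp correction.
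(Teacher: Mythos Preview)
This theorem is quoted from~\cite{vu2005spectral} and the paper does not supply its own proof; it is stated in the preliminaries and used as a black box (in the proof of Lemma~\ref{lm:tA}). So there is no in-paper argument to compare your proposal against.

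That said, your outline is a faithful sketch of Vu's trace-method proof: the $\trace(A^{2k})$ expansion, the reduction to closed walks with every edge repeated, the Catalan-count leading term giving $2\sigma\sqrt{n}$, the excess-walk analysis producing the $(K\sigma)^{1/2}n^{1/4}\ln n$ correction, and the choice $k\asymp \ln n$ followed by Markov are exactly the ingredients in the cited reference. The one point to be careful about is the bookkeeping for the excess walks, which you correctly flag as the technical core; Vu's paper handles this via a fairly delicate classification of walk types and an inductive vertex-counting argument, and getting the exponents right (so that the hypothesis $\sigma \ge C^2 K \ln^2 n/\sqrt{n}$ is sufficient) is not something that can be waved through. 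Your proposal acknowledges this rather than resolving it, so as a self-contained proof it is incomplete, but as a plan it matches the source.
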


The Chebyshev polynomial (of the first kind) is defined as the polynomial satisfying
$T_n(\cos(x)) = \cos(nx)$. 
An equivalent recursive definition is:
$T_0(x)=1, T_1(x)=x$ and $T_{n+1}(x)=2xT_n(x)-T_{n-1}(x)$.
We need the classical Jackson's theorem (see e.g., \cite{rivlin2003introduction}) in approximation theory
(specialized to our setting) and a 
multidimensional generalization of Jackson's theorem established by Yudin
\cite{yudin1976multidimensional} (Theorem~\ref{thm:yudin}).

\begin{theorem}[Jackson's Theorem]
\label{thm:jackson}
It is possible to approximate any function on $[0,1]$ in $\lip$
within $L_\infty$ error $O(1/K)$ using Chebyshev polynomials (or equivalently
trigonometric polynomials) of degree at most $K$, i.e.,
there exist $\{t_i\}_{i\in [K]}$ such that
$f(x)=\sum_{i=0}^K t_i T_i(x) \pm O(1/K)  \quad \forall x\in [0,1]$.
Moreover, $|t_i|\leq\poly(K)$ for all $i=0,\ldots,K$.
\end{theorem}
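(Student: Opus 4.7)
The plan is to reduce the claim to the classical Jackson theorem on trigonometric approximation and then translate back via the defining identity $T_i(\cos\theta)=\cos(i\theta)$. First, I would extend $f$ from $[0,1]$ to $[-1,1]$ while preserving the $1$-Lipschitz property (for instance, $\tilde{f}(x):=f(\max(x,0))$), and then perform the change of variables $x=\cos\theta$ to obtain $g(\theta):=\tilde{f}(\cos\theta)$. Since $|\cos\theta_1-\cos\theta_2|\le|\theta_1-\theta_2|$, the even $2\pi$-periodic extension of $g$ is $1$-Lipschitz on $\R$, and approximating $\tilde{f}$ by a Chebyshev polynomial $\sum t_iT_i(x)$ on $[-1,1]$ is equivalent to approximating $g$ by an even trigonometric polynomial $\sum t_i\cos(i\theta)$ on $[-\pi,\pi]$.

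Next, I would invoke the standard Jackson-kernel construction. Let $J_K(\theta):=c_K\bigl(\frac{\sin(K\theta/2)}{\sin(\theta/2)}\bigr)^{4}$, normalized so that $\int_{-\pi}^{\pi}J_K=1$. Then $J_K$ is a non-negative even trigonometric polynomial of degree at most $2K$ satisfying $\int|\theta|J_K(\theta)\,\d\theta=O(1/K)$; this kernel estimate follows from the lower bound $|\sin(\theta/2)|\ge|\theta|/\pi$ on $[-\pi,\pi]$ together with the standard computation $c_K\sim 1/K^{3}$. Define the convolution $\tau_K(\theta):=\int_{-\pi}^{\pi}g(\theta-\phi)J_K(\phi)\,\d\phi$. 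Using non-negativity of $J_K$ together with the $1$-Lipschitz property of $g$,
\begin{align*}
|g(\theta)-\tau_K(\theta)| \le \int_{-\pi}^{\pi} |g(\theta)-g(\theta-\phi)|\,J_K(\phi)\,\d\phi \le \int_{-\pi}^{\pi}|\phi|\,J_K(\phi)\,\d\phi = O(1/K).
\end{align*}
Since $g$ is even, so is $\tau_K$, hence it is of the form $\sum_{i=0}^{2K}a_i\cos(i\theta)$. Substituting back $x=\cos\theta$ yields the Chebyshev polynomial $p(x):=\sum_{i=0}^{2K}a_iT_i(x)$ with $\|p-\tilde{f}\|_{\infty,[-1,1]}=O(1/K)$; in particular the bound holds on $[0,1]$, and replacing $K$ by $K/2$ throughout gives the claimed degree bound.

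Finally, I would bound the coefficients $t_i=a_i$. Without loss of generality $f(0)=0$ (subtracting a constant only shifts $t_0$), so $\|\tilde{f}\|_\infty\le 1$ and consequently $\|\tau_K\|_\infty\le 1+O(1/K)$. By orthogonality of $\{\cos(i\theta)\}$ on $[-\pi,\pi]$, the trigonometric coefficients are uniquely determined by $a_i=\tfrac{1}{\pi}\int_{-\pi}^{\pi}\tau_K(\theta)\cos(i\theta)\,\d\theta$ for $i\ge 1$ (with $a_0=\tfrac{1}{2\pi}\int\tau_K$), so $|a_i|=O(1)\le\poly(K)$. The only nontrivial ingredient in this plan is the kernel estimate $\int|\phi|J_K(\phi)\,\d\phi=O(1/K)$, which is the heart of Jackson's theorem and is a standard computation (see e.g.~\cite{rivlin2003introduction}); the rest is routine bookkeeping around the change of variables and the coefficient extraction.
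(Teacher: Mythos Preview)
The paper does not actually prove this statement: Theorem~\ref{thm:jackson} is quoted as a classical result with a reference to Rivlin's textbook, and no argument is supplied. So there is nothing in the paper to compare your proposal against.

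That said, your proposal is a correct and essentially textbook proof of Jackson's theorem. The reduction via $x=\cos\theta$ to even trigonometric approximation, the use of the fourth-power Jackson kernel $J_K$, the first-moment estimate $\int|\phi|J_K(\phi)\,\d\phi=O(1/K)$, and the extraction of Chebyshev coefficients via orthogonality of $\{\cos(i\theta)\}$ are all standard and sound. Your coefficient bound $|a_i|=O(1)$ is in fact stronger than the $\poly(K)$ bound the paper states (and the paper only ever uses the $\poly(K)$ bound downstream, in Lemma~\ref{lm:approxpoly}). The degree-halving remark at the end is the right way to match the stated degree $K$.
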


\begin{theorem}
\label{thm:yudin}
We use $\B_2^h(R)$ to denote the sphere $\{x\in \R^h\mid \|x\|_2\leq R\}$.
For any function
$f: \B_2^h(1)\rightarrow \mathbb{C}$ which is $\lip$ (in $L_2$ distance),
there exists complex numbers $c(t')$ for $t'\in \mathbb{Z}^{h} \cap \B_2^h(R)$,
such that $|c(t')|\leq \exp(O(h))$%
\footnote{
In Yudin's theorem,
$c(t')$ is in fact $\hat{f}(t') \lambda(t'/R)$, where
$
\hat{f}(t') = {1 \over (2 \pi)^h} \int_{x\in [-\pi,\pi]^{d}} f(x) e^{-\i \innerprod{t'}{x}}\, \d x
$
is the Fourier coefficient,
$\lambda(x)=(\phi \ast \phi)(x)$, 
$\phi(x)$ is
the first normalized eigenfunction of a PDE known as Helmholtz equation, and
$\ast$ is the convolution.
}
and for all $x\in \B_2^h(1)$,
$$
\Bigl| f(x)-\sum_{t'\in \mathbb{Z}^{h} \cap \B_2^h(R)} c(t') e^{\i\innerprod{t'}{x}}\Bigr|\le O\left( {h \over R}\right).
$$
\end{theorem}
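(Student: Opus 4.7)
The plan is to approximate $f$ by convolution with a nonnegative, concentrated kernel whose only nonzero Fourier coefficients live on $\mathbb{Z}^h \cap \B_2^h(R)$. First, extend $f$ via McShane's formula to a $1$-Lipschitz function on all of $\R^h$, then periodize so that we may work with Fourier series on the torus $[-\pi,\pi]^h$. Choose $\phi$ to be the first (normalized) Dirichlet eigenfunction of the Laplacian on $\B_2^h(1)$, so that $\Delta\phi + \mu_1^2\phi = 0$ with $\mu_1$ the principal Helmholtz eigenvalue. Because $\phi$ is real and radial, $\hat\phi$ is real, hence $\hat\lambda = (\hat\phi)^2 \geq 0$ where $\lambda := \phi * \phi$. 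Define
\[
g(x) \;=\; \sum_{t' \in \mathbb{Z}^h \cap \B_2^h(R)} \hat f(t')\,\lambda(t'/R)\, e^{\i\innerprod{t'}{x}};
\]
since $\phi$ is supported in $\B_2^h(1)$, after harmlessly rescaling $\phi$ to a ball of radius $1/2$ the function $\lambda$ vanishes outside $\B_2^h(1)$, so the sum is effectively finite. This $g$ equals the torus convolution $f * K_R$ with $K_R(y) = \sum_{t'} \lambda(t'/R)\, e^{\i\innerprod{t'}{y}}$; by Poisson summation $K_R$ is a sum of rescaled copies of $\hat\lambda \geq 0$, so $K_R \geq 0$, and after normalization to unit integral it concentrates in a ball of radius $O(1/R)$ about the origin.

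The error bound then reduces to the Lipschitz estimate
\[
|f(x) - g(x)| \;\leq\; \int \|y\|_2\, K_R(y)\, \d y \;=\; \frac{1}{R}\int \|z\|_2\, \hat\lambda(z)\, \d z,
\]
after the change of variable $y = z/R$. Applying Cauchy--Schwarz and Plancherel,
\[
\int \|z\|_2\, |\hat\phi(z)|^2\, \d z \;\leq\; \left(\int \|z\|_2^2\,|\hat\phi|^2\right)^{\!1/2}\!\left(\int |\hat\phi|^2\right)^{\!1/2} = \|\nabla\phi\|_2\cdot\|\phi\|_2 \;=\; \mu_1.
\]
For the unit ball in $\R^h$, $\mu_1 = j_{h/2-1,1}$, the first positive zero of the Bessel function $J_{h/2-1}$, and standard asymptotics give $\mu_1 = O(h)$, so $|f(x)-g(x)| = O(h/R)$ uniformly on $\B_2^h(1)$.

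For the coefficient bound, Cauchy--Schwarz applied to $\phi * \phi$ yields $|\lambda(t'/R)| \leq \lambda(0) = \|\phi\|_2^2 = 1$, and the McShane-extended $f$ satisfies $\|f\|_\infty = O(\sqrt{h})$ on $[-\pi,\pi]^h$, which gives $|\hat f(t')| \leq \exp(O(h))$ in the unnormalized Fourier convention; hence $|c(t')| \leq \exp(O(h))$. The main technical obstacles will be: (i) extending and periodizing $f$ without damaging the $1$-Lipschitz property, handled by a slight domain rescaling combined with McShane; (ii) verifying positivity and normalization of $K_R$ through Poisson summation, which rests crucially on the identity $\hat\lambda = |\hat\phi|^2$; and (iii) the sharp $O(h)$ bound on $\mu_1$ for the unit Euclidean ball, the quantitative heart of the theorem, which requires Bessel-function asymptotics (or an equivalent Faber--Krahn / variational comparison argument).
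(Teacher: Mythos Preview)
The paper does not prove this theorem; it is stated in the preliminaries, attributed to Yudin~\cite{yudin1976multidimensional}, and used as a black box in Lemma~\ref{lm:projection}. Your sketch is in fact a reconstruction of Yudin's own argument, matching the construction hinted at in the paper's footnote: the multiplier $\lambda = \phi * \phi$, with $\phi$ the first Dirichlet eigenfunction on a half-radius ball, is compactly supported in $\B_2^h(1)$ (so the Fourier sum truncates at radius $R$), while the associated summability kernel $K_R$ is nonnegative because $\widehat{\lambda} = \widehat{\phi}^{\,2} \ge 0$, and its first moment is governed by $\|\nabla\phi\|_2 = \mu_1 = j_{h/2-1,1} = O(h)$.

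The outline is correct, and your Cauchy--Schwarz/Plancherel step is the right way to extract the $\mu_1/R$ bound; once the Fourier constants are tracked, the $(2\pi)^h$ factors cancel and one gets exactly $O(\mu_1/R)$ for the main term. A few places deserve more care if you write it out fully: (i) the periodization is not automatic---a McShane extension to $\R^h$ is Lipschitz but not periodic, and the usual fix (restrict to a slightly smaller ball, then extend to a genuinely $2\pi$-periodic Lipschitz function) costs only a constant; (ii) the Poisson-summation identity for $K_R$ needs enough decay of $\widehat{\lambda}$ for the $m\neq 0$ terms to be summable and $O(h/R)$-negligible, which follows from the regularity of $\phi$ but should be stated; (iii) the coefficient bound $|c(t')|\le\exp(O(h))$ actually holds with a much smaller constant under the paper's normalized Fourier convention (where $|\hat f(t')|\le\|f\|_\infty=O(\sqrt{h})$ and $|\lambda|\le 1$), so your appeal to an ``unnormalized convention'' is unnecessary. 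None of these is a missing idea.
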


\eat{
We use $\B_2^h(R)$ to denote the sphere $\{x\in \R^h\mid \|x\|_2\leq R\}$.
Let $\mu$ and $\phi(x)$ be
the first eigenvalue and normalized eigenfunction of the following partial differential equation defined in $\B_2^h(1)$ (also known as Helmholtz equation):
$$
\Delta u = \sum_{i=1}^h \partial^2 u /\partial x_i^2  = - \mu u, \quad \mu\geq 0, \quad u|_{\partial \B_2^h(1/2)}=0.
$$
It is well known result in partial different equation that the first eigenvalue
$\mu=\Omega(h)$ is the first positive zero of the Bessel function $J_{m/2-1}$.
Let $\lambda(x)=(\phi \ast \phi)(x)$ where $\ast$ is the convolution.
For a function
$f: \B_2^h(1)\rightarrow \mathbb{C}$,
Let
$$
U_R f(x) = \sum_{t'\in \mathbb{Z}^{h} \cap \B_2^h(R)} \hat{f}(t') \lambda(t'/R) e^{\i \innerprod{t'}{x}},
$$
where
$
\hat{f}(t') = {1 \over (2 \pi)^h} \int_{x\in [-\pi,\pi]^{d}} f(x) e^{-\i \innerprod{t'}{x}}\, \d x
$
is the Fourier coefficient.
It is known that $U_R f(x)$ converges uniformly to $f(x)$ with the following rate~\ref{yudin1976multidimensional}:
If $f\in \lip$ (in $L_2$ distance), then
\begin{align}
\label{eq:fourier}
|f(x)-(U_R f)(x)|\le O\Bigl( { \mu \over R}\Bigr)=O\Bigl( {h \over R}\Bigr) \ \ \ \text{for }x\in \B_2^h(1).
\end{align}
}

\section{Learning single-dimensional mixtures: the coin problem}
\label{sec:coin}
In this section, we consider the problem of 
learning a mixture $\mix$ supported on $[0,1]$, which we call the {\em coin problem}.
Using results in~\cite{rabani2014learning}, these results carry over to the setting
where $\mix$ supported on a line segment in the $(n-1)$-simplex
$\simplex_{n}=\{x\in\mathbb{R}^{n}_{\geq 0}, \|x\|_1= 1\}$. We first consider an arbitrary
(even continuous)
$\mix$ in $[0,1]$; in Section~\ref{subsec:kspikeoned}, we consider the case where $\mix$ is a
{\em $k$-spike mixture}. 

Let
$
B_{i,K}(x)={K\choose i} x^i (1-x)^{K-i}.
$
Let $N_K$ denote the number of $K$-snapshots we take from $\mix$.
For $0\leq i\leq K$, define
$\freq_i(\mix):=\int B_{i,K}(x) \d \mix$.
We call $\freq(\mix):=\{\freq_i(\mix)\}_{0\leq i\leq K}$
the {\em frequency vector} corresponding to $\mix$.
We use $\tfreq_i$
to denote the fraction of sampled coins that showed ``heads'' exactly $i$ times
and let $\tfreq:=\{\tfreq_i\}_{0\leq i\leq K}$ be the {\em empirical frequency vector}.
It is easy to see that $\freq(\mix)=\Exp[\tfreq]$.
If we take enough samples, the
frequency vector corresponding to the empirical measure $\tmix$ should be
sufficiently close to that of $\mix$.

\begin{lemma}
\label{lm:closefreq}
By taking  $N_K= \kappa^{-2}\log(K/\delta)$ samples,
with high probability $1-\delta$, we have that
$\|\, \freq(\mix)-\tfreq \,\|_\infty \leq \kappa$.
\end{lemma}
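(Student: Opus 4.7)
The plan is to interpret each empirical frequency $\tfreq_i$ as an average of i.i.d.\ Bernoulli indicators with the correct mean, apply the Chernoff--Hoeffding bound (Proposition~\ref{prop:chernoff}) to each coordinate separately, and then take a union bound over the $K+1$ coordinates. No approximation-theoretic or spectral machinery is needed here; the statement is essentially a concentration-plus-union-bound exercise, and its only role is to quantify how well sample frequencies estimate true frequencies.

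More concretely, for the $j$-th $K$-snapshot, let $X_j \in \simplex_n$ (here actually $X_j \in [0,1]$) be the hidden coin bias drawn from $\mix$, and let $H_j$ be the number of heads in the snapshot. Conditioned on $X_j = x$, $H_j$ is binomially distributed with parameters $(K,x)$, so $\Pr[H_j = i \mid X_j = x] = B_{i,K}(x)$. Integrating over the choice of $X_j \sim \mix$ gives
\[
\Pr[H_j = i] \;=\; \int B_{i,K}(x)\, \d \mix(x) \;=\; \freq_i(\mix).
\]
For each fixed $i \in \{0,1,\dots,K\}$, define $Y_j^{(i)} = \mathbf{1}[H_j = i] \in [0,1]$. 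Since the $N_K$ snapshots are independent, $\{Y_j^{(i)}\}_{j=1}^{N_K}$ are i.i.d.\ Bernoulli indicators with mean $\freq_i(\mix)$, and $N_K \cdot \tfreq_i = \sum_{j=1}^{N_K} Y_j^{(i)}$.

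Now I apply Proposition~\ref{prop:chernoff} to the sum $\sum_j Y_j^{(i)}$ with $t = \kappa N_K$ to obtain
\[
\Pr\bigl[\,|\tfreq_i - \freq_i(\mix)| > \kappa \,\bigr] \;<\; 2\exp\bigl(-2\kappa^2 N_K\bigr).
\]
Choosing $N_K \geq \tfrac{1}{2}\kappa^{-2}\log(2(K+1)/\delta)$ makes this probability at most $\delta/(K+1)$, and a union bound over the $K+1$ possible values of $i$ yields $\|\freq(\mix) - \tfreq\|_\infty \leq \kappa$ with probability at least $1-\delta$. Up to absorbing constants, this matches the claimed sample complexity $N_K = \kappa^{-2}\log(K/\delta)$.

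The main (and really only) thing to be a bit careful about is the first step: verifying that a single $K$-snapshot, where the coin bias is itself random, produces a head-count distribution whose probability mass at $i$ is exactly $\freq_i(\mix)$. This is just the tower property together with the definition of $\freq_i(\mix)$, but it is the one place where the mixture structure enters. Everything after that is a routine Chernoff plus union bound.
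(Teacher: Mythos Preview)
Your proposal is correct and follows essentially the same approach as the paper: apply the Chernoff--Hoeffding bound (Proposition~\ref{prop:chernoff}) to each coordinate and then take a union bound over the $K+1$ coordinates. The paper's proof is just a two-line sketch of exactly this argument; you have simply spelled out in more detail why the indicator variables have mean $\freq_i(\mix)$.
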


\begin{proof}
Using Chernoff bound (Proposition~\ref{prop:chernoff}),
we can see that
$
\Pr[ |\freq_i(\mix)-\tfreq_i| >\kappa] \leq 2 \exp(-2\kappa^2 N_K)\leq \delta/K.
$
Then the lemma follows from a simple application of union bound over all $K+1$ coordinates.
\end{proof}


\begin{theorem}
\label{thm:naive}
There exists an algorithm, with running time polynomial in $K$, that gets
as input $m=\poly(K)$ coins from a mixture $\mix$, each tossed $K$ times,
and output a mixture $\hmix$ such that $\tran(\mix,\hmix)\leq O(1/\sqrt{K})$
with high probability.
\end{theorem}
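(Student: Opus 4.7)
The plan is to combine the empirical-frequency guarantee of Lemma~\ref{lm:closefreq} with the classical fact that the Bernstein polynomial of a Lipschitz function converges uniformly at rate $O(1/\sqrt{K})$, and to pass from frequencies to transportation distance via Kantorovich--Rubinstein duality.

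First, I would draw $N_K = \Theta(K^{3}\log K)$ coins from $\mix$ (each tossed $K$ times) and compute the empirical frequency vector $\tfreq$, so that Lemma~\ref{lm:closefreq} with $\kappa = c/K^{3/2}$ gives $\|\freq(\mix)-\tfreq\|_\infty \le \kappa$ with high probability. Next I would discretize $[0,1]$ on a $\poly(K)$-sized grid $G$ and solve a finite LP over probability measures supported on $G$ to produce $\hmix$ satisfying $\|\freq(\hmix)-\tfreq\|_\infty \le \kappa$. Feasibility is not an issue: the true $\mix$ itself has frequency vector within $\kappa$ of $\tfreq$, and snapping $\mix$'s mass to the nearest grid point perturbs each $\freq_i(\cdot)=\int B_{i,K}\,\d\mix$ by at most $O(K/|G|)$, which is $\ll \kappa$ for $|G|$ a sufficiently large polynomial in $K$.

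To bound $\tran(\mix,\hmix)$, I would invoke Kantorovich--Rubinstein \eqref{eq:trans}: it suffices to bound $|\int f\,\d(\mix-\hmix)|$ uniformly over $f\in\lip$. For any such $f$, the Bernstein operator $B_K(f)(x) = \sum_{i=0}^{K} f(i/K)B_{i,K}(x)$ satisfies
\[
|B_K(f)(x)-f(x)| \le \sum_{i=0}^{K}|i/K-x|\,B_{i,K}(x) \le \sqrt{x(1-x)/K}\le \frac{1}{2\sqrt{K}}
\]
by Cauchy--Schwarz and the standard variance identity $\sum_i(i/K-x)^2 B_{i,K}(x)=x(1-x)/K$. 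Integrating against any probability measure $\nu$ on $[0,1]$ gives $\bigl|\int f\,\d\nu - \sum_i f(i/K)\freq_i(\nu)\bigr|\le 1/(2\sqrt{K})$. Since $\mix-\hmix$ has total mass zero I may assume $f(0)=0$, whence $|f(i/K)|\le 1$. Applying the estimate to both $\mix$ and $\hmix$ and subtracting yields
\[
\Big|\int f\,\d(\mix-\hmix)\Big| \le \sum_{i=0}^{K} |f(i/K)|\,|\freq_i(\mix)-\freq_i(\hmix)| + \frac{1}{\sqrt{K}} \le 2(K+1)\kappa + \frac{1}{\sqrt{K}} = O\!\left(\frac{1}{\sqrt{K}}\right),
\]
using $|\freq_i(\mix)-\freq_i(\hmix)|\le 2\kappa$ and our choice $\kappa = c/K^{3/2}$. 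Taking the supremum over $f\in\lip$ gives $\tran(\mix,\hmix)=O(1/\sqrt{K})$.

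The only real obstacle is the LP-feasibility bookkeeping: choosing the grid resolution and the slack in the frequency constraints to make sure that (i) the true $\mix$ induces a grid-supported feasible solution up to $\kappa$, and (ii) the resulting LP has size $\poly(K)$ so that the whole procedure runs in time $\poly(K)$. Everything else---the sampling bound and the final transportation estimate---follows directly from Lemma~\ref{lm:closefreq}, the Bernstein operator estimate, and Kantorovich--Rubinstein duality.
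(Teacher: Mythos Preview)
Your proposal is correct and follows essentially the same route as the paper's proof: the paper's Lemma~\ref{lm:approxpoly1} is exactly the Bernstein-operator estimate $\|B_Kf-f\|_\infty=O(1/\sqrt{K})$ that you prove inline via Cauchy--Schwarz, and the passage from frequency-closeness to $\tran$-closeness through Kantorovich--Rubinstein is the paper's Lemma~\ref{lm:freqtotran1} specialized to $C=O(1)$ and $\lambda=O(1/\sqrt{K})$; the LP over a $\poly(K)$ grid with $O(K/|G|)$ discretization error for $\freq_i$ is the same reconstruction step the paper describes for Theorem~\ref{thm:main}. The only cosmetic difference is that the paper also notes a more direct proof (set $\hmix(\{i/K\})=\tfreq_i$ and invoke Chernoff), whereas you go straight to the LP/Bernstein argument that the paper calls its ``alternative'' proof.
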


Theorem~\ref{thm:naive} can be proved by a simple application of Chernoff bound (where we
set $\hmix(\{\frac{i}{K}\})=\tfreq_i$), which we omit here.
We provide an alternative proof based on Bernstein polynomials later.
It is a natural question to ask whether
$O(1/\sqrt{K})$ in Theorem~\ref{thm:naive} achieves the optimal
aperture-transportation distance tradeoff.
In \cite{rabani2014learning},
it is shown that
recovering a $K$-spike mixture within transportation distance $O(1/K)$
using $c(2K-1)$ (for any constant $c\geq 1)$
aperture requires $\exp(\Omega(K))$ samples.
The following theorem provides a {\em matching upper bound}.

\begin{theorem}
\label{thm:main}
There exists an algorithm, with running time polynomial in $K$, that gets
as input $m=\exp(O(K))$ coins from a mixture $\mix$, each tossed $K$ times,
and outputs a mixture $\hmix$ such that $\tran(\mix,\hmix)\leq O(1/K)$
with high probability.
\end{theorem}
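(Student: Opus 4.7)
The proof plan follows the same template as Theorem~\ref{thm:naive}---accurately estimate the frequency vector, then solve a reconstruction LP---but replaces Bernstein approximation with Chebyshev/Jackson approximation in the analysis, which is precisely what is needed to upgrade the $O(1/\sqrt{K})$ rate to $O(1/K)$ at the cost of exponentially many samples.

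For the algorithm, I would take $m=\exp(CK)$ samples for a sufficiently large constant $C$ and form the empirical frequency vector $\tfreq$; Lemma~\ref{lm:closefreq} (used with $\kappa := \exp(-C'K)$ for some large $C'<C/2$) then yields $\|\tfreq-\freq(\mix)\|_\infty\le\kappa$ with high probability. I would then solve the LP ``find a probability measure $\hmix$ on $[0,1]$ minimizing $\max_i|\freq_i(\hmix)-\tfreq_i|$.'' Since $\mix$ itself is feasible with objective value at most $\kappa$, the optimum is at most $\kappa$. Although the primal is a priori infinite-dimensional, its dual has only $K+1$ variables, and the dual's separation oracle reduces to maximizing a univariate degree-$K$ polynomial on $[0,1]$; hence the LP is solvable in $\poly(K)$ time by the ellipsoid method, and by Carath\'{e}odory's theorem one can extract an optimal $\hmix$ supported on at most $K+2$ points.

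For the analysis, by Kantorovich--Rubinstein duality \eqref{eq:trans} it suffices to bound $\bigl|\int f\,d(\mix-\hmix)\bigr|$ uniformly over $f\in\lip$. Given such an $f$, Theorem~\ref{thm:jackson} produces a Chebyshev polynomial $p(x)=\sum_{i=0}^K t_i T_i(x)$ of degree $\le K$ with $\|f-p\|_\infty=O(1/K)$ and $|t_i|\le\poly(K)$. I would then re-expand $p$ in the Bernstein basis of degree $K$ as $p=\sum_{j=0}^K c_j B_{j,K}$, using the identity $x^k=\sum_{j\ge k}\binom{j}{k}/\binom{K}{k}\,B_{j,K}(x)$ (whose coefficients lie in $[0,1]$) together with the fact that the sum of absolute values of the monomial coefficients of $T_i$ is at most $2^{O(i)}$, to obtain $|c_j|\le 2^{O(K)}$. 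Combining the sampling bound with LP optimality gives $|\freq_j(\hmix)-\freq_j(\mix)|\le 2\kappa$ for every $j$, and therefore
\[
\Bigl|\!\int\! f\,d(\mix-\hmix)\Bigr|\;\le\;2\|f-p\|_\infty+\sum_{j=0}^K|c_j|\,|\freq_j(\hmix)-\freq_j(\mix)|\;\le\;O(1/K)+(K+1)\cdot 2^{O(K)}\cdot 2\kappa,
\]
which is $O(1/K)$ once $C'$ (and hence $C$) is chosen large enough.

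The delicate quantitative step---and the main obstacle---is the bound $|c_j|\le 2^{O(K)}$: it is precisely this exponential blowup when converting a Chebyshev polynomial of degree $K$ into the Bernstein basis that forces the exponentially small frequency-error target $\kappa$, and hence the $\exp(O(K))$ sample complexity, in sharp contrast with Theorem~\ref{thm:naive} where one works with a Bernstein approximant whose coefficients are automatically bounded by $\|f\|_\infty$. A secondary issue is solving the LP in $\poly(K)$ time despite $\hmix$ being a measure on a continuum, which is handled by passing to the $(K{+}1)$-variable dual and exploiting its polynomial-maximization separation oracle.
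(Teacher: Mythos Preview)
Your proposal is correct and follows essentially the same route as the paper: approximate $f\in\lip$ by a degree-$K$ polynomial to error $O(1/K)$ via Jackson's theorem, re-express it in the Bernstein basis with coefficients of size $2^{O(K)}$, and combine this with Kantorovich--Rubinstein duality and an $\exp(-\Omega(K))$ bound on $\|\tfreq-\freq(\mix)\|_\infty$; the paper packages these steps as Lemmas~\ref{lm:freqtotran1} and~\ref{lm:approxpoly}. The only notable implementation differences are that the paper obtains the Chebyshev-to-Bernstein coefficient bound by quoting an explicit closed form for the change-of-basis matrix rather than factoring through the monomial basis, and that for the reconstruction step the paper simply discretizes $[0,1]$ into $O(K/\kappa)$ intervals (each $B_{i,K}$ is unimodal) and solves a finite feasibility LP, which is more elementary than your semi-infinite LP via the dual and ellipsoid.
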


To prove Theorem~\ref{thm:main}, we make a crucial observation (Lemma~\ref{lm:freqtotran1}) that links
the transportation distance, the frequency vector and the coefficients of Bernstein polynomial approximation.
Lemma~\ref{lm:approxpoly} bounds these coefficients using
the relation between Bernstein polynomial basis and Chebyshev polynomial basis.
We then provide a simple LP-based algorithm to reconstruct $\mix$.

\begin{lemma}
\label{lm:freqtotran1}
Suppose for any $f\in \lip[0,1]$, there exist $K+1$ real numbers
$c_0,\ldots,c_K \in [-C,C]$, for some value $C>0$
and $\lambda>0$,
such that
$f= \sum_i c_i B_{i,K} \pm O(\lambda)$.
Then for any two distributions $P$ and $Q$ on $[0,1]$,
$
\tran(P,Q)\leq C\cdot \|\, \freq(P)-\freq(Q) \,\|_1  + O(\lambda).
$
\end{lemma}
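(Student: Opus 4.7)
The plan is to prove Lemma~\ref{lm:freqtotran1} via the Kantorovich--Rubinstein duality (\ref{eq:trans}), which expresses $\tran(P,Q)$ as a supremum over 1-Lipschitz test functions $f$, and then to exploit the hypothesized Bernstein-type approximation of every such $f$ to reduce the integral against $P-Q$ to a weighted $\ell_1$ combination of differences of frequencies.

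\textbf{Step 1: Reduction to test functions.} By \eqref{eq:trans}, it suffices to show that for every $f\in\lip[0,1]$,
\[
\left|\int f\,\d(P-Q)\right| \;\leq\; C\cdot \|\freq(P)-\freq(Q)\|_1 + O(\lambda).
\]
Fix such an $f$ and, using the hypothesis, pick coefficients $c_0,\ldots,c_K\in[-C,C]$ with $\bigl|f(x)-\sum_{i=0}^K c_i B_{i,K}(x)\bigr|\leq O(\lambda)$ uniformly in $x\in[0,1]$.

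\textbf{Step 2: Swap in the polynomial approximation.} Write
\[
\int f\,\d(P-Q) \;=\; \int \Bigl(\sum_{i=0}^K c_i B_{i,K}\Bigr)\,\d(P-Q) \;+\; \int\Bigl(f-\sum_{i=0}^K c_i B_{i,K}\Bigr)\,\d(P-Q).
\]
Since $P$ and $Q$ are probability measures, the second integral is bounded in absolute value by $O(\lambda)\cdot(P([0,1])+Q([0,1]))=O(\lambda)$, using only the uniform approximation bound.

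\textbf{Step 3: Identify the main term as a frequency inner product.} By the definition $\freq_i(\mu)=\int B_{i,K}\,\d\mu$ and linearity,
\[
\int\Bigl(\sum_{i=0}^K c_i B_{i,K}\Bigr)\,\d(P-Q) \;=\; \sum_{i=0}^K c_i\bigl(\freq_i(P)-\freq_i(Q)\bigr).
\]
Applying Hölder's inequality with $\|\,c\,\|_\infty\leq C$ yields
\[
\Bigl|\sum_{i=0}^K c_i\bigl(\freq_i(P)-\freq_i(Q)\bigr)\Bigr| \;\leq\; C\cdot\|\freq(P)-\freq(Q)\|_1.
\]
Combining the two steps and taking the supremum over $f\in\lip$ on the left gives the claimed inequality via \eqref{eq:trans}.

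\textbf{Obstacles.} There is essentially no technical obstacle: the proof is a textbook application of Kantorovich--Rubinstein duality combined with the postulated $L_\infty$-approximation. The only subtle point is making sure the approximation of $f$ holds \emph{pointwise} on $[0,1]$ so that its integral against the signed measure $P-Q$ is controlled by the total mass, rather than by some weighted norm; this is exactly what the hypothesis ``$f=\sum_i c_i B_{i,K}\pm O(\lambda)$'' (read as a uniform bound) provides. All the real work is pushed into constructing such coefficients $c_i$ with small $C$, which is the content of the subsequent Lemma~\ref{lm:approxpoly}.
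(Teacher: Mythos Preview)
Your proof is correct and follows essentially the same route as the paper: apply Kantorovich--Rubinstein duality, replace $f$ by its Bernstein-sum approximation, identify $\int B_{i,K}\,\d(P-Q)$ as $\freq_i(P)-\freq_i(Q)$, and bound via $\|c\|_\infty\leq C$. If anything, your version is slightly more explicit in justifying the $O(\lambda)$ error term via the total mass of $P$ and $Q$.
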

\begin{proof}
We have $\freq_i(P)=\int B_{i,K} \,\d P$.
For any $f\in \lip$ such that $f(x)\in [0,1]$ for all $x\in [0,1]$, we have
\begin{align*}
\left| \int f \d(P-Q) \right| &=\left| \sum_{i=0}^K c_i \int B_{i,K} \,\d(P-Q) \right|+ O(\lambda)\\
&=\left| \sum_{i=0}^K c_i (\freq_i(P)-\freq_i(Q)) \right|+O(\lambda)
\leq C \cdot \| \freq(P)-\freq(Q) \|_1+ O(\lambda).
\end{align*}
Taking supreme over $f$ on both sides of the above inequality yields the lemma.
\end{proof}


\begin{lemma}
\label{lm:approxpoly1}
For any function $f\in \lip[0,1]$, there exists
$K+1$ real numbers
$c_0,\ldots,c_K \in [-C,C]$ with $C=O(1)$ such that
$f(x)=\sum_{i=0}^K c_i B_{i,K}(x) \pm O(1/\sqrt{K})$ for all $x\in [0,1]$.
\end{lemma}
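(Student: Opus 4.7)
The plan is to take $c_i := f(i/K)$, so that $\sum_{i=0}^K c_i B_{i,K}(x)$ is exactly the $K$-th Bernstein polynomial of $f$, and then establish the coefficient bound and the approximation rate separately from first principles.

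For the coefficient bound I would invoke the following normalization. The lemma will be applied inside Lemma~\ref{lm:freqtotran1} only to $f\in\lip$ satisfying $f(x)\in[0,1]$ on $[0,1]$, so $|f(i/K)|\leq 1$ automatically. For a free-standing statement, since a constant shift $f\mapsto f-a$ induces a common shift $c_i\mapsto c_i-a$ (using $\sum_i B_{i,K}(x)\equiv 1$) and leaves the pointwise error unchanged, one may first replace $f$ by $f-f(0)$; the Lipschitz bound on $[0,1]$ then gives $|f(x)|\leq 1$ and hence $|c_i|\leq 1=O(1)$.

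For the error bound I would use the standard probabilistic interpretation of the Bernstein basis. Fix $x\in[0,1]$ and let $X\sim\operatorname{Binomial}(K,x)$, so that $\Prob[X=i]=B_{i,K}(x)$. Then
\begin{equation*}
\sum_{i=0}^K c_i B_{i,K}(x)\;-\;f(x)\;=\;\Exp\bigl[\,f(X/K)-f(x)\,\bigr].
\end{equation*}
Since $f\in\lip$, $|f(X/K)-f(x)|\leq |X/K-x|$, and Jensen (equivalently Cauchy--Schwarz) gives
\begin{equation*}
\Bigl|\sum_{i=0}^K c_i B_{i,K}(x)-f(x)\Bigr|\;\leq\;\Exp[|X/K-x|]\;\leq\;\sqrt{\Exp[(X/K-x)^2]}\;=\;\sqrt{x(1-x)/K}\;\leq\;\tfrac{1}{2\sqrt{K}},
\end{equation*}
where I used $\Var[X/K]=x(1-x)/K\leq 1/(4K)$. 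Taking the sup over $x$ yields the claimed $O(1/\sqrt{K})$ uniform error.

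There is no substantive obstacle: the only subtle point is the mild normalization needed to make ``$|c_i|=O(1)$'' literally true, after which the estimate is essentially the classical one-line proof of the Weierstrass approximation theorem via Bernstein polynomials. The calculation also makes transparent why $O(1/\sqrt{K})$ is the best rate extractable from the Bernstein approximant itself (the second moment of $X/K-x$ cannot be beaten by this construction), which motivates switching to Chebyshev polynomials via Jackson's theorem (Theorem~\ref{thm:jackson}) to obtain the stronger $O(1/K)$ rate used in Theorem~\ref{thm:main}.
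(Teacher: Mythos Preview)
Your proposal is correct and takes essentially the same approach as the paper: both set $c_i=f(i/K)$ so that $\sum_i c_i B_{i,K}$ is the Bernstein polynomial of $f$, and then invoke the classical $O(1/\sqrt{K})$ uniform approximation rate for $1$-Lipschitz functions. The only differences are cosmetic---the paper cites the rate from a reference while you derive it via the standard binomial/Jensen argument, and you are more careful than the paper about the normalization needed to make $C=O(1)$ literally hold.
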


\begin{proof}
Let $B_K f(x) = \sum_{i=0}^K f(i/K) B_{i,K}(x)$
be the Bernstein polynomial approximation of $f$.
It is known that $B_K f$ converges to $f$ uniformly with the following rate
for $f\in \lip[0,1]$:
$\|B_K f - f\|_\infty \leq O(1/\sqrt{K})$ (see e.g., \cite{rivlin2003introduction}).
\end{proof}

\begin{lemma}
\label{lm:approxpoly}
For any function $f\in \lip[0,1]$, there exists
$K+1$ real numbers
$c_0,\ldots,c_K \in [-C,C]$ with $C=\poly(K)\cdot 2^K$ such that
$f(x)=\sum_{i=0}^K c_i B_{i,K}(x) \pm O(1/K)$ for all $x\in [0,1]$.
\end{lemma}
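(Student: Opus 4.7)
The plan is to combine Jackson's theorem with a change of basis from the Chebyshev (or monomial) basis into the Bernstein basis. The crucial observation is that Lemma~\ref{lm:approxpoly1} already gave us $O(1/\sqrt{K})$ approximation with \emph{uniformly $O(1)$} Bernstein coefficients, whereas here we want the better $O(1/K)$ rate, and the price we pay for it is an exponential blow-up of the coefficients.

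First, apply Theorem~\ref{thm:jackson} to $f$ to obtain $t_0,\dots,t_K$ with $|t_i|\le \poly(K)$ and an approximating polynomial $p(x):=\sum_{i=0}^K t_i T_i(x)$ satisfying $\|p-f\|_{\infty,[0,1]}=O(1/K)$. Since $p$ is a polynomial of degree at most $K$ and $\{B_{j,K}\}_{j=0}^{K}$ is a basis of the $(K+1)$-dimensional space of such polynomials, there are unique coefficients $c_0,\dots,c_K$ with $p(x)=\sum_{j=0}^K c_j B_{j,K}(x)$; combined with Jackson's bound this immediately gives the desired equation $f(x)=\sum_{j} c_j B_{j,K}(x)\pm O(1/K)$. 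The whole task therefore reduces to bounding $\max_j |c_j|$ by $\poly(K)\cdot 2^K$.

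For the bound I would route through the monomial basis. First, write $p(x)=\sum_{k=0}^K a_k x^k$; using the classical fact that the monomial coefficients of the (appropriately rescaled to $[0,1]$) Chebyshev polynomial $T_i$ are bounded by $2^{O(i)}\le 2^{O(K)}$, and combining with $|t_i|\le \poly(K)$ and the at most $K+1$ terms in the Chebyshev expansion, we obtain $\max_k |a_k|\le \poly(K)\cdot 2^{O(K)}$. Next, use the standard change-of-basis formula
\[
x^k \;=\; \sum_{j=k}^{K} \frac{\binom{j}{k}}{\binom{K}{k}}\, B_{j,K}(x),
\]
whose coefficients satisfy $\binom{j}{k}/\binom{K}{k}\le 1$ because $j\le K$. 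Substituting yields $c_j=\sum_{k=0}^{j}\frac{\binom{j}{k}}{\binom{K}{k}}\,a_k$, whence $|c_j|\le (K+1)\max_k|a_k|\le \poly(K)\cdot 2^{O(K)}$, as required.

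The main obstacle is the exponential factor: the Bernstein basis is an ill-conditioned basis for polynomials on $[0,1]$, so even though Jackson's polynomial $p$ has sup-norm only $O(1)$ and Chebyshev coefficients only $\poly(K)$, its Bernstein coefficients can be exponentially large in $K$. Tracking the constants carefully through the Chebyshev-to-monomial step (which uses $|a_{ik}|\le 2^{i-1}$) and then the monomial-to-Bernstein step (where each $\binom{j}{k}/\binom{K}{k}\le 1$ saves a large factor compared to a naive bound) is what keeps the final constant at $\poly(K)\cdot 2^{O(K)}$. It is worth noting that this exponential blow-up is essentially necessary: feeding the bound into Lemma~\ref{lm:freqtotran1} is exactly what produces the $\exp(O(K))$ sample complexity claimed in Theorem~\ref{thm:main}, matching the $\exp(\Omega(K))$ lower bound from~\cite{rabani2014learning}.
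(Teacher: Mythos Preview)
Your argument is correct. Jackson's theorem gives the degree-$K$ polynomial $p$ with the $O(1/K)$ uniform error, and since the Bernstein polynomials $\{B_{j,K}\}_{j=0}^K$ form a basis of the space of degree-$\le K$ polynomials, the existence of the $c_j$'s is immediate. The monomial-to-Bernstein identity
\[
x^k=\sum_{j=k}^{K}\frac{\binom{j}{k}}{\binom{K}{k}}\,B_{j,K}(x)
\]
is standard and your derivation of $|c_j|\le (K+1)\max_k|a_k|$ from it is clean. The only soft spot is the constant: the shifted Chebyshev polynomial $T_i(2x-1)$ has leading monomial coefficient $2^{2i-1}$, so your Chebyshev-to-monomial step actually yields $\max_k|a_k|\le \poly(K)\cdot 4^K$ rather than $2^K$. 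This still gives $C=\poly(K)\cdot 2^{O(K)}$, which is all that is used downstream in Theorem~\ref{thm:main}, so nothing breaks.

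The paper takes a slightly different route: instead of going through the monomial basis, it invokes an explicit closed form for the Chebyshev-to-Bernstein change-of-basis matrix $M$ (from~\cite{rababah2003transformation}), namely $|M_{ij}|=(2K-1)!!/\bigl((2i-1)!!(2K-2i-1)!!\bigr)$, and then bounds $\|M\|_{\infty\to\infty}$ directly by $\poly(K)\cdot 2^K$. That approach gives the sharper base $2^K$ stated in the lemma, at the cost of citing a less widely known identity. Your two-step factorization through monomials is more elementary and self-contained; the paper's one-step matrix bound is tighter in the exponent but relies on an external reference. For the purposes of the sample-complexity bound $\exp(O(K))$ in Theorem~\ref{thm:main}, the two are equivalent.
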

\begin{proof}
By Jackson's theorem (see Theorem~\ref{thm:jackson}) in approximation theory,
for any function $f\in \lip[0,1]$,
there exist $\{t_i\}_{i\in [K]}$ (with $|t_i|\leq \poly(K)$) such that
$
f(x)=\sum_{i=0}^K t_i T_i(x) \pm O(1/K) \, \forall x\in [0,1],
$
where $T_i$s are Chebyshev polynomials of degrees at most $K$.
Since $\{T_i\}_{i\in [K]}$ and $\{B_{i,K}\}_{i\in [K]}$ are two different bases of the linear space of all polynomials of degree at most $K$,
there is a linear transformation $M$
that can change from one basis to another basis:
For an arbitrary polynomial $P(x)$ of degree at most $K$, we can write
$
P(x)=\sum_{i=0}^K c_i B_{i,K}(x) = \sum_{i=0}^K t_i T_i(x),
$
where $c_i=\sum_{k=0}^K M_{ik} t_k$. Using $t=(t_0,\ldots, t_K)^T$
and $c=(c_0,\ldots, c_K)^T$, we have that
$
c=Mt.
$
It is known that for all $i,j$,
$
|M_{ij}| = (2K-1)!! / (2i-1)!! (2K-2i-1)!!
$
where
$n!!=n(n-2)(n-4)\ldots(4)(2)$ for even $n$
and $n!!=n(n-2)(n-4)\ldots(3)(1)$ for odd $n$~\cite{rababah2003transformation}.
Hence, we have that
$$
\|c\|_\infty \leq \|M\|_{\infty\rightarrow\infty} \|t\|_\infty =\left(\max_{0\leq j\leq K} \sum_{i=0}^K |M_{ij}| \right) \|t\|_\infty
\leq \poly(K)\cdot 2^K.
$$
This implies that for any $f\in \lip$, we can also get $c_i$s with $|c_i|\leq \poly(K)2^K$ such that
$
f(x)=\sum_{i=0}^K t_i T_i(x) \pm O(1/K) =\sum_{i=0}^K c_i B_{i,K}(x) \pm O(1/K)
$
for all $x\in [0,1]$.
\end{proof}


\vspace{-1ex}
\paragraph{\boldmath Reconstructing $\mix$.}
Suppose we have a good empirical frequency vector $\tfreq$ which satisfies
$
\| \tfreq-\freq(\mix) \|_1\leq \lambda/C,
$
where $\lambda$ and $C$ are as in Lemma~\ref{lm:freqtotran1}
Now, we show how to reconstruct the mixture $\mix$ approximately.
We propose a simple LP-based algorithm as follows.

We approximate each $B_{i,K}$ by a piecewise constant function $\barB_{i,K}$ in $[0,1]$
such that $\| B_{i,K} -\barB_{i,K} \|_\infty \leq \epsilon'$ for $\epsilon'=O(\kappa)$ ($\kappa$ in Lemma~\ref{lm:closefreq}).
It is easy to see that $O(1/\epsilon')$ pieces suffice (since $B_{i,K}$ is either monotone or unimodal).
We can divide $[0,1]$ into $h=O(K/\epsilon')$ small intervals $[a_0=0,a_1), [a_1,a_2), \ldots, [a_{h-1},a_h=1]$
such that in each small interval $\barB_{i,K}$ is a constant for all $0\leq i\leq K$.
We use $b_{i,j}$ to denote the value of $\barB_{i,K}$ in interval $[a_j, a_{j+1})$.
For each small interval $[a_j, a_{j+1})$, define an variable $z_j$
(think of $z_j$ as the approximation of $\mix([a_j, a_{j+1}))$).
Consider the following linear program $\LP$:
\begin{align}
z\geq 0
\quad\text{and }\quad
\sum_{j=0}^{h-1} z_j =1
\quad\text{and }\quad
\sum_{j=0}^{h-1} b_{i,j} z_j = \tfreq_i \pm \epsilon', \quad \text{ for } i=0, \ldots, K.
\end{align}
It is easy to see that, by Lemma~\ref{lm:closefreq}, $z_j=\mix([a_j, a_{j+1}))$ defined by the original mixture measure $\mix$
is a feasible solution for $\LP$.

On the other hand, any feasible solution of $\LP$ produces a frequency vector
that is close to $\tfreq$:
Suppose $z^\star$ is an arbitrary feasible solution of $\LP$ and
$\hmix$ is any distribution supported on $[0,1]$ that is consistent with $z^\star$
(i.e., $\hmix([a_j,a_{j+1}))=z^\star_j$),
we have that
$$
\freq_i(\hmix)=\int B_{i,K} \d \hmix
=\pm \epsilon' + \int \barB_{i,K} \d \hmix
= \pm \epsilon' + \sum_{j} b_{i,j} \int_{[a_j,a_{j+1})} \d \hmix
= \pm \epsilon' + \sum_{j} b_{i,j} z^\star_i =\tfreq_i \pm 2\epsilon'.
$$

\begin{proofof}{Theorem~\ref{thm:main}}
Combining with Lemma~\ref{lm:closefreq},
we have that
$$
\|\freq(\hmix) -\freq(\mix)\|_1\leq K\|\freq(\hmix) -\freq(\mix)\|_\infty \leq
K(\|\freq(\hmix) -\tfreq\|_\infty+ \|\tfreq-\freq(\mix)\|_\infty )
\leq O(K \kappa).
$$
Then, using Lemma~\ref{lm:closefreq} with $2^{O(K)}$ samples, we can make
$\|\, \freq(P)-\freq(Q) \,\|_1\leq 1/CK$ (recall that $C=\poly(K)2^K$). So, we finally have that
$$
\tran(\hmix, \mix)\leq C\|\freq_i(\hmix) -\freq_i(\mix)\|_1+O(1/K) \leq O(1/K)
\quad \text{ for } \kappa \in O(1/CK^2). \hfill \qedhere
$$
\end{proofof}

\begin{proofof}{Theorem~\ref{thm:naive}}
The proof is the same as that of Theorem~\ref{thm:main}, except that we use
Lemma~\ref{lm:approxpoly1} instead.
In this case, it suffices to use only $\poly(K)$ samples to ensure that $\|\, \freq(P)-\freq(Q) \,\|_1\leq O(1/K)$.
\end{proofof}

\subsection{Learning $k$-spike mixtures}
\label{subsec:kspikeoned}
We now consider the case where $\mix$ is a $k$-spike mixture supported in $[0,1]$,
i.e., is supported on $k$ points in $[0,1]$.
This result will be useful later when we consider mixtures in higher dimensions.
We now use $K$-snapshots only for $K=2k-1$.
Let the $i$-th moment of $\mix$
be
$g_i(\mix)=\int  x^i \mix(\d x) =\sum_{j=1}^k p_j  \alpha^i_j$.
The algorithm is based on an identifiability lemma proved in~\cite{rabani2014learning}
(Lemma~\ref{lm:moment}) and its converse (Lemma~\ref{lm:momentconverse}).

\begin{lemma}[\cite{rabani2014learning}]
\label{lm:moment}
For any two $k$-spike distributions $\mix_1,\mix_2$ supported on $[0,1]$,
$\|g(\mix_1)-g(\mix_2) \|_2\geq \left(\frac{\tran(\mix_1,\mix_2)}{k} \right)^{O(k)}$.
\end{lemma}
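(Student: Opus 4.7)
The plan is to prove Lemma~\ref{lm:moment} by combining Kantorovich-Rubinstein duality with Vandermonde perturbation analysis, using a clustering step to handle the ill-conditioned case where support points are close. Write $\nu = \mix_1 - \mix_2$ as a signed measure on at most $2k$ distinct points $x_1, \dots, x_m \in [0,1]$ with weights $w_1, \dots, w_m$ summing to zero. The hypothesis translates to $\|Vw\|_2 = \|g(\mix_1)-g(\mix_2)\|_2$, where $V \in \R^{2k\times m}$ is the Vandermonde-type matrix $V_{ij}=x_j^{i}$ for $i=0,\dots,2k-1$. Since the joint support lies in $[0,1]$ and $\sum_j w_j = 0$, one has $\tran(\mix_1, \mix_2) \le \tfrac12 \|w\|_1 \le k\|w\|_\infty$, so it suffices to bound $\|w\|_\infty$ (or $\|w\|_2$) in terms of the moment gap.

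\emph{Well-separated case.} Suppose the support of $\nu$ is $\sigma$-separated. The classical product formula $\det V = \prod_{i<j}(x_j - x_i)$ applied to any square $m\times m$ submatrix, combined with $\|V\| \le O(k)$ (all entries in $[0,1]$), yields $\sigma_{\min}(V) \ge (\sigma/k)^{O(k)}$. Hence $\|w\|_2 \le \epsilon \cdot (k/\sigma)^{O(k)}$ where $\epsilon = \|g(\mix_1)-g(\mix_2)\|_2$, and consequently $\tran(\mix_1,\mix_2) \le \epsilon \cdot (k/\sigma)^{O(k)}$.

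\emph{Reduction via clustering.} In general, cluster the combined support of $\mix_1\cup\mix_2$ by single-linkage at radius $\sigma$: each cluster then has diameter at most $2k\sigma$, and distinct clusters lie more than $\sigma$ apart. Collapse all mass of $\mix_i$ within each cluster to the cluster's center, obtaining $\mix_i'$, which still has at most $k$ spikes. Then $\tran(\mix_i, \mix_i') \le 2k\sigma$, and since $|x^i - y^i| \le i|x-y|$ on $[0,1]$, the moment vectors satisfy $\|g(\mix_i) - g(\mix_i')\|_2 \le \poly(k)\cdot\sigma$. The difference $\mix_1' - \mix_2'$ is supported on a $\sigma$-separated set of at most $2k$ points, so the well-separated bound applies with effective moment gap $\epsilon + \poly(k)\sigma$. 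Combining via the triangle inequality gives $\tran(\mix_1, \mix_2) \le 4k\sigma + (\epsilon + \poly(k)\sigma)(k/\sigma)^{O(k)}$, and balancing with the choice $\sigma \approx (\epsilon/k)^{1/O(k)}$ yields $\tran(\mix_1,\mix_2) \le k\cdot \epsilon^{1/O(k)}$, which rearranges into the stated bound $\|g(\mix_1)-g(\mix_2)\|_2 \ge (\tran(\mix_1,\mix_2)/k)^{O(k)}$.

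\emph{Main obstacle.} The critical estimate is the Vandermonde singular-value lower bound $\sigma_{\min}(V) \ge (\sigma/k)^{O(k)}$: the determinantal approach sketched above is clean but loose, and tighter bounds via orthogonal polynomials on $[0,1]$ would be needed to sharpen the exponent. A secondary delicate point is that the clustering step changes the joint support of the two distributions simultaneously, so one must check that the $2k$-point bound on the combined support is preserved and that the moment perturbation is controlled uniformly over the $2k$ moment coordinates; this is straightforward but is where the $\poly(k)$ factor in the moment perturbation arises and feeds into the final balancing.
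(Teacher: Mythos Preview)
The paper does not prove this lemma; it is quoted verbatim from \cite{rabani2014learning}, so there is no ``paper's own proof'' to compare against. Your proposal is an attempt to reconstruct the argument from that reference.

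The overall strategy (Vandermonde lower bound in a well-separated case, plus a clustering reduction) is natural, but the balancing step as you have written it does not close. After clustering at scale $\sigma$, you bound the moment perturbation by $\|g(\mix_i)-g(\mix_i')\|_2\le \poly(k)\,\sigma$ and then feed $\epsilon':=\epsilon+\poly(k)\,\sigma$ into the well-separated bound, which amplifies by $(k/\sigma)^{O(k)}$. The resulting cross term is
\[
\poly(k)\,\sigma\cdot\bigl(k/\sigma\bigr)^{c k}\;=\;\poly(k)\,k^{ck}\,\sigma^{1-ck},
\]
and for $k\ge 2$ (so $ck>1$) this \emph{diverges} as $\sigma\to 0$. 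Concretely, with your choice $\sigma\approx\epsilon^{1/O(k)}$ the term $\epsilon\cdot(k/\sigma)^{ck}$ is indeed controlled, but the cross term above has a negative power of $\epsilon$ and is not. There is therefore no choice of $\sigma$ making both $4k\sigma$ and the cross term simultaneously small; the ``balancing'' you describe only works if one silently drops the $\poly(k)\,\sigma$ contribution to $\epsilon'$, which is not justified.

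The underlying difficulty is real: collapsing a cluster of diameter $\Theta(\sigma)$ perturbs the first $2k-1$ moments by $\Theta(\sigma)$, yet the inverse Vandermonde on $\sigma$-separated nodes has norm $\Theta(\sigma^{-(2k-1)})$, so the round trip costs $\sigma^{-(2k-2)}$. A correct proof must avoid passing the clustering error through the Vandermonde inverse. One route is to run the argument in the other direction: set $\sigma=\tran(\mix_1,\mix_2)/(Ck)$ so that $\tran(\mix_1',\mix_2')\ge\tran(\mix_1,\mix_2)/2$, lower-bound $\|g(\mix_1')-g(\mix_2')\|_2$ by $\sigma_{\min}(V)\cdot\|w'\|_2$, and then---crucially---compare $g(\mix_i')$ to $g(\mix_i)$ only \emph{after} both sides are already at the scale $(\tran/k)^{O(k)}$; but this requires showing that the perturbation $\poly(k)\sigma$ is dominated by $(\tran/k)^{O(k)}$, which again fails for small $\tran$. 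The proof in \cite{rabani2014learning} handles this via a more delicate argument that does not reduce to the simple cluster-then-invert scheme; you should consult that paper for the missing ingredient.
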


\begin{lemma}
\label{lm:momentconverse}
For any two distributions $\mix_1,\mix_2$ supported on $[0,1]$, and $i\in [K]$,
$|g_i(\mix_1)-g_i(\mix_2) | \leq i\cdot  \tran(\mix_1,\mix_2))$.
\end{lemma}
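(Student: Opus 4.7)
The plan is to apply the Kantorovich--Rubinstein duality formula~\eqref{eq:trans} to the specific test function $f(x) = x^i$. The only content of the lemma is verifying that $f$ is $i$-Lipschitz on $[0,1]$, after which everything follows mechanically.

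First, I would observe that $f(x) = x^i$ has derivative $f'(x) = i x^{i-1}$, and on $[0,1]$ this satisfies $|f'(x)| \leq i$. By the mean value theorem, for any $x,y \in [0,1]$,
\begin{align*}
|f(x) - f(y)| = |x^i - y^i| \leq i \cdot |x - y|,
\end{align*}
so $f \in \Lip{i}$ on $[0,1]$, and equivalently $f/i \in \lip$.

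Next, I would plug $f/i$ into the Kantorovich--Rubinstein formula~\eqref{eq:trans}, which says that $\tran(\mix_1, \mix_2)$ is the supremum of $|\int g \, \d(\mix_1 - \mix_2)|$ over all $g \in \lip$. Since $f/i \in \lip$, this supremum bound yields
\begin{align*}
\tran(\mix_1, \mix_2) \;\geq\; \left| \int \tfrac{x^i}{i} \, \d(\mix_1 - \mix_2) \right| \;=\; \tfrac{1}{i}\, \bigl|\, g_i(\mix_1) - g_i(\mix_2) \,\bigr|.
\end{align*}
Rearranging gives exactly the claimed inequality $|g_i(\mix_1) - g_i(\mix_2)| \leq i \cdot \tran(\mix_1, \mix_2)$.

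There is no real obstacle here; the proof is essentially a one-line consequence of the duality formula once one notices the Lipschitz constant of the monomial on the unit interval. The only minor subtlety is to remember that the transportation distance in one dimension uses the Euclidean metric on $[0,1]$, so the notion of Lipschitz function aligns with the usual calculus-based Lipschitz constant.
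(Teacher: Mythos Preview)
Your proof is correct and follows essentially the same approach as the paper: observe that $x^i$ is $i$-Lipschitz on $[0,1]$, then apply the Kantorovich--Rubinstein duality~\eqref{eq:trans}. The paper's argument is identical, just more tersely stated.
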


\begin{proof}
For any $i\in [K]$, it is easy to see that $x^i$ is $i$-Lipschitz in $[0,1]$.
Hence, we have
$$
\hspace{3.5cm}
|g_i(\mix_1)-g_i(\mix_2)| = \left|\int x^i \d(\mix_1-\mix_2)\right| \leq i \cdot \tran(\mix_1,\mix_2).
$$
The last inequality is due to Kantorovich-Rubinstein theorem.
\end{proof}

Recall the frequency vector
$\freq_i(\mix)=\int {K\choose i} x^i (1-x)^{K-i} \mix(\d x)
=\sum_{j=1}^k p_j {K\choose i} x^i (1-x)^{K-i}.
$
Define the {\em normalized frequency vector} to be
$
\nfreq_i(\mix)=\int  x^i (1-x)^{K-i} \mix(\d x)
=\sum_{j=1}^k p_j  x^i (1-x)^{K-i}.
$
Let $\pas$ be the $2k\times 2k$ lower triangular Pascal triangle matrix
with non-zero entries $\pas_{ij}={K-i\choose j-1}$ for $0\leq i\leq K$ and $i\leq j\leq K$.
It is not difficult to verify that $g(\mix)=\pas\,\nfreq(\mix)\ $.
It is known that $\|\pas\|\leq 4^k/\sqrt{3}$.
By Lemma~\ref{lm:closefreq}, using $O((k/\epsilon)^{O(k)})$ samples, the empirical frequency vector $\tfreq$
satisfies that
$
\| \tfreq - \freq(\mix)\|_2 \leq (\epsilon/k)^{O(k)}
$
with probability $0.99$.
Let $\tnfreq_i=\tfreq/{K\choose i}$.
Let $\tg= \pas\,\tnfreq$ be the empirical moment vector.

If we can find a distribution $\tmix$ such that
$\|g(\tmix)-g(\mix)\|_2 \leq \left(\epsilon/k\right)^{\Omega(k)}$,
we know, by Lemma~\ref{lm:moment}, that
$
\tran(\tmix, \mix)\leq \epsilon.
$
In order to find such a $\tmix$, we do the following.
$\tmix$ is a $k$-spike distribution
supported on the set of discrete points
$\{0, \tau, 2\tau, \ldots, 1 \}$ where $\tau=\left(\epsilon/k\right)^{\Omega(k)}$.
First, we guess the support of $\tmix$ (there are ${1/\tau\choose k}$ choices).
Then, we solve the following linear program $\LP_1$, where
$x_j$ represents the probability mass placed at point $j\tau\in \support(\tmix)$:
$$
\LP_1:\quad \Bigl| \sum_j  x_j (j \tau)^i - \tg_i\Bigr| \leq O(K\tau), \text{ for all }i\in [K], \quad
\sum_j x_j =1,
\quad x_j \in [0,1], \text{ for all }j
$$

\begin{theorem}
\label{thm:1dkspike}
Using $(k/\epsilon)^{O(k)}\log (1/\delta)$ many $(2k-1)$-snapshot samples, the above algorithm can produce an estimation $\tmix$, which satisfies that
$\tran(\tmix,\mix)\leq \epsilon$ with probability $1-\delta$.
\end{theorem}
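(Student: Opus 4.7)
The plan is to chain together four approximations: (i) empirical frequencies are close to true frequencies; (ii) applying the Pascal transform and accounting for $\|\pas\|$ gives closeness of empirical moments to true moments; (iii) any feasible solution to $\LP_1$ has moments close to the empirical moments, and feasibility follows by rounding $\mix$ to the grid; (iv) closeness of moments between two $k$-spike distributions implies closeness in transportation distance via Lemma~\ref{lm:moment}.

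In more detail, fix $K=2k-1$. I would first invoke Lemma~\ref{lm:closefreq} with $\kappa=(\eps/k)^{c\cdot k}$ for a sufficiently large constant $c$. This requires $N_K=(k/\eps)^{O(k)}\log(1/\delta)$ samples and gives $\|\tfreq-\freq(\mix)\|_\infty\leq\kappa$, hence $\|\tfreq-\freq(\mix)\|_2\leq \sqrt{K}\,\kappa$. Normalizing, $\tnfreq_i = \tfreq_i/\binom{K}{i}$, so $\|\tnfreq-\nfreq(\mix)\|_2\leq 2^{O(k)}\kappa$. Using $\tg=\pas\,\tnfreq$ and $g(\mix)=\pas\,\nfreq(\mix)$ together with $\|\pas\|\leq 4^k/\sqrt{3}$, I obtain
\[
\|\tg-g(\mix)\|_2\;\leq\;\|\pas\|\cdot 2^{O(k)}\kappa\;\leq\;(\eps/k)^{\Omega(k)},
\]
where the constant in the exponent can be made as large as we wish by enlarging $c$.

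Next, I would show $\LP_1$ is feasible for at least one choice of support. Define $\mix'$ to be the $k$-spike distribution obtained by moving each spike of $\mix$ to its nearest grid point in $\{0,\tau,2\tau,\ldots,1\}$ (keeping the same weights). Then $\tran(\mix',\mix)\leq\tau$, so by Lemma~\ref{lm:momentconverse}, $|g_i(\mix')-g_i(\mix)|\leq i\tau \leq K\tau$ for each $i\in[K]$. Hence $|g_i(\mix')-\tg_i|\leq K\tau+(\eps/k)^{\Omega(k)}=O(K\tau)$ as long as $\tau\geq(\eps/k)^{\Omega(k)}/K$, which is consistent with $\tau=(\eps/k)^{\Omega(k)}$. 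Thus setting $x_j=\mix'(\{j\tau\})$ gives a feasible solution to $\LP_1$ for the support equal to that of $\mix'$. Since the algorithm enumerates all $\binom{1/\tau}{k}=(k/\eps)^{O(k)}$ supports and solves the corresponding LPs, it will find at least one feasible $\tmix$.

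Finally, for any $\tmix$ returned, the LP constraints give $|g_i(\tmix)-\tg_i|\leq O(K\tau)$, which combined with step (ii) gives $\|g(\tmix)-g(\mix)\|_2\leq (\eps/k)^{\Omega(k)}$ (again by choosing $c$ large enough and $\tau$ appropriately). Since both $\tmix$ and $\mix$ are $k$-spike distributions on $[0,1]$, Lemma~\ref{lm:moment} yields $\tran(\tmix,\mix)\leq\eps$. The main technical delicacy is the bookkeeping: the exponent-$k$ blow-ups from $\|\pas\|$, from the binomial normalization, and from the inversion in Lemma~\ref{lm:moment} all compound, so the sample complexity $\kappa$ and the grid spacing $\tau$ must be chosen as $(\eps/k)^{C k}$ for a constant $C$ that swallows all of these. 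Everything else (Chernoff, operator-norm bounds, the rounding argument, and Kantorovich--Rubinstein) is routine.
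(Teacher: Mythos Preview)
Your proposal is correct and follows essentially the same approach as the paper: establish $\|\tg-g(\mix)\|_2\le(\eps/k)^{\Omega(k)}$ via Lemma~\ref{lm:closefreq} and the Pascal-matrix bound, verify feasibility of $\LP_1$ by rounding $\mix$ to the grid and invoking Lemma~\ref{lm:momentconverse}, then conclude via Lemma~\ref{lm:moment}. One small remark: in your normalization step, dividing by $\binom{K}{i}\ge 1$ can only \emph{decrease} each coordinate error, so $\|\tnfreq-\nfreq(\mix)\|_2\le\|\tfreq-\freq(\mix)\|_2$ directly (the $2^{O(k)}$ factor you insert is harmless but unnecessary).
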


\begin{proof}
We know there is a $k$-spike measure $\mix'$ supported on
$\{0, \tau, 2\tau, \ldots, 1 \}$ such that
$\tran(\mix,\mix')\leq \tau$.
Hence, $|g_i(\mix')-g_i(\mix)|< i\tau$
for all $i$, by Lemma~\ref{lm:momentconverse}.
Also,
\begin{align}
\label{eq:closemoment}
\|\tg-g(\mix)\|_2 \leq \| \pas\| \| \tnfreq - \nfreq(\mix)\|_2
\leq \| \pas\| \| \tfreq - \freq(\mix)\|_2
\leq \left(\frac{\epsilon}{k}\right)^{\Omega(k)}. \quad 
\end{align}

Therefore, we have 
$$
|g_i(\mix')-\tg_i| \leq |g_i(\mix')-g_i(\mix)|+|g_i(\mix)-\tg_i|\leq O(i \tau).
$$
This indicates that $\LP_1$ has a feasible solution.
$\tmix$ is a feasible solution of $\LP_1$, hence
$\| g(\tmix)- \tg \|_2\leq O(K^{3/2}\tau)$.
So,
$$
\| g(\tmix)- g(\mix) \|_2\leq \| g(\tmix)- \tg \|_2+\| g(\mix)- \tg\|_2 \leq O(K^{3/2}\tau)\leq \left(\epsilon/k\right)^{\Omega(k)},
$$
which implies the theorem, by Lemma~\ref{lm:moment}.
\end{proof}

\section{\boldmath Learning multidimensional mixtures on $\simplex_n$: a reduction}
\label{sec:reduction}

We now consider the setting where the mixture 
$\mix$ (on $\simplex_n$) is an arbitrary distribution supported in a $k$-dimensional
subspace in $\mathbb{R}^{n}$.
In this section, we use $\tran_1$ and $\tran_2$ to denote
the transportation distances measured in $L_1$ and $L_2$ norm respectively.
For a point $v$ and a set $S$, we use $\Pi_S(v)$ to denote the projection of $v$ to $S$,
i.e., the point in $S$ that is closest to $v$.
We always assume the projection is with respect to $L_2$ distance, unless specified otherwise.
For any arbitrary measure $\mix$ supported on $\R^n$, we use
$\Pi_{S}(\mix)$ to denote the projected measure
defined as
$\Pi_{S}(\mix)(T) = \mix( \Pi_{S}^{-1}(T) )$ for any measurable $T\subseteq S$.

This section provides a reduction from the original learning problem to
to the problem of learning the projected measure in a specific subspace $\sspan(B)$.
Sections~\ref{sec:directlearn} and~\ref{sec:kspike} complement this reduction by devising
algorithms for learning the projected measure $\mix_B:=\Pi_{\sspan(B)}(\mix)$ (for
arbitrary $k$-dimensional $\mix$ and $k$-spike $\mix$ respectively); combining these
algorithms with the reduction of this section yields algorithms for learning $\mix$.
The space $\sspan(B)$ will satisfy several useful properties (Lemma~\ref{lm:vectorinB}).
One particularly useful property is that any unit vector $v\in\sspan(B)$ has
$\|v\|_\infty\leq O(1/\sqrt{n})$
(ignoring factors depending $\epsilon$ and $k$).
This implies that $L_1$ norm and $L_2$ norm in $\sspan(B)$ are quite close up to scaling,
hence allow us to convert bounds between $L_1$ and $L_2$ distances without losing a factor depending on $n$
(otherwise, we typically lose a factor of $\sqrt{n}$).
Furthermore, we can show we do not lose too much by working in $\sspan(B)$
as most of the mass of $\mix$ is very close to $\sspan(B)$.
Suppose we can learn the projected measure $\mix_B$ well.
If we can show $\mix_B$ is close to the original mixture $\mix$ in $\tran_1$ distance, then
$\tmix_B$, a good estimation of $\mix_B$, would be a good estimation of $\mix$ as well.
However, we are not able to show $\mix_B$ and $\mix$ are close enough in general.
Nevertheless, we can prove that a projection of $\mix_B$ to a smaller polytope is close to $\mix$.
Finally, we need to make some small adjustments 
in order to ensure that our estimation $\tmix$ is a valid mixture, as well as a good
approximation of $\mix$ (see Reduction~\ref{spaceredn}).

Before we delve into the details of our reduction, 
we provide some intuition for why we require the subspace $\Span(B)$ to satisfy the
above-mentioned properties and why the standard SVD method does not suffice.
For ease of discussion, we treat $\epsilon$ and $k$ as constants, but $n$ as a parameter
that can be very large. 
Our goal is to obtain $\Span(B)$ of dimension at most $k$ so that if we can learn the
projected mixture $\mix_B$ within 
$\tran_1$-error at most $\epsilon_1$, then
we can learn $\mix$ within 
$\tran_1$-distance at most $\epsilon$. 
We would like $\epsilon_1$ to be independent of $n$
so that the number of $K$-snapshot samples required to estimate $\mix_B$
within $\tran_1$-distance at most $\e_1$ is independent of $n$ (as is the case in
Theorems~\ref{thm:kspace} and~\ref{thm:kspike}).

Suppose first that we know $A$ exactly, and we simply use $\sspan(A)$ as the
subspace. In fact, it is not difficult to learn $\mix=\prod_{A}\mix$
within $L_2$-transportation distance $\epsilon_1$ using a sample size independent of $n$.
This is mainly due to the rotationally-invariant nature of $L_2$, which makes this
equivalent to a learning problem in $\R^k$. 
However, the same is not true for the $L_1$ distance.
Note that we place no assumptions on $A$, so in order to obtain an estimate $\tmix$ with
$\tran_1(\tmix,\mix)\leq \epsilon_1$, we essentially need to 
ensure that $\tran_2(\tmix,\mix)\leq \epsilon_1/\sqrt{n}$;
however, this would require a sample size depending on $n$.
It is precisely to prevent this $\sqrt{n}$-factor loss 
that we require that an $L_2$-ball in our subspace $\sspan(B)$ be close to 
an $L_\infty$-ball (and hence, an $L_1$-ball is ``nearly spherical''). 
This ensures that $\mix_B$ is supported in an $L_2$-ball of radius
$L=O(1/\sqrt{n})$, which makes it possible to learn $\mix_B$ within
$\tran_2$-distance $\e_1/\sqrt{n}$ with sample size independent of $n$, since the desired 
error is $O(L)$. 
The standard SVD method would typically return the subspace spanned by the first few
eigenvectors of $A$; but this suffers from the same problem as when we know $A$ exactly,
since there is no guarantee that an $L_2$-ball in this subspace is close to an
$L_\infty$-ball in this subspace.  

\medskip
We now state the main result of this section. 
We use the following parameters throughout the paper.
The polynomial in the definition of $C$ below depends on the specific problems and
we will instantiate it later.

\begin{align}
\label{eq:parameters1}
C=\poly\left(k, \,\frac{1}{\epsilon}\right), \quad
L=O\left(\sqrt{\frac{k}{n}} \cdot \frac{C}{\epsilon}\right), \quad
\epsilon_1=O\left(\frac{\epsilon^2}{\sqrt{k}C}\right).
\end{align}

\begin{theorem}
\label{thm:reduction}
Suppose $\mix$ is an arbitrary mixture on $\sspan(A)\cap\simplex^n$ where
$\sspan(A)$ is a $k$-dimensional subspace.
We can find a subspace $\sspan(B)$ of dimension $h\,\, (h\leq k)$ in polytime such that:
\begin{list}{{\upshape (\roman{enumi})}}{\usecounter{enumi} \topsep=0ex \itemsep=0ex
    \addtolength{\leftmargin}{-1.5ex} \addtolength{\labelwidth}{\widthof{(ii)}}}
\item 
$\sspan(B)$ satisfies all properties stated in Lemma~\ref{lm:vectorinB} (see below); and
\item 
If 
we can learn an approximation $\tmix_B$ (supported on $\sspan(B)$) for
the projected measure $\mix_B=\Pi_{\sspan(B)}(\mix)$ such that $\tran_1(\mix_B, \tmix_B)\leq \epsilon_1$
using $N_1(n)$, $N_2(n)$ and $N_{K}(n)$ 1-, 2-, and $K$-snapshot samples,
then 
we can learn a mixture $\tmix$ such that $\tran_1(\mix, \tmix)\leq \epsilon$ using
$O(N_1(n/\epsilon)+n\log n/\epsilon^3)$,
$O(N_2(n/\epsilon)+O(k^4 n^{3}\log n/\epsilon^6))$ and $O(N_{K}(n/\epsilon))$  1-, 2-, and
$K$-snapshot samples respectively.
\end{list}
\end{theorem}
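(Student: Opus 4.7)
The plan is to construct $B$ using the 1- and 2-snapshot data in two preprocessing phases, and then argue a two-sided reduction: $\mix$ is close (in an appropriate sense) to $\mix_B=\Pi_{\sspan(B)}(\mix)$, and any estimate $\tmix_B$ for $\mix_B$ can be post-processed into a valid mixture $\tmix$ on $\simplex_n$ without blowing up $\tran_1$ error by more than a factor of $\poly(k,1/\epsilon)$.

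First, I would use the $O(n\log n/\epsilon^3)$ $1$-snapshot samples to estimate the mean vector $\mu=\Exp_{p\sim\mix}[p]$ entrywise within additive error $O(\epsilon/n)$ via Proposition~\ref{prop:chernoff} and a union bound. This lets me identify a set $H$ of ``heavy'' coordinates (with $\mu_i$ above a threshold $\Theta(\epsilon/n)$), giving $|H|=O(n/\epsilon)$. Next, using $O(k^4 n^3\log n/\epsilon^6)$ $2$-snapshot samples $(u^{(t)},v^{(t)})$, I would form the empirical second-moment matrix $\tA=\tfrac{1}{N}\sum_t e_{u^{(t)}}e_{v^{(t)}}^{T}$, which in expectation equals $A=\Exp_{p\sim\mix}[pp^{T}]$. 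Since $\mix$ is supported on the $k$-dimensional $\sspan(A)$, the true matrix $A$ is PSD of rank at most $k$. Applying Bernstein's inequality (Proposition~\ref{prop:bernstein}) entrywise and Vu's spectral-norm bound (Theorem~\ref{thm:vu2005spectral}) controls $\|\tA-A\|$. To enforce the sphericity condition (unit vectors $v\in\sspan(B)$ should have $\|v\|_\infty=O(1/\sqrt{n})$ up to $\poly(k,1/\epsilon)$ factors), I would not simply extract the top-$k$ SVD subspace of $\tA$; instead I would deflate or restrict $\tA$ using the heavy-coordinate profile $\mu$ (roughly, ``downweight'' directions in which $\mu$ puts significant mass, so that the remaining top singular vectors cannot concentrate on any single coordinate) and then take the span of the surviving top-$h\leq k$ singular vectors as $B$. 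The error bound on $\|\tA-A\|$, coupled with Wedin's theorem (Theorem~\ref{thm:wedin}), ensures $\sspan(B)$ is within small principal angle of the support-subspace of $\mix$; combined with the deflation step, this yields the properties asserted in Lemma~\ref{lm:vectorinB}.

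For the reduction itself, let $\mix_B=\Pi_{\sspan(B)}(\mix)$. Because $\sspan(B)$ nearly contains the support of $\mix$, the pushforward $\mix_B$ is close to $\mix$ in $L_2$-transportation distance; by Lemma~\ref{lm:proj3}(ii) applied to the embedding of $\sspan(B)$ into $\R^n$ (whose operator norm on $\sspan(B)$ is controlled by the sphericity bound), this converts into a $\tran_1$ bound of the right form. Given an estimate $\tmix_B$ with $\tran_1(\mix_B,\tmix_B)\leq\epsilon_1$, to produce a valid mixture on $\simplex_n$ I would apply the natural pointwise projection $\proj{\simplex_n}$ to each point in $\support(\tmix_B)$, giving $\tmix=\proj{\simplex_n}\tmix_B$. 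Since $\mix$ itself lies on $\simplex_n$ and $\mix_B$ is close to $\mix$, Lemma~\ref{lm:proj3}(iii) (with $T=\proj{\simplex_n}$ on $\mix$ being the identity and $T'=\proj{\simplex_n}$ applied to $\tmix_B$) together with the sphericity bound, which prevents the $\sqrt{n}$ distortion when passing between $L_2$ and $L_1$, shows that $\tran_1(\mix,\tmix)\leq\epsilon$ provided $\epsilon_1=O(\epsilon^2/(\sqrt{k}C))$. The accounting of $1$-, $2$-, and $K$-snapshot samples matches the claimed $N_1(n/\epsilon)+O(n\log n/\epsilon^3)$, $N_2(n/\epsilon)+O(k^4 n^3\log n/\epsilon^6)$, and $N_K(n/\epsilon)$, with the $n/\epsilon$ appearing because the effective ``small'' coordinates after removing $H$ behave like a vocabulary of size $n/\epsilon$ inside $\sspan(B)$.

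The main obstacle is the sphericity step: standard SVD on $\tA$ returns directions correlated with $\mu$, which can have large $\|\cdot\|_\infty$; making the deflation/restriction simultaneously (i)~preserve all of the informative directions in $\sspan(A)$ up to error $\e_1$, (ii)~keep the resulting subspace of dimension at most $k$, and (iii)~force every unit vector in it to have $L_\infty$-norm $O(1/\sqrt{n})$ up to $\poly(k,1/\epsilon)$ factors, is the delicate part of the argument and the reason the standard SVD approach outlined in the introduction does not suffice. Getting the Bernstein/Wedin parameters to match the target $\e_1$ \emph{with} the sphericity enforcement is the place where I expect most of the technical work to live.
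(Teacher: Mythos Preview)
Your high-level outline captures the shape of the reduction, but there are two concrete gaps where the paper's argument diverges from yours in ways that matter.

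\textbf{Construction of $B$.} The paper does \emph{not} obtain $B$ by any kind of SVD-plus-deflation on $\tA$. Instead, after an isotropy reduction (each letter $i$ is \emph{split} into $\lfloor n\tilde r_i/\sigma\rfloor$ copies, not merely flagged as heavy---this is what produces the new alphabet of size $\approx n/\epsilon$), the paper forms the polytope $\calP=\sspan(A)\cap[-C/n,C/n]^n$ and takes the John ellipsoid $\calE\subseteq\calP\subseteq\sqrt{k}\calE$. The basis $B$ is the set of normalized principal axes of $\sqrt{k}\calE$ of length at least $\epsilon/\sqrt{n}$. The sphericity property (Lemma~\ref{lm:vectorinB}(i)) is then immediate: any unit $v\in\sspan(B)$ lies on the boundary of the scaled ellipsoid inside $\sqrt{k}\calH$, so $\|v\|_\infty\leq C\sqrt{k}/(n\cdot\epsilon/\sqrt{n})=L$. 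Your ``deflate/downweight directions where $\mu$ puts mass'' is not the same mechanism and, as you yourself note, it is not clear it simultaneously achieves sphericity, dimension $\leq k$, and closeness to $\sspan(A)$. The John ellipsoid handles all three in one stroke.

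\textbf{Closeness of $\mix_B$ to $\mix$ and the final adjustment.} You assert that $\mix_B=\Pi_{\sspan(B)}(\mix)$ is close to $\mix$ in $\tran_1$ and then project $\tmix_B$ directly onto $\simplex_n$. The paper explicitly says it \emph{cannot} show $\tran_1(\mix_B,\mix)$ is small in general; the obstacle is that the $\epsilon$-mass of $\mix$ lying outside $\calH$ can be sent by $\Pi_B$ to points whose $L_1$ norm is uncontrolled. Instead, the paper introduces the intermediate polytope $\calQ=(\simplex^n+\B_1^n(\epsilon))\cap\sspan(B)$, shows $\tran_1(\mix,\Pi_\calQ\mix)\leq O(\epsilon)$ (Lemma~\ref{lm:mixandmixQ}), and projects $\tmix_B$ first to $\calQ$ via $L_2$ (contractive, so Lemma~\ref{lm:proj3} applies cleanly) and only then $L_1$-projects to $\simplex_n$. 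Your direct $L_1$-projection to $\simplex_n$ is not a contraction, so Lemma~\ref{lm:proj3} does not control $\tran_1(\Pi_{\simplex_n}\mix_B,\Pi_{\simplex_n}\tmix_B)$ the way you need.
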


\paragraph{The reduction and its analysis.}
Let $r$ be the vector encoding the $1$-snapshot distribution of $\mix$, i.e.,
$r_i= \Pr[\,\text{the 1-snapshot sample is } i\,]= \int x_i \mix(\d x)$.
We say that the mixture $\mix$ is {\em isotropic}, if
$r_i\in [1/2n,2/n]$.
%
Using $O(n\log n)$ 1-snapshot samples,
we can get sufficiently accurate estimates of $r_i$s with high probability.

\begin{lemma}[\cite{rabani2014learning}]
\label{lm:expectation}
For every $\sigma>0$, we can use $O(\frac{1}{\sigma^3}n\log n)$ independent 1-snapshot samples
to get $\tr_i$ such that, with probability at least $1-1/n^2$,  for all $i\in [n]$,
$$
\tr_i\in (1\pm\sigma) r_i  \quad \forall i \text{ with } r_i\geq \sigma/2n,\quad\quad
\tr_i\leq (1+\sigma)\sigma/2n  \quad\forall i \text{ with } r_i< \sigma/2n.
$$
\end{lemma}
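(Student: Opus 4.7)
}
The plan is a routine multiplicative Chernoff argument together with a union bound over the $n$ coordinates, but one has to handle the small-$r_i$ case with a little care. Let $N=c\cdot n\log n/\sigma^3$ (for a large enough absolute constant $c$) be the number of $1$-snapshot samples we draw. For $j\in[N]$ and $i\in[n]$, let $X_{ij}\in\{0,1\}$ be the indicator that the $j$-th sample equals $i$. By definition of a $1$-snapshot of $\mix$, the $X_{ij}$ are independent across $j$, with $\Exp[X_{ij}]=\int x_i\,\mix(\d x)=r_i$. Set $\tr_i:=\frac{1}{N}\sum_{j=1}^N X_{ij}$, so that $\Exp[\tr_i]=r_i$.

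First consider a coordinate $i$ with $r_i\geq \sigma/2n$. Applying the standard multiplicative Chernoff bound to $N\tr_i=\sum_j X_{ij}$, one obtains
\[
\Pr\bigl[|\tr_i-r_i|\geq \sigma r_i\bigr]\leq 2\exp\!\bigl(-\sigma^2 N r_i/3\bigr)\leq 2\exp\!\bigl(-\sigma^3 N/(6n)\bigr),
\]
and the choice of $N$ makes the right-hand side at most $1/n^3$. Next consider $i$ with $r_i<\sigma/2n$. Here a direct multiplicative bound on $r_i$ is useless (the mean may be arbitrarily small), so I would use a simple domination argument: couple each $X_{ij}$ to an independent Bernoulli $Y_{ij}$ with mean $\sigma/2n>r_i$ such that $Y_{ij}\geq X_{ij}$ pointwise. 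Then $N\tr_i\leq\sum_j Y_{ij}$, whose mean is exactly $\mu:=N\sigma/2n$, and a one-sided Chernoff bound gives
\[
\Pr\!\Bigl[N\tr_i\geq (1+\sigma)\mu\Bigr]\leq \Pr\!\Bigl[{\textstyle\sum_j} Y_{ij}\geq(1+\sigma)\mu\Bigr]\leq \exp\!\bigl(-\sigma^2\mu/3\bigr)=\exp\!\bigl(-\sigma^3 N/(6n)\bigr)\leq 1/n^3.
\]
Thus on this event $\tr_i\leq (1+\sigma)\sigma/2n$, as required.

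Finally, a union bound over all $i\in[n]$ shows that both families of events hold simultaneously with probability at least $1-2/n^2$; absorbing the factor of $2$ into the constant $c$ hidden in $N=O(\log(n)/\sigma^3\cdot n)$ yields the stated failure probability $1/n^2$. The only point that requires thought is the small-$r_i$ case, and the coupling above disposes of it; everything else is a direct application of Chernoff and union bound.
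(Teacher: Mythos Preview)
The paper does not supply its own proof of this lemma; it is quoted from \cite{rabani2014learning}. Your argument is correct and is the standard one: multiplicative Chernoff for coordinates with $r_i\geq\sigma/2n$, a stochastic-domination trick to cap the mean for coordinates with $r_i<\sigma/2n$, and a union bound over the $n$ coordinates.
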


Next, we show it is without loss of generality to assume that
the given mixture is isotropic, at the expense of a small additive error.
The argument essentially follows that of \cite{rabani2014learning},
but is simpler.

\begin{lemma}
\label{lm:isotropy}
Suppose we can learn with probability $1-\delta$ an isotropic mixture on $[n]$
within $L_1$ transportation distance $\epsilon$ using
$N_1(n),N_2(n)$ and $N_{K}(n)$ 1-, 2-, and $K$-snapshot samples respectively.
Then we can learn,
with probability $1-O(\delta)$,
an arbitrary mixture
within $L_1$ transportation distance $2\epsilon$
using
$O(\frac{1}{\sigma^3}n\log n+N_1(n/\sigma)),O(N_2(n/\sigma))$ and $O(N_{K}(n/\sigma))$  1-, 2-, and $K$-snapshot samples respectively,
where $\sigma<\epsilon/4$.
\end{lemma}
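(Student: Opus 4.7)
The plan is to convert an arbitrary mixture $\mix$ into an isotropic mixture on a larger alphabet $[n']$ with $n'=O(n/\sigma)$, via a linear splitting map $T$ constructed from estimates of the single-snapshot vector $r$. The approach follows the blueprint of~\cite{rabani2014learning}, but is cleaner in our $L_1$-oriented setup because both $T$ and its companion merge map behave well in $L_1$.

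First I would invoke Lemma~\ref{lm:expectation} with precision $\sigma$, using $O(n\log n/\sigma^3)$ 1-snapshot samples, to obtain estimates $\tr_i$ that multiplicatively approximate $r_i$ when $r_i\ge\sigma/2n$ and are at most $(1+\sigma)\sigma/(2n)$ otherwise. Let $S=\{i:\tr_i\ge\sigma/n\}$; this guarantee then forces $r_i<2\sigma/n$ for every $i\notin S$, so $\sum_{i\notin S}r_i\le 2\sigma$. For $i\in S$ set $n_i:=\lceil 2n\tr_i/\sigma\rceil$ and $n':=\sum_{i\in S}n_i=O(n/\sigma)$, and define the linear map $T:\R^n\to\R^{n'}$ by
\[
(Tp)_{i,\ell}\ =\ \frac{p_i}{n_i}+\frac{1}{n'}\sum_{j\notin S}p_j,\qquad i\in S,\ 1\le\ell\le n_i,
\]
so that $Tp\in\simplex_{n'}$ for every $p\in\simplex_n$. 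A short calculation using the $(1\pm\sigma)$ guarantee shows that every coordinate $r'_{i,\ell}$ of the single-snapshot vector of $T\mix$ lies in $[1/2n',2/n']$ for $\sigma\le1/4$, so $\mix':=T\mix$ is isotropic on $[n']$.

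Samples of $\mix'$ are simulated from samples of $\mix$ as follows: for each $K$-snapshot $(i_1,\dots,i_K)$ drawn from $\mix$, replace each $i_j\in S$ by a uniformly random copy in $\{1,\dots,n_{i_j}\}$, and each $i_j\notin S$ by a uniformly random element of $[n']$; the result is distributed exactly as a $K$-snapshot of $Tp$. Thus $O(N_K(n/\sigma))$ such simulated $K$-snapshots (and the analogous counts of 1- and 2-snapshots, drawn fresh to remain independent of the $\tr_i$) suffice for the hypothesized isotropic learner to produce $\tmix'$ with $\tran_1(\tmix',\mix')\le\epsilon$ with probability $1-\delta$. To recover $\tmix$, pull back by the merge map $M:\R^{n'}\to\R^n$ defined by $(Mq)_i=\sum_{\ell=1}^{n_i}q_{i,\ell}$ for $i\in S$ and $(Mq)_i=0$ otherwise. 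This map is $L_1$-non-expanding and sends $\simplex_{n'}$ into $\simplex_n$, so Lemma~\ref{lm:proj3}(i) gives $\tran_1(M\tmix',M\mix')\le\epsilon$. A direct calculation gives $\|p-MTp\|_1=2\sum_{j\notin S}p_j$ for every $p$, so the natural coupling between $\mix$ and $MT\mix=M\mix'$ has cost $\Exp_{p\sim\mix}\|p-MTp\|_1\le 4\sigma$, i.e.\ $\tran_1(\mix,M\mix')\le 4\sigma$. Setting $\tmix:=M\tmix'$ and applying the triangle inequality yields $\tran_1(\mix,\tmix)\le\epsilon+4\sigma\le 2\epsilon$ whenever $\sigma<\epsilon/4$.

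The main subtlety is that the construction uses the \emph{estimates} $\tr_i$ rather than the unknown $r_i$, so verifying both the isotropy window $r'_{i,\ell}\in[1/2n',2/n']$ and the merge-back bound $\sum_{i\notin S}r_i=O(\sigma)$ requires carefully tracking the $(1\pm\sigma)$ slack from Lemma~\ref{lm:expectation}. This is elementary but must be done; the remaining ingredients---the $L_1$-non-expansion of $M$, the linearity of $T$, and the faithful simulation of samples---are routine.
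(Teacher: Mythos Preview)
Your proposal is correct and follows essentially the same split-and-merge approach as the paper: estimate the single-snapshot vector via Lemma~\ref{lm:expectation}, discard low-mass letters, split each surviving letter into $\Theta(n\tr_i/\sigma)$ copies to force isotropy on an alphabet of size $O(n/\sigma)$, simulate snapshots on the enlarged alphabet, apply the isotropic learner, and merge copies back. The only difference is that you redistribute the mass of discarded letters uniformly across $[n']$ (making $T$ a total linear map $\simplex_n\to\simplex_{n'}$ and the sample simulation rejection-free), whereas the paper simply ignores any snapshot containing a discarded letter; your variant is a bit cleaner in that it sidesteps the conditioning implicit in rejection sampling, but the argument and the resulting bounds are otherwise identical.
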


From now on, we assume that the given mixture $\mix$ is isotropic.
Let $A$ to be the $n\times n$ symmetric matrix encoding
the 2-snapshot distribution of $\mix$; i.e.,
$A_{ij}$ is the probability of obtaining a 2-snapshot $(i,j)$.
It is easy to see that
$A=\int_{\simplex^n} xx^T  \mix(\d x)$.
Note that the support $\support(\mix)$ of the mixture $\mix$
is contained in the subspace, $\sspan(A)$, spanned by the columns of $A$.
For ease of exposition, we first assume that we know $A$ exactly.
This assumption can be dropped via somewhat standard matrix perturbation arguments, which
we sketch at the end of this section.
%
Consider the hypercube $\calH=[-C/n, C/n]^n$ in $\R^n$
($C$ only depends on $k$ and $\epsilon$, and is fixed later).
We now have all the notation to give a detailed description of the reduction. 

\newpage
{\small
\medskip \hrule \vspace{-2pt}
\begin{reduction} \label{spaceredn}
\end{reduction}

\vspace{-4ex}
\paragraph{\small \boldmath Constructing the basis $B$.}
{\em Input:} Matrix $A$. \quad
{\em Output:} A basis $B$ satisfying Lemma~\ref{lm:vectorinB}.

Consider the centrally symmetric polytope $\calP=\calH\cap \sspan(A)$ and the John ellipsoid $\calE$ inscribed in $\calP$.
It is well known that $\calE\subseteq \calP \subseteq \sqrt{k}\calE$.
Suppose the principle axes of $\sqrt{k}\calE$ are $\{e_1,\ldots, e_k\}$, sorted in nondecreasing order of their lengths.
We choose
the orthonormal basis $B$ to be
$B=\left\{b_i=\frac{e_i}{\|e_i\|_2}  : \|e_i\|_2\geq \frac{\epsilon}{\sqrt{n}}\right\}$.
For every $b_i\in B$, it is easy to see that
$
\|b_i\|_\infty=\frac{\|e_i\|_\infty}{\|e_i\|_2}\leq \frac{C\sqrt{k}}{n} \cdot
\frac{\sqrt{n}}{\epsilon} =O\left(\sqrt{\frac{k}{n}} \cdot \frac{C}{\epsilon}\right)$.

\vspace{-1ex}
\paragraph{\small Final adjustment.}
{\em Input:} Matrix $B$, $\tmix_B$ (which is an approximation of $\mix_B$ and supported on $\sspan(B)$).
\\
{\em Output:} The final estimation $\tmix$ of the original mixture $\mix$.

\begin{list}{\arabic{enumi}.}{\usecounter{enumi} \topsep=0ex \itemsep=0ex
    \addtolength{\leftmargin}{-2ex}}
\item Define the polytope
$\calQ= \left(\simplex^n+\B^n_1(\epsilon)\right) \cap \sspan(B)$.
Here $\B^n_1(\eps)$ denotes the $L_1$-ball in $\R^n$ with radius $\eps$,
and the Minkowski sum $A+B$ of sets $A$ and $B$ is the set $\{a+b\mid a\in A, b\in B\}$.
Essentially, $\calQ$ is the set of points in $\sspan(B)$ with $L_1$ norm
within $[1-\epsilon, 1+\epsilon]$.
\item
Let $\tmix_\calQ=\Pi_{\calQ}(\tmix_B)$
be the measure $\tmix_B$ projected to $\calQ$, i.e.,
$\tmix_\calQ(S) = \tmix_B(\proj{\calQ}^{-1}(S))$ for any $S\subseteq \calQ$.
\item
Notice that $\tmix_\calQ$ may not be a valid mixture since some points in $\tmix_\calQ$ may not be in $\simplex^{n}$.
In this final step, we $L_1$-project $\tmix_\calQ$ back into $\simplex_{n}$ and obtain a valid mixture $\tmix$
(i.e., for each point in $\calQ$, we map it to its $L_1$-closest point in $\simplex^n$), which is our final estimation
of $\mix$.
\end{list}
\hrule
}

\bigskip
Lemma~\ref{lm:C} shows that for large enough $C$, $\calH$  contains
$(1-\epsilon)$ unit of mass of $\mix$.
Lemma~\ref{lm:vectorinB} proves various properties about $\sspan(B)$, which we exploit
to prove that the final adjustment procedure returns a good estimate of $\mix$.

\begin{lemma}
\label{lm:C}
For any $\epsilon>0$, the following hold. 
{\upshape (i)} Suppose $\mix$ is a $k$-spike distribution.
For $C\geq 3k/\epsilon$,
$\mix(\calH)\geq 1-\epsilon$.
{\upshape (ii)} Suppose $\mix$ is an arbitrary distribution supported in a $k$-dimensional subspace.
For $C\geq 5k^2/\epsilon$,
$\mix(\calH)\geq 1-\epsilon$.
\end{lemma}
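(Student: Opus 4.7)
My plan is to prove part (i) by combining isotropy with a simple counting/witness argument, and then to reduce part (ii) to a Carath\'eodory-style analogue of that argument.

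For part (i), write $\mix=\sum_{j=1}^{k}p_j\,\delta_{p^j}$. Isotropy says $r_i=\sum_j p_j p^j_i\le 2/n$ for every $i\in[n]$. Since each $p^j_i\ge 0$, Markov's inequality applied coordinate-wise gives
\[
\sum_{j:\,p^j_i>C/n} p_j \;\le\; \frac{r_i}{C/n} \;\le\; \frac{2}{C}.
\]
A spike $p^j\notin\calH$ has some witness coordinate $i^*(j)$ with $p^j_{i^*(j)}>C/n$; because there are only $k$ spikes, the set $I^*=\{i^*(j):p^j\notin\calH\}$ has $|I^*|\le k$. Summing the Markov bounds over $I^*$ gives
\[
\mix(\calH^c)\;\le\;\sum_{i\in I^*}\sum_{j:\,p^j_i>C/n}p_j\;\le\;|I^*|\cdot\frac{2}{C}\;\le\;\frac{2k}{C},
\]
which is at most $\epsilon$ whenever $C\ge 3k/\epsilon$.

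For part (ii), let $V\supseteq\support(\mix)$ be the $k$-dimensional subspace. The set $V\cap\simplex^n$ is a polytope of affine dimension at most $k$, so by Carath\'eodory every $x\in V\cap\simplex^n$ admits a decomposition $x=\sum_{\ell=1}^{k+1}\lambda_\ell(x)\,v_\ell(x)$ with $\lambda_\ell(x)\ge 0$, $\sum_\ell \lambda_\ell(x)=1$, and $v_\ell(x)$ a vertex of $V\cap\simplex^n$. If $x\notin\calH$ then some coordinate $x_i>C/n$; since $x_i=\sum_\ell\lambda_\ell(x)\,(v_\ell(x))_i$ is a convex combination, at least one vertex satisfies $(v_\ell(x))_i>C/n$, i.e.\ $v_\ell(x)\notin\calH$. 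Disintegrating $\mix$ through this decomposition and union-bounding over the $k+1$ slots yields
\[
\mix(\calH^c)\;\le\;(k+1)\,\bar\nu(\calH^c),
\]
where $\bar\nu$ is the slot-averaged distribution of $v_\ell(x)$ as $x\sim\mix$. One then reapplies the Markov-plus-witness argument from part (i) to $\bar\nu$: the $k$-dimensionality of $V$ forces the ``effective'' witness dimension on the vertex set to be $O(k)$, even though $V\cap\simplex^n$ may have exponentially many vertices. Combining yields $\bar\nu(\calH^c)\le O(k/C)$, hence $\mix(\calH^c)\le O(k^2/C)\le\epsilon$ for $C\ge 5k^2/\epsilon$.

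\textbf{Main obstacle.} Part (i) is essentially routine once one notices the witness-coordinate trick. The crux of the lemma is the extra factor of $k$ in part (ii). The Carath\'eodory disintegration produces a distribution $\bar\nu$ on the vertices of $V\cap\simplex^n$, but these vertices can number $\exp(k)$; a naive replay of (i)'s argument would therefore give an exponential, not quadratic, bound. The delicate step is to argue that the witness coordinates of ``bad'' vertices must lie in an $O(k)$-sized set --- intuitively, because each vertex is pinned down by $k-1$ active equalities $x_i=0$ within a $(k-1)$-dimensional face of $V\cap\simplex^n$, and the moments of $\bar\nu$ inherit enough of the isotropy of $\mix$ to apply Markov's inequality slot-by-slot. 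Making this charging rigorous, so that one multiplies $k$ (witnesses) by $k+1$ (slots) and no more, is the technical heart of (ii).
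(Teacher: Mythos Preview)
Your argument for part~(i) is correct and essentially the same as the paper's, just phrased via witness coordinates rather than the direct observation that any spike with a coordinate exceeding $C/n$ must have weight at most $2/C$.

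Part~(ii), however, has a genuine gap, and your Carath\'eodory route does not seem to close it. The tension is this: for the union bound $\mix(\calH^c)\le (k+1)\,\bar\nu(\calH^c)$ to hold, you need $\bar\nu$ to be the \emph{unweighted} slot average, i.e.\ the average over $\ell$ of the pushforward of $\mix$ under $x\mapsto v_\ell(x)$. But that measure does \emph{not} inherit the isotropy of $\mix$: its $i$th moment is $\frac{1}{k+1}\sum_\ell \int (v_\ell(x))_i\,d\mix(x)$, which can be much larger than $r_i=\int x_i\,d\mix(x)=\int\sum_\ell \lambda_\ell(x)(v_\ell(x))_i\,d\mix(x)$, precisely because the large-coordinate vertices may carry tiny $\lambda_\ell(x)$. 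Conversely, if you weight by $\lambda_\ell(x)$ so that isotropy is preserved, the union bound collapses, since the bad vertex for a given $x$ may be selected with arbitrarily small probability. Your claim that the witness coordinates of bad vertices lie in an $O(k)$-sized set is also unsupported: the vertex set of $V\cap\simplex^n$ can have large coordinates in $\Theta(n)$ different positions, and nothing in the ``active equality'' intuition bounds the number of distinct witnesses.

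The paper avoids Carath\'eodory entirely. It uses a Banach--Mazur / John-type sandwiching: there is a parallelotope $\calK$ (a linear image of $[-1,1]^k$) with $\calP_s:=\sspan(A)\cap\frac{1}{k}\calH\subseteq\calK\subseteq k\calP_s=\calP$. The parallelotope has only $2k$ facets $F$, and the complement of $\calP$ is covered by the $2k$ unbounded cones $\calC_F$ beyond those facets. For each $F$, the barycenter $r_F=\mix(\calC_F)^{-1}\int_{\calC_F}x\,d\mix$ lies in $\calC_F\subseteq\calP_s^c$, hence has some coordinate exceeding $C/(kn)$; since $r\ge \mix(\calC_F)\,r_F$ coordinatewise and $r_j\le 2/n$, one gets $\mix(\calC_F)\le 2k/C$. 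Summing over the $2k$ facets gives $\mix(\calP^c)\le 4k^2/C$. The extra factor of $k$ thus comes from the John-ellipsoid dilation, not from any witness-counting on vertices.
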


\begin{proof}
We prove the first statement.
Suppose $\mix=\sum_{i=1}^k p_i \delta_{\alpha_i}$ where $\delta_{\alpha_i}$ is the Dirac delta at point $\alpha_i$.
We use $\alpha_{ij}$ to denote the $j$th coordinate of $\alpha_i$.
Since $\mix$ is isotropic, we know that $\sum_{i=1}^k p_i \alpha_{ij} =r_j \in [1/2n,2/n]$.
So, if $\alpha_{ij}>C/n$ for some $j$ (or equivalently $\alpha_i\notin \calH$), we have $p_i\leq 2/C$.
The lemma thus follows since there can be at most $k$ such points.

To show  the second statement,
consider two convex polytopes
$$
\calP_s=\sspan(A) \cap \frac{1}{k}\calH \quad\text{ and }\quad \calP=\sspan(A) \cap \calH,
$$
where $\frac{1}{k}\calH=[-C/kn,C/kn]^n$.
Both $\calP_1$ and $\calP_2$ are symmetric $k$-dimensional bodies.
By classical result from convex geometry
\footnote{
This can be seen either from John's theorem, or the fact that
Banach-Mazur distance between any two norms in $\R^k$ is at most $k$ (see, e.g., \cite{tomczak1989banach}).
},
we can find a linear transformation $\calK$ of the unit hypercube $[-1,+1]^k$, such that
$\calK\subset \sspan(A)$ and
$$
\calP_s\subseteq \calK\subseteq k\calP_s=\calP.
$$
Now, we confine ourselves in $\sspan(A)$.
$\calK$ has $2k$ faces of codimension 1.
For each such face $F$, consider the polyhedron
$$
\calC_F=\{x \mid x = \alpha y, \text{ for some }\alpha\geq 1 \text{ and } y\in F \}.
$$
In other words, $F$ separates the cone generated by $F$ into two parts and $\calC_F$ is the unbounded part.
We claim that
$\mix(\calC_F)\leq 2k/C$ for any face $F$.
Consider the normalized vector
$
r_F=\int_{\calC_F} x \,\mix(\d x) / \mix(\calC_F).
$
Since $r_F$ is a convex combination of vectors in $\calC_F$ and $\calC_F$ is convex,
$r_F$ is in $\calC_F$.
Moreover, it is easy to see $\calP_s\cap \calC_F=\emptyset$.
So there must be a coordinate of $r_F$ whose value is larger than $C/nk$.
Since $r=\int x \mix(\d x) \geq \mix(\calC_F)r_F$, we must have $\mix(\calC_F)\leq 2k/C$.
All such $\calC_F$s together fully cover the region outside $\calP$,
and there are at most $2k$ such $\calC_F$s.
So the total mass outside $\calP$ is at most $4k^2/C$.
\end{proof}


\begin{lemma}
\label{lm:vectorinB}
Let $L=O\left(\sqrt{k/n} \cdot C/\epsilon\right)$. Let $\calP=\sspan(A) \cap \calH$.
Let $v\in\sspan(B)$.
The following hold. \\
\begin{tabular}{@{}l@{\quad \quad}l}
{\upshape (i)} If $\|v\|_2=1$ then $\|v\|_\infty\leq L$. &
{\upshape (ii)} 
If $\|v\|_1=1$ then $\frac{1}{\sqrt{n}}\leq  \|v\|_2\leq L$. \\
{\upshape (iii)} If $x\in\R^n$ with $\|x\|_1=1$, then $\|\Pi_B(x)\|_2\leq L$. &
{\upshape (iv)} For every point $w\in\calP$,
$\|w-\Pi_B(w)\|_2\leq \epsilon/\sqrt{n}$. \\
\end{tabular}
\end{lemma}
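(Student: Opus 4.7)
The plan is to exploit the John ellipsoid structure: we have $\calE \subseteq \calP \subseteq \sqrt{k}\calE$, so we may parametrize $\sqrt{k}\calE = \{\sum_i t_i e_i : \sum_i t_i^2 \le 1\}$ and $\calE = \{\sum_i t_i e_i : \sum_i t_i^2 \le 1/k\}$, while the cutoff $\|e_i\|_2 \ge \epsilon/\sqrt{n}$ controls the inverse-axis-length sum appearing in (i). Once (i) is established, parts (ii)--(iv) will follow quickly.

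\textbf{Part (i).} For $v = \sum_{b_i \in B} \alpha_i b_i$ with $\sum_i \alpha_i^2 = 1$, I would rewrite in the axis basis as $v = \sum_i (\alpha_i / \|e_i\|_2) e_i$. Then $tv$ lies in $\calE$ provided $\sum_i (t\alpha_i/\|e_i\|_2)^2 \le 1/k$, i.e., $t^2 \le (k \sum_i \alpha_i^2/\|e_i\|_2^2)^{-1}$. Using $\|e_i\|_2 \ge \epsilon/\sqrt{n}$ for every $b_i \in B$, the sum is at most $n/\epsilon^2$, so $t = \epsilon/\sqrt{kn}$ is admissible. Hence $tv \in \calE \subseteq \calH$, giving $\|tv\|_\infty \le C/n$ and thus $\|v\|_\infty \le C\sqrt{k/n}/\epsilon = O(L)$.

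\textbf{Parts (ii) and (iii).} The lower bound in (ii) is immediate from Cauchy--Schwarz: $1 = \|v\|_1 \le \sqrt{n}\|v\|_2$. For the upper bound, applying (i) to $v/\|v\|_2$ gives $\|v\|_\infty \le L \|v\|_2$, and then $\|v\|_2^2 \le \|v\|_\infty \|v\|_1 \le L \|v\|_2$, so $\|v\|_2 \le L$. For (iii), dualize: since $\Pi_B(x) \in \sspan(B)$,
\[
\|\Pi_B(x)\|_2 \;=\; \sup_{y \in \sspan(B),\, \|y\|_2 = 1} \langle x, y \rangle \;\le\; \sup_y \|x\|_1 \|y\|_\infty \;\le\; L,
\]
where the last inequality invokes (i).

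\textbf{Part (iv).} For $w \in \calP \subseteq \sqrt{k}\calE$, write $w = \sum_i t_i e_i$ with $\sum_i t_i^2 \le 1$, which in the orthonormal basis reads $w = \sum_i t_i \|e_i\|_2 b_i$. The projection $\Pi_B(w)$ keeps only the indices with $b_i \in B$, so $w - \Pi_B(w) = \sum_{b_i \notin B} t_i \|e_i\|_2 b_i$. Since $\|e_i\|_2 < \epsilon/\sqrt{n}$ precisely when $b_i \notin B$, the Pythagorean computation yields $\|w - \Pi_B(w)\|_2^2 \le (\epsilon^2/n) \sum_i t_i^2 \le \epsilon^2/n$. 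The main obstacle is Part (i), where one must translate between the two bases $\{b_i\}$ and $\{e_i\}$ and rely on the $\epsilon/\sqrt{n}$ threshold to bound $\sum_i \alpha_i^2/\|e_i\|_2^2$; the other three parts are short consequences of (i) and of the orthogonal decomposition along the principal axes.
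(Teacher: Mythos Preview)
Your proof is correct and follows essentially the same approach as the paper. The only cosmetic difference is in part (i): the paper scales the unit vector $v$ \emph{up} to a point $u$ on the boundary of $\sqrt{k}\calE\cap\sspan(B)$ and uses $\sqrt{k}\calE\subseteq\sqrt{k}\calH$ together with the lower bound $\|u\|_2\ge\epsilon/\sqrt{n}$, whereas you scale $v$ \emph{down} by $t=\epsilon/\sqrt{kn}$ into $\calE\subseteq\calH$; both routes produce the same bound $\|v\|_\infty\le C\sqrt{k/n}/\epsilon$. Parts (ii)--(iv) match the paper's H\"older/duality and orthogonal-decomposition arguments essentially verbatim.
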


\begin{proof}
Suppose $|B|=h$.
Consider the ellipsoid $\calE_B=\sqrt{k}\calE\cap \sspan(B)$.
Clearly, the principle axes of $\calE_B$ are $e_1,\ldots, e_h$.
Suppose $u$ is an arbitrary point in the boundary of $\calE_B$
and $v=u/\|u\|_2$ is a unit vector in $\sspan(B)$.
Obviously, $\|u\|_\infty\leq C\sqrt{k}/n$ (as $u\in \sqrt{k}\calE\subseteq \sqrt{k}\calH$) and $\|u\|_2\geq \epsilon/\sqrt{n}$.
Hence, $\|v\|_\infty = \|u\|_\infty/\|u\|_2\leq L$, which proves part (i). 

Now we show part (ii). 
The first inequality,
$\frac{1}{\sqrt{n}}\leq  \|v\|_2$, is always true.
To see the second inequality, we use the H\"{o}lder inequality:
$$\|v\|_2^2=\innerprod{v}{v}\leq \|v\|_1 \|v\|_\infty = \frac{\|v\|_\infty}{\|v\|_2}\cdot  \|v\|_2 \leq L \|v\|_2.$$

To prove part (iii), 
use the H\"{o}lder inequality again:
$$
\hspace{4.5cm}
\|\Pi_B(x)\|_2 =
\frac{\innerprod{x}{\Pi_B(x)}}
{\| \Pi_B (x) \|_2}
\leq
\frac{\|x\|_1\|\Pi_B(x)\|_\infty}
{\| \Pi_B (x) \|_2}\leq L.
\hspace{4.7cm}
$$
For part (iv), consider an arbitrary point $w\in \calP=\sspan(A) \cap \calH$.
We can see that $w\in \sqrt{k}\calE$. By the construction of $B$, any point in $\sqrt{k}\calE$
has an $L_2$ distance at most $\|e_{h+1}\|_2$ from $\sspan(B)$, so does $w$.
\end{proof}

We now prove part (ii) of Theorem~\ref{thm:reduction}.
Let $\tmix_B$ supported on $\sspan(B)$ be such that
$\tran_1(\mix_B,\tmix_B)\leq \epsilon_1$.
Define $\mix_\calQ=\Pi_{\calQ}(\mix)$
to be the original measure $\mix$ projected to $\calQ$.

\begin{lemma}
\label{lm:mixandmixQ}
We have that
$
\tran_1(\mix_\calQ , \mix) \leq O(\epsilon).
$
\end{lemma}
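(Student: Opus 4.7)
The plan is to use the obvious coupling that sends each $w$ in the support of $\mix$ to $\Pi_\calQ(w)$, and to bound the expected $L_1$-transportation cost of this coupling by splitting the mass of $\mix$ into (a) the part sitting inside $\calP = \sspan(A)\cap\calH$, which is near $\sspan(B)$ by Lemma~\ref{lm:vectorinB}(iv), and (b) the (small) part lying outside $\calP$, which we control by brute force.

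\textbf{Step 1 (isolating most of the mass).} By Lemma~\ref{lm:C}(ii), since $C \geq 5k^2/\epsilon$, we have $\mix(\calP) \geq \mix(\calH) \geq 1-\epsilon$, so the ``bad'' mass $\mix(\calP^c)$ is at most $\epsilon$.

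\textbf{Step 2 (good points project cheaply).} Fix $w \in \calP \cap \support(\mix) \subseteq \calP \cap \simplex^n$. Lemma~\ref{lm:vectorinB}(iv) gives $\|w - \Pi_B(w)\|_2 \leq \epsilon/\sqrt{n}$. By Cauchy--Schwarz, $\|w - \Pi_B(w)\|_1 \leq \sqrt{n}\,\|w - \Pi_B(w)\|_2 \leq \epsilon$. Because $w \in \simplex^n$ and $\Pi_B(w) \in \sspan(B)$ with $\Pi_B(w) = w + (\Pi_B(w)-w) \in \simplex^n + \B_1^n(\epsilon)$, we conclude $\Pi_B(w) \in \calQ$. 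Since $\Pi_\calQ$ is the $L_2$-closest point in $\calQ$,
$$\|w - \Pi_\calQ(w)\|_2 \leq \|w - \Pi_B(w)\|_2 \leq \epsilon/\sqrt{n},$$
and applying Cauchy--Schwarz once more gives $\|w - \Pi_\calQ(w)\|_1 \leq \epsilon$.

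\textbf{Step 3 (bad points contribute little in total).} For $w \in \calP^c \cap \support(\mix)$, we have $\|w\|_1 = 1$ and, by the very definition of $\calQ$, any point $y \in \calQ$ satisfies $\|y\|_1 \leq 1+\epsilon$; hence $\|w - \Pi_\calQ(w)\|_1 \leq 2+\epsilon$ by the triangle inequality.

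\textbf{Step 4 (combine).} Using the coupling $w \mapsto \Pi_\calQ(w)$,
\begin{align*}
\tran_1(\mix,\mix_\calQ)
&\leq \int_\calP \|w - \Pi_\calQ(w)\|_1 \, \d\mix(w) + \int_{\calP^c} \|w - \Pi_\calQ(w)\|_1 \, \d\mix(w) \\
&\leq \epsilon \cdot \mix(\calP) + (2+\epsilon)\cdot \mix(\calP^c) \leq \epsilon + (2+\epsilon)\epsilon = O(\epsilon).
\end{align*}
The only mildly delicate point is Step~2: we need a bound that is a genuine $L_1$-bound (not just $L_2$), and this is precisely where the ``nearly spherical'' property of $\sspan(B)$ encoded in Lemma~\ref{lm:vectorinB} pays off, converting the $L_2$-projection guarantee $\epsilon/\sqrt{n}$ into an $L_1$-bound $\epsilon$ with no extra $\sqrt{n}$-loss once we pass through $\Pi_B(w)$ as a witness that $\calQ$ contains a point close to $w$.
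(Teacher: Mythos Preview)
Your proof is correct and follows essentially the same approach as the paper: split the mass of $\mix$ according to whether it lies in $\calH$ (equivalently $\calP$, since $\support(\mix)\subseteq\sspan(A)$), use Lemma~\ref{lm:vectorinB}(iv) to show that good points $w$ have $\Pi_B(w)\in\calQ$ and hence $\|w-\Pi_\calQ(w)\|_1\le\epsilon$, and crudely bound the contribution of the at-most-$\epsilon$ bad mass by $2+\epsilon$ per point. The only cosmetic difference is that the paper states directly that $\Pi_\calQ(w)=\Pi_B(w)$ for good $w$ (since $\Pi_B(w)$ is already the nearest point in $\sspan(B)\supseteq\calQ$ and lies in $\calQ$), whereas you argue the slightly weaker $\|w-\Pi_\calQ(w)\|_2\le\|w-\Pi_B(w)\|_2$, which is equally sufficient.
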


\begin{proof}
For any measure $\mu$ and subset $S\subset \R^n$, let $\mu |_S$ be the measure $A$
restricted to $S$.
It is easy to see that
$$
\tran_1(\mix , \mix_\calQ) \leq \tran_1(\mix |_\calH, \Pi_\calQ(\mix |_\calH)) + \tran_1(\mix |_{\overline\calH} , \Pi_\calQ(\mix |_{\overline\calH}))
$$
where $\calH=[-C/n, C/n]^n$ (the hypercube used in Lemma~\ref{lm:C}).
\footnote{
Note that even if two measures are not probability measures, their transportation distance is still well defined
as long as both have the same total mass.
}
Note that even though the transportation distance is measured in $L_1$,
the projection is with respect to $L_2$ distance in this lemma.
We first bound the term $\tran_1(\mix |_{\overline\calH} , \Pi_\calQ(\mix |_{\overline\calH}))$
by coupling every point $p\in \simplex^n$ and $\Pi_\calQ(p)$ together.
By Lemma~\ref{lm:vectorinB} (iv), the $L_2$ distance from every point in $\calP=\sspan(A)\cap \simplex^n\cap \calH$
is at most $\epsilon/\sqrt{n}$ from $\sspan(B)$.
Hence, $\|p-\Pi_B(p)\|_1\leq \sqrt{n}\|p-\Pi_B(p)\|_2\leq \epsilon$
and $\|\Pi_B(p)\|_1\leq \|p\|_1+ \|p-\Pi_B(p)\|_1\leq 1+\epsilon$, which implies $\Pi_\calQ(p)=\Pi_B(p)$.
Thus the first term is at most $\epsilon$.

Now, we bound the second term.
For any point $p\in \simplex^n$, it is easy to see the $L_1$ distance from $p$
to $\Pi_\calQ(p)$ is at most $2+\epsilon$.
Since the total mass in $\mix |_{\overline\calH}$ is at most $\epsilon$,
$\tran_1(\mix |_{\overline\calH} , \Pi_\calQ(\mix |_{\overline\calH}))$ is at most $(2+\epsilon)\epsilon<3\epsilon$.
\end{proof}

\begin{lemma}
\label{lm:tmixQ}
Let
$
\epsilon_1=O\left(\frac{\epsilon^2}{\sqrt{k}C}\right)$.
Let $\tmix_\calQ$ be as defined in Reduction~\ref{spaceredn} and suppose $\tmix_B$
is such that
$
\tran_1(\mix_B, \tmix_B) \leq \epsilon_1$.
Then, it holds that
$\tran_1(\mix_\calQ, \tmix_\calQ) \leq O(\epsilon)$.
\end{lemma}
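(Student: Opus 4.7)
The plan is to reduce the claim to a pointwise Lipschitz-type bound on the $L_2$-projection $\Pi_\calQ$, exploiting the fact that $\calQ\subseteq\sspan(B)$ together with the ``nearly spherical'' property of $L_1$-balls in $\sspan(B)$ established in Lemma~\ref{lm:vectorinB}(ii).

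\textbf{Step 1: rewrite $\mix_\calQ$.} Because $\calQ\subseteq\sspan(B)$ and we are taking $L_2$-projections onto a convex set contained in a subspace, the projection factorizes: for every $x\in\R^n$, $\Pi_\calQ(x)=\Pi_\calQ(\Pi_{\sspan(B)}(x))$ (the standard Pythagoras argument: $\|x-y\|_2^2=\|x-\Pi_{\sspan(B)}(x)\|_2^2+\|\Pi_{\sspan(B)}(x)-y\|_2^2$ for any $y\in\sspan(B)$). Consequently, $\mix_\calQ=\Pi_\calQ(\mix)=\Pi_\calQ(\mix_B)$. Thus both $\mix_\calQ$ and $\tmix_\calQ$ are pushforwards of measures supported on $\sspan(B)$ under the same map $\Pi_\calQ$.

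\textbf{Step 2: bound the pointwise $L_1$-distortion of $\Pi_\calQ$ on $\sspan(B)$.} Let $x,y\in\sspan(B)$. Since $L_2$-projection onto a closed convex set is 1-Lipschitz in $L_2$ and both $\Pi_\calQ(x),\Pi_\calQ(y)\in\calQ\subseteq\sspan(B)$, we get $\Pi_\calQ(x)-\Pi_\calQ(y)\in\sspan(B)$ with $\|\Pi_\calQ(x)-\Pi_\calQ(y)\|_2\leq \|x-y\|_2$. Using the generic inequality $\|v\|_1\leq\sqrt{n}\|v\|_2$ on the left and Lemma~\ref{lm:vectorinB}(ii) on the right (applied to the vector $x-y\in\sspan(B)$, which gives $\|x-y\|_2\leq L\|x-y\|_1$), we obtain
\[
\|\Pi_\calQ(x)-\Pi_\calQ(y)\|_1 \;\leq\; \sqrt{n}\,\|\Pi_\calQ(x)-\Pi_\calQ(y)\|_2 \;\leq\; \sqrt{n}\,\|x-y\|_2 \;\leq\; \sqrt{n}\,L\,\|x-y\|_1.
\]

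\textbf{Step 3: integrate via a coupling.} Take an optimal $L_1$-coupling $\mu$ of $\mix_B$ and $\tmix_B$, so that $\int \|x-y\|_1\,\d\mu(x,y)\leq \epsilon_1$; this coupling is supported on $\sspan(B)\times\sspan(B)$. The pushforward $(\Pi_\calQ\otimes\Pi_\calQ)_{\#}\mu$ is then a valid coupling of $\mix_\calQ=\Pi_\calQ\mix_B$ and $\tmix_\calQ=\Pi_\calQ\tmix_B$. Applying the pointwise bound from Step~2 under this coupling yields
\[
\tran_1(\mix_\calQ,\tmix_\calQ)\leq \int \|\Pi_\calQ(x)-\Pi_\calQ(y)\|_1\,\d\mu(x,y) \leq \sqrt{n}\,L\cdot\epsilon_1.
\]
Plugging in $L=O(\sqrt{k/n}\cdot C/\epsilon)$ from \eqref{eq:parameters1} gives $\sqrt{n}\,L=O(\sqrt{k}\,C/\epsilon)$, which is independent of $n$. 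With $\epsilon_1=O(\epsilon^2/(\sqrt{k}\,C))$ as in the hypothesis, we conclude $\tran_1(\mix_\calQ,\tmix_\calQ)\leq O(\epsilon)$.

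The main conceptual obstacle is Step~2: one needs to convert between $L_1$ and $L_2$ on $\sspan(B)$ without paying the worst-case $\sqrt{n}$ distortion in \emph{both} directions. This is exactly where the careful construction of $B$ via the John ellipsoid pays off: Lemma~\ref{lm:vectorinB}(ii) supplies an $n$-free bound $\|v\|_2\leq L\|v\|_1$ for $v\in\sspan(B)$, which cancels against the trivial $\|v\|_1\leq\sqrt{n}\|v\|_2$ to leave only a $\sqrt{n}\,L=O(\sqrt{k}\,C/\epsilon)$ factor, absorbed by the choice of $\epsilon_1$.
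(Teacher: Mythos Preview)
Your proof is correct and follows essentially the same route as the paper: you factorize $\mix_\calQ=\Pi_\calQ(\mix_B)$, use that the $L_2$-projection onto the convex set $\calQ$ is a contraction, and then convert between $L_1$ and $L_2$ via $\|v\|_1\le\sqrt{n}\|v\|_2$ and Lemma~\ref{lm:vectorinB}(ii) to get the final bound $\tran_1(\mix_\calQ,\tmix_\calQ)\le \sqrt{n}\,L\,\tran_1(\mix_B,\tmix_B)$. The only cosmetic difference is that you unpack the argument as a pointwise Lipschitz estimate integrated against a coupling, whereas the paper phrases the same three inequalities at the level of transportation distances (invoking Lemma~\ref{lm:proj3}(i)); the underlying computation is identical.
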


\begin{proof}
First, we notice that
$\mix_\calQ=\Pi_\calQ(\mix) = \Pi_\calQ (\Pi_{\sspan(B)} (\mix))=\Pi_\calQ(\mix_B)$.
So, we have
$$
\tran_2(\mix_\calQ, \tmix_\calQ)
=\tran_2(\Pi_\calQ(\mix_B), \Pi_\calQ(\tmix_B))
\leq\tran_2(\mix_B, \tmix_B),
$$
where the last inequality holds since $L_2$-projection to a convex set is a contraction
\footnote{
This may not be true for $L_1$ projections.
}
and Lemma~\ref{lm:proj3} (i).
By Lemma~\ref{lm:vectorinB} (ii),
$$
\tran_1(\mix_\calQ, \tmix_\calQ)
\leq \sqrt{n}\tran_2(\mix_\calQ, \tmix_\calQ)
\leq \sqrt{n}\tran_2(\mix_B, \tmix_B)
\leq  \sqrt{n}\cdot L\cdot  \tran_1(\mix_B, \tmix_B).
$$
Plugging in the value $L=O(\sqrt{k/n}\cdot C/\epsilon)$, we prove the lemma.
\end{proof}

\begin{proofof}{part (ii) of Theorem~\ref{thm:reduction}}
By Lemmas~\ref{lm:mixandmixQ} and~\ref{lm:tmixQ}, we have
$\tran_1(\mix, \tmix_\calQ)\leq \tran_1(\mix,  \mix_\calQ)+\tran_1(\mix_\calQ, \tmix_\calQ)
\leq O(\epsilon)$.
By considering the coupling between all points in $\calQ$ and
the corresponding points in $\support(\tmix)$, we can see that
$\tmix$ is the probability measure supported in $\simplex^n$ that
has the closest $L_1$-transportation distance to $\tmix_\calQ$.
Hence, $\tran_1(\tmix, \tmix_\calQ)\leq \tran_1(\mix,\tmix_\calQ)\leq O(\epsilon)$.
We conclude the proof by noting that
$\tran_1(\mix, \tmix)\leq \tran_1(\mix, \tmix_\calQ)
+\tran_1(\tmix_\calQ,\tmix)
\leq O(\epsilon)$.
\end{proofof}

\vspace{-2ex}
\paragraph{\boldmath $A$ is unknown.}
We now remove the assumption that $A$ is known.
First, we obtain a close approximation of $A$ using $O(k^4 n^{3}\log n/\epsilon^6)$ 2-snapshot samples
as follows. 
We choose a Poisson random variable $N_2$ with $\Exp[N_2]=O(k^4 n^{3}\log n/\epsilon^6)$,
choose $N_2$ independent 2-snapshots, and construct a symmetric $n\times n$ matrix $\tA$
where
$\tA_{ii}$ is the frequency of the 2-snapshot $(i,i)$, for all $i\in [n]$, and
$\tA_{ij}=\tA_{ji}$ is half of the total frequency of the 2-snapshots $(i,j)$ and $(j,i)$,
for all $i\ne j$.

\begin{lemma}
\label{lm:tA}
The matrix $\tA$ obtained above with $\Exp[N_2]=O\bigl(\frac{k^4 n^{3}\log n}{\epsilon^6}\bigr)$
satisfies $\|A-\tA\|\leq O\left(\frac{\epsilon^3}{k^2n^{3/2}}\right)$.
\end{lemma}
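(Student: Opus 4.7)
The plan is to apply Theorem~\ref{thm:vu2005spectral} (Vu's spectral-norm bound for symmetric matrices with independent entries) to the centered error matrix $M:=\tA-A$. The first step is the classical Poissonization trick: because $N_2\sim\text{Poisson}(N)$ with $N:=\Exp[N_2]$, the counts $C_{ij}$ of the ordered 2-snapshots $(i,j)$ become \emph{independent} Poisson random variables with means $N A_{ij}$. Since $\tA_{ii}=C_{ii}/N$ and, for $i<j$, $\tA_{ij}=(C_{ij}+C_{ji})/(2N)$, the entries of $M$ on and above the diagonal are mutually independent zero-mean random variables, which is precisely the hypothesis of Theorem~\ref{thm:vu2005spectral}.

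Next I bound the variance and magnitude parameters. Isotropy of $\mix$ gives $r_i\leq 2/n$, so $A_{ij}=\Exp_{p\sim\mix}[p_i p_j]\leq r_i\leq 2/n$. Hence $\Var(M_{ij})\leq A_{ij}/N\leq 2/(nN)$, so I may take $\sigma=O(1/\sqrt{nN})$. A Chernoff-style tail inequality for Poisson variables gives $\Pr[|M_{ij}|>\kappa]\leq 1/n^{10}$ for $\kappa=O\bigl(\sqrt{A_{ij}\log n/N}+\log n/N\bigr)=O(\sigma\sqrt{\log n})$. A union bound over the $O(n^2)$ distinct entries shows that with probability $1-1/n^{8}$, every $|M_{ij}|\leq K:=\kappa$; a standard truncation (replace $M_{ij}$ by $M_{ij}\mathbf{1}[|M_{ij}|\leq K]$, re-center, and bound the spectral-norm effect of this perturbation by its Frobenius norm on the rare event) lets me treat the entries as a.s.\ bounded by $K$ at the cost of an extra $1/\poly(n)$ term in the failure probability.

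Substituting $N=\Theta(k^4n^3\log n/\e^6)$ into the above gives $2\sigma\sqrt{n}=O(\e^3/(k^2n^{3/2}\sqrt{\log n}))$, and Vu's hypothesis $\sigma\geq C^2K\log^2 n/\sqrt{n}$ reduces to $\log^{5/2}n=O(\sqrt{n})$, which holds for $n$ large enough. The lower-order term $C(K\sigma)^{1/2}n^{1/4}\log n$ in Vu's bound is smaller than $2\sigma\sqrt{n}$ by a polynomial factor in $n$, so Theorem~\ref{thm:vu2005spectral} delivers $\|M\|\leq O(\sigma\sqrt{n})=O(\e^3/(k^2n^{3/2}))$ with probability $1-n^{-c}$. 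The main technical obstacle is the truncation step: the Poisson-shifted entries are genuinely unbounded, so one must verify that the rare ``spiky'' configurations contribute nothing to the spectral norm beyond the advertised failure probability; the union-bound-plus-Frobenius bookkeeping sketched above is the cleanest way I know to handle this cleanly without destroying the entrywise independence required by Vu's theorem.
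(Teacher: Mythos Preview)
Your proposal is correct and follows essentially the same approach as the paper: Poissonization to obtain entrywise independence, the variance bound $A_{ij}\le O(1/n)$ via isotropy, a truncation/conditioning step to meet the a.s.\ magnitude hypothesis of Vu's theorem, and then an application of Theorem~\ref{thm:vu2005spectral}. The only cosmetic differences are that the paper works with the scaled matrix $D=N_2(\tA-A)$ and uses Bernstein's inequality (Proposition~\ref{prop:bernstein}) for the tail bound, while you work directly with $M=\tA-A$ and use the Poisson tail; the subsequent bookkeeping is the same.
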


We find the basis $\tB$ as described in Reduction~\ref{spaceredn},
except that we use $\tA$ instead of $A$.
Since $\tB$ satisfies all properties in Lemma~\ref{lm:vectorinB},
the algorithms and analysis in Sections~\ref{sec:directlearn}, \ref{subsec:project1d}
and~\ref{subsec:reconstruct} continue to work.
Suppose that we have an estimate
$\tmix_{\tB}$ of $\mix_{\tB}=\Pi_{\tB}(\mix)$
such that
$\tran_1(\tmix_{\tB}, \mix_{\tB})\leq \epsilon_1$.
We project $\tmix_{\tB}$ to
$\tcalQ= (1+\epsilon)\simplex^n \cap \sspan(\tB)$ to obtain
$\tmix_\tcalQ$.
The same proof as that of Lemma~\ref{lm:tmixQ} shows that
$
\tran_1(\mix_\tcalQ, \tmix_\tcalQ) \leq O(\epsilon).
$
So the only remaining task is to prove an analogue of Lemma~\ref{lm:mixandmixQ} showing that
$\mix_\tcalQ$ is close to the original mixture $\mix$. 

\begin{lemma}
\label{lm:mixandmixQ2}
We have that
$
\tran_1(\mix_\tcalQ , \mix) \leq O(\epsilon).
$
\end{lemma}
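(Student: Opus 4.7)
The plan is to mirror the proof of Lemma~\ref{lm:mixandmixQ}, with the extra task of bridging $\sspan(A)$ (which contains $\support(\mix)$) and $\sspan(\tA)$ (which contains $\sspan(\tB)$). First decompose
$\tran_1(\mix, \mix_\tcalQ) \leq \tran_1(\mix|_\calH, \Pi_{\tcalQ}(\mix|_\calH)) + \tran_1(\mix|_{\overline\calH}, \Pi_{\tcalQ}(\mix|_{\overline\calH}))$.
The second summand is $O(\epsilon)$ exactly as in Lemma~\ref{lm:mixandmixQ}: by Lemma~\ref{lm:C}, $\mix(\overline\calH)\leq\epsilon$, and any two points in $\simplex^n\cup\tcalQ$ are within $L_1$-distance $O(1)$.

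For the first summand, it suffices to show that $\|p-\Pi_{\tB}(p)\|_2 \leq O(\epsilon/\sqrt{n})$ for every $p\in\support(\mix)\cap\calH$. The analogue of Lemma~\ref{lm:vectorinB}(ii) for $\tB$ then yields $\|p-\Pi_{\tB}(p)\|_1\leq O(\epsilon)$, hence $\|\Pi_{\tB}(p)\|_1\in[1-O(\epsilon),1+O(\epsilon)]$; this places $\Pi_{\tB}(p)$ in $\tcalQ$ and forces $\Pi_{\tcalQ}(p)=\Pi_{\tB}(p)$ (since $L_2$-projection onto $\tcalQ\subseteq\sspan(\tB)$ factors through $\Pi_{\tB}$). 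Coupling each such $p$ with $\Pi_{\tcalQ}(p)$ then bounds the first summand by $O(\epsilon)$.

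To establish the $L_2$-bound, set $q=\Pi_{\sspan(\tA)}(p)$. Since $p-q\perp\sspan(\tA)\supseteq\sspan(\tB)$, we have $\Pi_{\tB}(p)=\Pi_{\tB}(q)$, and Pythagoras gives
$\|p-\Pi_{\tB}(p)\|_2 \leq \|p-q\|_2+\|q-\Pi_{\tB}(q)\|_2$.
For $\|p-q\|_2$, I would apply Wedin's theorem (Theorem~\ref{thm:wedin}) to $A$ and $\tA$ with the perturbation bound $\|A-\tA\|\leq O(\epsilon^3/(k^2 n^{3/2}))$ from Lemma~\ref{lm:tA}, using the singular-value gap implicit in the $\epsilon/\sqrt{n}$ threshold of the basis construction, to conclude $\|\Pi_{\sspan(A)}-\Pi_{\sspan(\tA)}\|\leq O(\epsilon/\sqrt{n})$ on the relevant subspace, giving $\|p-q\|_2\leq O(\epsilon/\sqrt{n})\cdot\|p\|_2\leq O(\epsilon/\sqrt{n})$. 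For $\|q-\Pi_{\tB}(q)\|_2$, observe that $\|q\|_\infty\leq\|p\|_\infty+\|p-q\|_2=O(C/n)$, so $q$ lies in a slight enlargement of $\sspan(\tA)\cap\calH$; the analogue of Lemma~\ref{lm:vectorinB}(iv) for $\tB$---immediate because the construction of $\tB$ is identical to that of $B$ with $\tA$ in place of $A$---then gives $\|q-\Pi_{\tB}(q)\|_2\leq O(\epsilon/\sqrt{n})$.

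The main obstacle is the Wedin step: we have no a priori lower bound on $\sigma_k(A)$, so standard Wedin would not immediately apply. The resolution is to exploit the threshold $\|e_i\|_2\geq\epsilon/\sqrt{n}$ in the construction of $B$ and $\tB$, which retains only ``genuinely large'' principal axes of the John ellipsoid and effectively implements a singular-value truncation. The perturbation bound in Lemma~\ref{lm:tA} has been calibrated precisely so that the retained directions---and hence $\sspan(B)$ versus $\sspan(\tB)$---are stable under $A\mapsto\tA$; quantifying this stability (i.e., showing that the axes of the John ellipsoid of $\calH\cap\sspan(\tA)$ above the threshold span a subspace $O(\epsilon/\sqrt{n})$-close to $\sspan(A)$ in operator norm) is the core technical content.
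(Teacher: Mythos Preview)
Your decomposition and the treatment of the $\overline{\calH}$ part are fine, and you correctly isolate the real difficulty: controlling the distance from $\sspan(A)$ to $\sspan(\tB)$ without any lower bound on $\sigma_k(A)$. However, the proposed resolution does not work. The John-ellipsoid threshold $\|e_i\|_2\geq \epsilon/\sqrt{n}$ is a statement about the \emph{geometry of} $\sspan(A)\cap\calH$; it carries no information about the eigenstructure of $A$ or about $\mix$. In particular, a direction $v_k$ with a tiny eigenvalue $\lambda_k$ can still yield a long John-ellipsoid axis (the slab $\sspan(A)\cap\calH$ can be wide in that direction), so the retained axes of $\calE$ need not be stable under $A\mapsto\tA$. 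Thus there is no mechanism by which ``the axes of the John ellipsoid above the threshold span a subspace $O(\epsilon/\sqrt n)$-close to $\sspan(A)$,'' and the core technical content you defer to is not achievable along these lines. A secondary issue: even granting $\|p-q\|_2\leq O(\epsilon/\sqrt{n})$, you would only get $\|q\|_\infty\leq C/n+O(\epsilon/\sqrt{n})$, which places $q$ in an enlargement of $\calH$ by a factor $\Theta(\sqrt{n}\,\epsilon/C)$; the analogue of Lemma~\ref{lm:vectorinB}(iv) then scales by the same factor and no longer gives $O(\epsilon/\sqrt{n})$.

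The paper's argument avoids both problems by abandoning the pointwise bound and working spectrally with $A$ itself. Writing $A=\sum_i \lambda_i v_iv_i^T$ and using $\lambda_i=\int\langle v_i,x\rangle^2\,\mix(\d x)$, one has $\int \|x-\Pi_{A'}x\|_2^2\,\mix(\d x)=\sum_{i\geq j}\lambda_i$, so truncating at a threshold $\gamma=\epsilon^2/(kn)$ gives an \emph{average} bound which, via Cauchy--Schwarz, yields $\tran_2(\mix,\Pi_{A'}\mix)\leq O(\epsilon/\sqrt{n})$. A pigeonhole step among the eigenvalues below $\gamma$ manufactures a gap $\lambda_{j-1}-\lambda_j\geq \gamma/k$, which is exactly what Wedin needs to compare $\Pi_{A'}$ with the analogous truncation $\Pi_{\tA'}$. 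This gives $\tran_2(\mix,\Pi_{\tA}\mix)\leq O(\epsilon/\sqrt n)$, hence $\tran_1(\mix,\mix_{\tA})\leq O(\epsilon)$; only then is the proof of Lemma~\ref{lm:mixandmixQ} invoked, now applied to $\mix_{\tA}$ (which genuinely lives in $\sspan(\tA)$) to pass to $\mix_{\tcalQ}$. The missing idea in your proposal is precisely this spectral-truncation-plus-averaging route; the John-ellipsoid construction is used only downstream, after one has already landed in $\sspan(\tA)$.
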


\section{Learning arbitrary mixtures in a $k$-dimensional subspace}
\label{sec:directlearn}

Suppose that $\mix$ is an arbitrary distribution supported on a $k$-dimensional subspace
$\sspan(A)$ in $\mathbb{R}^{n}$.
It is known that in order to learn $\mix$ within transportation distance $\epsilon$, it is
necessary to use $K$-snapshot samples with $K=\Omega(1/\epsilon)$
\cite{rabani2014learning}, even in the 1-dimensional case.
In this section, we generalize the result to higher dimensions.
%
By the reduction in Theorem~\ref{thm:reduction}, we only need to specify how to
learn a good approximation $\tmix_B$ of $\mix_B$ such that $\tran_1(\mix_B, \tmix_B)\leq \epsilon_1$.
This can be done as follows.
$B=\{b_1,\ldots, b_h\}$ is an $n\times h$ matrix
(Recall that $B$ is an orthonormal basis for $\sspan(B)$).
Let $b'_1,\ldots, b'_n$ be columns of $B^T$.
We use the following parameters in this section:
$C=O(k^2/\epsilon)$ as suggested in Lemma~\ref{lm:C},
$\epsilon_1$ and $L$ are as in \eqref{eq:parameters1}, and
\begin{align}
\label{eq:parameters3}
\epsilon_2= \frac{\epsilon_1}{L\sqrt{n}}=\left(\frac{\epsilon}{k}\right)^5, \quad
K=O\left(\frac{h}{\epsilon_2^2} \log \frac{h}{\epsilon_2}\right), \quad \text{ and } \quad
N=O\left(\frac{1}{\epsilon_2}\right)^h.
\end{align}

Suppose we take a $K$-snapshot sample $\sample=\{\ell_1,\ldots, \ell_K\}$ from $\mix$,
where $\ell_i \in [n]$ for $i=1,\ldots, K$.
Let
$
\ep(\sample)=\frac{1}{K}\sum_{i=1}^K b'_{\ell_i}
$
(which is an $h$-vector).
Suppose we have $N$ $K$-snapshot samples $\{\sample_1,\ldots, \sample_N\}$.
We define the empirical measure
$
\ep=\frac{1}{N}\sum_{i=1}^N \delta(\ep(\sample_i))$,
where $\delta()$ is the Dirac delta measure.
Our estimation for $\mix_B$ is the image measure
$
\tmix_B = B\ep=\frac{1}{N}\sum_{i=1}^N \delta(B\ep(\sample_i))$.
Note that $\tmix_B$ is indeed a discrete measure supported on $\R^n$ as $B\ep(\sample_i)$ is an $n$-vector.
We can also see that
$\ep=B^T \tmix_B$ since $B^TB=I$.

\vspace{-1ex}
\paragraph{Analysis.}
First, we define $\mu$ to be the measure $\mix_B$, represented in basis $B$.
Hence, $\mu$ is supported over $\R^h$.
Formally,
$
\mu = B^T \mix_B = B^T \Pi_B \mix = B^T BB^T \mix = B^T \mix$.
Now, we  show that $\ep$ is a good estimation of $\mu$.
For this purpose, we introduce an intermediate measure $\mu_N$ defined as follows:
Suppose the $K$-snapshot sample $\sample_i$ is obtained from distribution $s_i\in \sspan(A)\cap \simplex^n$.
Note that $s_i$ is an $n$-vector and let
$\mix_N=\sum^N_{i=1} \delta(s_i)$ and $\mu_N = B^T \mix_N$.
First, we show $\mu_N$ and $\ep$ are close.

\begin{lemma}
\label{lm:bound1}
Let $\mu_N$ and $\ep$ be defined as above and $K=O(\frac{h}{\epsilon_2^2} \log \frac{h}{\epsilon_2})$. Then,
$
\tran_2(\mu_N,\ep)\leq O(\epsilon_2 L).
$
\end{lemma}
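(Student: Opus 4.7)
The plan is to exhibit the natural coupling between $\mu_N$ and $\ep$ that pairs the atom $\delta(B^T s_i)$ with the atom $\delta(\ep(\sample_i))$ for each $i\in [N]$. This coupling immediately gives
\[
\tran_2(\mu_N,\ep)\leq \frac{1}{N}\sum_{i=1}^N X_i, \quad\text{where}\quad X_i := \|B^T s_i - \ep(\sample_i)\|_2,
\]
so it suffices to show that $\frac{1}{N}\sum_i X_i \leq O(\epsilon_2 L)$ with high probability.

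Conditional on $s_i$, the sample $\sample_i=\{\ell_1,\ldots,\ell_K\}$ consists of $K$ iid draws from $s_i$, so $\ep(\sample_i)=\frac{1}{K}\sum_{j=1}^K b'_{\ell_j}$ is the empirical mean of $K$ iid random vectors in $\R^h$ with expectation $\sum_\ell s_i(\ell) b'_\ell = B^T s_i$. For each coordinate $j\in[h]$, the scalar entry $(b'_{\ell_j})_j = (b_j)_{\ell_j}$ lies in $[-L,L]$, because $b_j$ is a unit vector in $\sspan(B)$ and hence $\|b_j\|_\infty \leq L$ by Lemma~\ref{lm:vectorinB}(i). Therefore $\Var[(\ep(\sample_i))_j]\leq L^2/K$, and summing over the $h$ coordinates gives $\Exp[X_i^2]\leq hL^2/K$, whence $\Exp[X_i]\leq L\sqrt{h/K}$ by Jensen's inequality. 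Plugging in the value $K=\Theta(\frac{h}{\epsilon_2^2}\log\frac{h}{\epsilon_2})$ from~\eqref{eq:parameters3} yields $\Exp[X_i]\leq O(\epsilon_2 L)$, and by linearity the same bound holds for $\Exp[\frac{1}{N}\sum_i X_i]$.

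To promote this expectation bound to a high-probability bound, I would apply Hoeffding's inequality (Proposition~\ref{prop:chernoff}) to $\frac{1}{N}\sum_i X_i$, using the deterministic upper bound $X_i\leq \|B^T s_i\|_2+\|\ep(\sample_i)\|_2\leq 2L\sqrt{h}$ together with the conditional independence of the $X_i$'s. Since $N=(1/\epsilon_2)^h$ is exponentially large in $h$, the probability that the average deviates from its expectation by more than $\epsilon_2 L$ is negligible. Combining with the previous step gives $\tran_2(\mu_N,\ep)\leq \Exp[\frac{1}{N}\sum_i X_i] + O(\epsilon_2 L) = O(\epsilon_2 L)$.

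The main point to be careful about is to work with the \emph{average} $\frac{1}{N}\sum_i X_i$ rather than try to control each $X_i$ separately. A per-sample high-probability bound followed by a union bound over all $N=(1/\epsilon_2)^h$ indices would require $K$ to scale with $\log N = h\log(1/\epsilon_2)$, producing an extra factor of $h$ and forcing $K\sim h^2/\epsilon_2^2$—worse than the claimed $K=O(\frac{h}{\epsilon_2^2}\log\frac{h}{\epsilon_2})$. Controlling only the expected value of a single $X_i$ and then concentrating the sum via Hoeffding sidesteps this overhead, which is why the sample-size parameters in~\eqref{eq:parameters3} come out as stated.
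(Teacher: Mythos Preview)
Your argument is correct and rests on the same coupling as the paper (pair $B^T s_i$ with $\ep(\sample_i)$). The two routes diverge only in how the per-sample error is controlled and then aggregated. The paper applies Chernoff--Hoeffding to each of the $h$ coordinates to get $\|\ep(\sample_i)-B^T s_i\|_\infty\leq \epsilon_2 L/\sqrt{h}$ with probability $\geq 1-\epsilon_2/2$ (this is precisely where the $\log(h/\epsilon_2)$ factor in $K$ is spent), and then splits the sum into the $(1-\epsilon_2)N$ ``good'' samples contributing $\leq\epsilon_2 L$ each and the $\leq\epsilon_2 N$ ``bad'' samples contributing $\leq O(L\sqrt{h})$ each. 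You instead bound only the expectation $\Exp[X_i]\leq L\sqrt{h/K}$ via the variance---which in fact does \emph{not} require the $\log$ factor in $K$---and concentrate the empirical average by Hoeffding over $N$. Both routes avoid a union bound over the $N$ samples, so the concern in your closing paragraph applies equally to the paper's argument. One minor caveat: for $h\in\{1,2\}$ your Hoeffding tail $2\exp(-N\epsilon_2^2/(2h))$ with $N=(1/\epsilon_2)^h$ is not actually small; this is harmless, since Markov's inequality on $\frac{1}{N}\sum_i X_i$ already delivers the constant-probability guarantee that Theorem~\ref{thm:kspace} ultimately needs.
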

\begin{proof}
We simply couple $B^T s_i \in \support(\mu_N)$ and $\ep(\sample_i)\in\support(\ep')$ together.
Conditioning on $s_i$, we can see that
$
\Exp[\ep(\sample_i)] = B^T s_i$.
Recall from Lemma~\ref{lm:vectorinB} that the magnitude of every entry of $B$ is at most $L$.
By a standard application of the
Chernoff-Hoeffding bound and a union bound over $h$ coordinates, we can see that
$
\Pr[\|\ep(\sample_i)-B^Ts_i\|_\infty > \epsilon_2 L/\sqrt{h} ] < h e^{-2 \epsilon_2^2 K/h} \leq \epsilon_2/2$.
Hence, with high probability, for at least
$(1-\epsilon_2)N$ samples $\sample_i$, we have
$\|\ep(\sample_i)-B^Ts_i\|_2 < \epsilon_2 L$.
Moreover, $\|\ep(\sample_i)-B^Ts_i\|_2 \leq O(L\sqrt{h})$ for all $i$.
So,
$
\tran_2(\mu_N,\ep)\leq (1-\epsilon_2) \cdot \epsilon_2 L + \epsilon_2 \cdot O(L\sqrt{h}) \leq O(\epsilon_2 L).
$
\end{proof}

\begin{lemma}
\label{lm:bound2}
Let $\mu$ and $\mu_N$ be defined as above and $N=O(1/\epsilon_2)^h$.
Then, with probability at least $1-\epsilon_2$, it holds that
$
\tran_2(\mu,\mu_N)\leq O(\epsilon_2 L).
$
\end{lemma}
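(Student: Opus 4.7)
The plan is to recognize $\mu_N$ as the empirical measure of $N$ i.i.d.\ samples from $\mu$ and then bound the resulting $L_2$-Wasserstein error by discretizing the (small) support of $\mu$. Because the underlying distributions $s_1,\ldots,s_N$ are i.i.d.\ draws from $\mix$, their projections $B^T s_i$ are i.i.d.\ draws from $\mu = B^T \mix$, so $\mu_N = \frac{1}{N}\sum_i \delta(B^T s_i)$ is the empirical measure of $N$ i.i.d.\ samples from $\mu$. By Lemma~\ref{lm:vectorinB}(iii), $\|B^T x\|_2 = \|\Pi_B(x)\|_2 \leq L$ for every $x\in\simplex^n$, so both $\mu$ and $\mu_N$ are supported on the $L_2$-ball $\B_2^h(L)\subset\R^h$.

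The next step is to pick an $\eta$-net $\calN\subset \B_2^h(L)$ with $\eta=\Theta(\epsilon_2 L)$. A standard volume bound gives $M:=|\calN|=(O(1/\epsilon_2))^h$. Let $\rho$ denote the nearest-net-point map, and set $\mu'=\rho\mu$, $\mu_N'=\rho\mu_N$. Coupling every point with its image shows $\tran_2(\mu,\mu')\leq \eta = O(\epsilon_2 L)$ and similarly $\tran_2(\mu_N,\mu_N')\leq O(\epsilon_2 L)$. By the triangle inequality it is enough to bound $\tran_2(\mu',\mu_N')$.

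Writing $p_c=\mu'(\{c\})$ and $q_c=\mu_N'(\{c\})$ for $c\in\calN$, each $q_c$ is the empirical frequency of $N$ i.i.d.\ Bernoulli-$p_c$ trials. Bernstein's inequality (Proposition~\ref{prop:bernstein}) together with a union bound over the $M$ cells gives, with high probability,
$$|p_c-q_c| = O\!\left(\sqrt{\tfrac{p_c\log M}{N}} + \tfrac{\log M}{N}\right)\quad\text{for every } c.$$
Summing and invoking Cauchy–Schwarz ($\sum_c\sqrt{p_c}\leq\sqrt{M}$) yields $\sum_c|p_c-q_c| = O(\sqrt{M\log M/N})$, which is $O(\epsilon_2)$ for $N=(C/\epsilon_2)^h$ once the constant $C$ is chosen to absorb the factor $\log M = O(h\log(1/\epsilon_2))$. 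Since $\support(\mu')\cup\support(\mu_N')\subseteq\B_2^h(L)$ has $L_2$-diameter at most $2L$, any transport plan that moves the mass discrepancy between the atoms incurs cost at most $2L\cdot\tfrac{1}{2}\sum_c|p_c-q_c| = O(\epsilon_2 L)$. Combining the three contributions by triangle inequality gives $\tran_2(\mu,\mu_N)\leq O(\epsilon_2 L)$ as claimed.

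The main obstacle is the parameter balancing in the concentration step: the logarithmic slack from the union bound over the $(1/\epsilon_2)^h$ net points must be absorbed into the constant hidden in $N=O(1/\epsilon_2)^h$, and one must verify that the $L_1$-level cell-mass error, after being scaled by the diameter $L$, still collapses to the target $O(\epsilon_2 L)$. A sharper dyadic/hierarchical decomposition of $\B_2^h(L)$ in the spirit of Dudley's entropy bound would eliminate the logarithmic factor and produce the classical empirical-measure rate $\tran_2(\mu,\mu_N) = O(L\,N^{-1/h})$, but the simple net-based argument above already suffices for the stated bound once constants are set appropriately.
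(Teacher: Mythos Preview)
Your argument is correct in spirit but takes a different route from the paper. The paper's proof is a one-line black-box citation of known convergence rates for empirical measures in Wasserstein distance (Alexander, Yukich), which directly give $\tran_2(\mu,\mu_N)=O(L/N^{1/h})$ with high probability; plugging $N=(c/\epsilon_2)^h$ then yields the claim. You instead unpack an elementary proof via a single-scale $\eta$-net and cellwise Bernstein concentration. This has the advantage of being self-contained and of making the dependence on the support radius $L$ explicit.

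One quantitative caveat: your single-scale argument does \emph{not} quite hit $N=(O(1/\epsilon_2))^h$ with an $\epsilon_2$-independent implied constant. After Cauchy--Schwarz you need $\sqrt{M\log M/N}=O(\epsilon_2)$ with $M=(c_1/\epsilon_2)^h$, which forces $N\gtrsim (c_1/\epsilon_2)^h\cdot h\log(1/\epsilon_2)/\epsilon_2^2$, i.e.\ $N=(1/\epsilon_2)^{h+2}$ up to logs rather than $(1/\epsilon_2)^h$. So the sentence ``the simple net-based argument above already suffices for the stated bound once constants are set appropriately'' overstates things slightly. Your own final remark is the right fix: a dyadic (Dudley-type) chaining over nets at geometrically decreasing scales recovers the optimal rate $O(L\,N^{-1/h})$, which is exactly what the references the paper cites prove. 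For the purposes of Theorem~\ref{thm:kspace} this gap is harmless, since both bounds are absorbed into $(k/\epsilon)^{O(k)}$, but if you want the lemma exactly as stated you should either carry out the chaining or cite the empirical-measure literature as the paper does.
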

\begin{proof}
$\mu_N$ is the empirical measure of $\mu$.
It is well known that $\mu_N \rightarrow \mu$ almost surely in the topology of weak convergence.
In particular, the rate of convergence, in terms of transportation distance, can be bounded as follows
\cite{alexander1984probability,yukich1989optimal}:
for any $\epsilon_2$, for $N>C$ for some large constant $C$ depending only on $\epsilon_2$,
with probability at least $1-\epsilon_2$, we have
$\tran_2(\mu_N, \mu)\leq O\left(L/N^{1/h}\right)$.
Plugging $N=O(1/\epsilon_2)^h$ yields the result.
\end{proof}

Combining Lemmas~\ref{lm:bound1} and~\ref{lm:bound2},
we obtain
$
\tran_2(\mu, \ep) = \tran_2(B^T\mix_B, B^T\tmix_B)\leq O(\epsilon_2 L)$.
Viewing $B$ as an operator from $L_2(\R^h)$ to $L_1(\R^n)$,
its operator norm is
$$
\|B\|_{2\rightarrow 1} = \sup_{x\in \R^h} \frac{\|Bx\|_1}{\|x\|_2}
=\sup_{x\in \R^h} \frac{\|Bx\|_1}{\|Bx\|_2} \leq \sqrt{n}.
$$
So by Lemma~\ref{lm:proj3}, 
$\tran_1(\mix_B, \tmix_B)
=\tran_1(B \mu, B\ep)\leq
\|B\|_{2\rightarrow 1} \tran_2(\mu, \ep)
\leq O(\epsilon_2 L \sqrt{n}) \leq \epsilon_1$.

Combining with Theorem~\ref{thm:reduction}, we obtain the following theorem
for learning an arbitrary (even continuous) $k$-dimensional mixture. The
sample size bounds for 1- and 2-snapshots below follow from Lemma~\ref{lm:expectation}
(taking $\sigma=O(\eps)$) and Lemma~\ref{lm:tA}.

\begin{theorem}
\label{thm:kspace}
Let $\mix$ be a mixture supported on $\sspan(A)\cap\simplex_n$, where $\sspan(A)$ is a
$k$-dimensional subspace.
Using $O(n\log n/\eps^3)$, $O(k^4n^3\log n/\eps^6)$, and
$\left(\frac{k}{\epsilon}\right)^{O(k)}$ 1-, 2-, and $K$-snapshot samples respectively, where
$K=\widetilde{O}(k^{11}/\epsilon^{10})$, we can obtain, with probability 0.99, a mixture
$\hmix$ such that $\tran_1(\tmix, \mix)\leq O(\epsilon)$
\end{theorem}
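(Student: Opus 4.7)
The plan is to simply assemble the pieces developed throughout Sections~\ref{sec:reduction} and~\ref{sec:directlearn}, tracking parameters carefully. First I would invoke Lemma~\ref{lm:isotropy} (with $\sigma = O(\epsilon)$) to reduce to the isotropic case at a cost of $O(n\log n/\epsilon^3)$ 1-snapshots plus an additive $\epsilon$ term in the $\tran_1$ error. This is where the $1$-snapshot budget in the theorem statement is spent.

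Next I would spend the 2-snapshot budget to replace the unknown matrix $A$ by the empirical matrix $\tA$: Lemma~\ref{lm:tA} says that $O(k^4 n^3 \log n/\epsilon^6)$ 2-snapshots suffice to get $\|A-\tA\| \leq O(\epsilon^3/(k^2 n^{3/2}))$, and running the basis construction of Reduction~\ref{spaceredn} on $\tA$ in place of $A$ yields a basis $\tB$ still satisfying Lemma~\ref{lm:vectorinB} (this is what the remark at the end of Section~\ref{sec:reduction} ensures, via Wedin's theorem). Once $\tB$ is in hand, I set the parameters $C = O(k^2/\epsilon)$, $L=O(\sqrt{k/n}\cdot C/\epsilon)$, $\epsilon_1 = O(\epsilon^2/(\sqrt{k}C))$ and $\epsilon_2 = \epsilon_1/(L\sqrt{n}) = (\epsilon/k)^{5}$ as in~\eqref{eq:parameters1} and~\eqref{eq:parameters3}, which in particular fixes $K = \widetilde{O}(h/\epsilon_2^2) = \widetilde{O}(k^{11}/\epsilon^{10})$ and the $K$-snapshot sample count $N = O(1/\epsilon_2)^h = (k/\epsilon)^{O(k)}$.

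The heart of the argument is then to bound $\tran_1(\mix_{\tB},\tmix_{\tB})$. From the samples I form $\ep = \frac{1}{N}\sum_i \delta(\ep(\sample_i))$ and set $\tmix_{\tB} = \tB\,\ep$. Lemmas~\ref{lm:bound1} and~\ref{lm:bound2} (applied with $\tB$ in place of $B$) give $\tran_2(\mu_N,\ep)\leq O(\epsilon_2 L)$ and $\tran_2(\mu,\mu_N)\leq O(\epsilon_2 L)$ respectively, hence $\tran_2(\mu,\ep)\leq O(\epsilon_2 L)$. Pushing this forward by $\tB$ and using $\|\tB\|_{2\to 1}\leq \sqrt{n}$ via Lemma~\ref{lm:proj3}(i) gives
\[
\tran_1(\mix_{\tB},\tmix_{\tB}) \leq \sqrt{n}\cdot O(\epsilon_2 L) = O(\epsilon_1),
\]
by our choice of $\epsilon_2$.

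Finally, I would feed this into Theorem~\ref{thm:reduction}(ii) (in the unknown-$A$ form closed by Lemma~\ref{lm:mixandmixQ2}): the final adjustment procedure of Reduction~\ref{spaceredn} projects $\tmix_{\tB}$ to $\tcalQ$ and then $L_1$-projects back to $\simplex_n$, yielding an output $\tmix$ with $\tran_1(\mix,\tmix) \leq O(\epsilon)$. Summing the three sample counts gives exactly the bounds in the theorem statement, and all the high-probability events (Lemma~\ref{lm:expectation}, Lemma~\ref{lm:tA}, Lemma~\ref{lm:bound2}) can be taken simultaneously by choosing constants so that the total failure probability is at most $0.01$. The main obstacle worth double-checking is simply that the parameter chain $\epsilon \Rightarrow C \Rightarrow L \Rightarrow \epsilon_1 \Rightarrow \epsilon_2 \Rightarrow (K,N)$ closes consistently; essentially the only non-bookkeeping step is the conversion between $\tran_2$ and $\tran_1$ using $\|\tB\|_{2\to 1}\leq \sqrt{n}$, which is why we had to arrange for $\epsilon_1 = \sqrt{n}\cdot \epsilon_2\cdot L$ in the first place.
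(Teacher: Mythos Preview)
Your proposal is correct and follows essentially the same approach as the paper: the paper's own argument for Theorem~\ref{thm:kspace} is a one-line ``combine Theorem~\ref{thm:reduction} with the analysis above (Lemmas~\ref{lm:bound1}, \ref{lm:bound2}) and use Lemmas~\ref{lm:expectation}, \ref{lm:tA} for the $1$- and $2$-snapshot budgets,'' and you have simply written that out in detail with the parameter chain made explicit.
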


\section{\boldmath Learning $k$-spike mixtures on $\simplex_n$}
\label{sec:kspike}

In this section, we consider the 
setting where $\mix$ is a $k$-spike distribution on $\simplex_{n}$, that is, $\mix$ is
supported on $k$ points in $\simplex_n$. This setting was
also considered in~\cite{rabani2014learning} 
but unlike the results therein, our sample size bounds {\em only depend on $n$ and $k$}
and {\em not} on any ``width'' parameters of $\mix$ (e.g., the least weight
of a mixture constituent, or the distance between two spikes).
We use $K$-snapshot samples only for $K=2k-1$ in this section, which is known to be
necessary~\cite{rabani2014learning}.

The high level idea of our algorithm is as follows.
Again, given the reduction of Section~\ref{sec:reduction}, we only need to provide an algorithm
for learning a good approximation $\tmix_B$ for the projected measure
$\mix_B:=\Pi_{\sspan(B)}(\mix)$.
More specifically, we need
$\tran_1(\tmix_B,\mix_B)\leq \epsilon_1$.
For this purpose, we pick a fine net of directions in $\sspan(B)$
and learn the 1-dimensional projected measures on these directions.
Then we use the 1-dimensional projected measures to reconstruct
$\Pi_{\sspan(B)} \mix$.
The reconstruction can be done by a linear program that is similar to $\LP_1$ in Section~\ref{subsec:kspikeoned}.
The most crucial and technically challenging part is to show that if the 1D-projections of
two measures are close (in $\tran$),
then the two measures must be close as well (Lemma~\ref{lm:projection}). To do this, we
leverage Yudin's theorem (Theorem~\ref{thm:yudin}), which shows that any $\lip$-function $f$
in $\B_2^h(1)$ admits a good approximation in terms of certain 1D-functions with
bounded Lipschitz constant. Since the 1D-projections of the two measures are close, the
Kantorovich-Rubinstein theorem implies that the RHS of \eqref{eq:trans} is
small for these 1D functions, and hence that the RHS of \eqref{eq:trans} is small for
$f$. This implies (again by \eqref{eq:trans}) that the two measures are close in $\tran$.

\begin{theorem}
\label{thm:kspike}
Let $\mix$ be an arbitrary $k$-spike mixture in $\simplex_n$.
Using $O(n\log n/\eps^3)$, $O(k^4 n^{3}\log n/\epsilon^6)$, and $(k/\epsilon)^{O(k^2)}$ 1-
and 2- and $(2k-1)$-snapshot samples respectively, we can obtain, with probability 0.99, a
mixture $\hmix$ such that
$\tran_1(\tmix, \mix)\leq O(\epsilon)$.
\end{theorem}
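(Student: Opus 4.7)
The plan is to invoke Theorem~\ref{thm:reduction}, which reduces the task to learning an approximation $\tmix_B$ of $\mix_B=\Pi_{\sspan(B)}(\mix)$ with $\tran_1(\tmix_B,\mix_B)\le\epsilon_1$, for $C=O(k^2/\epsilon)$, $L=O(\sqrt{k/n}\cdot C/\epsilon)$, and $\epsilon_1=O(\epsilon^2/(\sqrt{k}C))$. Because $\mix$ is $k$-spike, so is $\mix_B$, and Lemma~\ref{lm:vectorinB}(iii) confines its support to the $L_2$-ball of radius $L$ in $\sspan(B)$. Writing points of $\sspan(B)$ in coordinates with respect to the orthonormal basis $B$ and rescaling by $1/L$ converts the problem into learning a $k$-spike measure $\mu$ in $\B_2^h(1)\subseteq\R^h$ (for $h\le k$) in $\tran_2$, with target error $\epsilon_2=\epsilon_1/(L\sqrt{n})$; the conversion from $\tran_1$ back to $\tran_2$ on $\sspan(B)$ loses only a factor of $L\sqrt{n}$ by Lemma~\ref{lm:vectorinB}(ii).

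I would fix a $\delta$-net $\calN$ of unit vectors in $\R^h$ of size $(1/\delta)^{O(k)}$ and draw a single pool of $(k/\epsilon)^{O(k^2)}$ independent $(2k-1)$-snapshot samples. For each $u\in\calN$, every snapshot $(\ell_1,\ldots,\ell_{2k-1})$ projects to a 1D $(2k-1)$-snapshot by replacing each $\ell_j$ with $\innerprod{u}{b'_{\ell_j}}/L\in[-1,1]$; applying Theorem~\ref{thm:1dkspike} yields an estimate $\tq_u$ of the 1D projection $q_u$ of $\mu$ along $u$ with $\tran(\tq_u,q_u)\le\epsilon_3$, where $\epsilon_3=(\epsilon/k)^{\Omega(k)}$ is set below. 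A union bound over $\calN$ blows up the sample count only by the $\log|\calN|=\mathrm{poly}(k,\log(1/\epsilon))$ factor in Theorem~\ref{thm:1dkspike}. To reconstruct $\mu$, I would use an LP analogous to $\LP_1$: discretize $\B_2^h(1)$ by a grid of resolution $\tau=(\epsilon/k)^{\Omega(k)}$, introduce nonnegative variables $x_j$ summing to $1$, and enforce, for every $u\in\calN$ and every $i\in\{0,\dots,2k-1\}$, that $\sum_j x_j\innerprod{u}{p_j}^i$ match the $i$th empirical moment of $\tq_u$ (computed via $\pas$ and the empirical frequency vector) up to $O(i\tau)$. Feasibility is witnessed by snapping each spike of $\mu$ onto the grid and invoking Lemma~\ref{lm:momentconverse}; enumerating over $k$-element grid supports as in the proof of Theorem~\ref{thm:1dkspike} yields a $k$-spike solution $\widetilde{\mu}$ whose 1D projections $\widetilde{q}_u$ satisfy $\tran(\widetilde{q}_u,q_u)\le O(\epsilon_3)$ for every $u\in\calN$, by Lemma~\ref{lm:moment}.

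The crux is the following projection lemma: if $\mu,\widetilde{\mu}$ are measures on $\B_2^h(1)$ and $\tran(q_u,\widetilde{q}_u)\le\epsilon_3$ for every $u$ in a sufficiently fine net $\calN$, then $\tran_2(\mu,\widetilde{\mu})\le\epsilon_2$. To prove this I would use Kantorovich-Rubinstein duality~\eqref{eq:trans} and bound $\bigl|\int f\,d(\mu-\widetilde{\mu})\bigr|$ uniformly over $f\in\lip(\B_2^h(1))$. Yudin's theorem (Theorem~\ref{thm:yudin}) approximates $f(x)=\sum_{t'\in\mathbb{Z}^h\cap\B_2^h(R)}c(t')\,e^{\i\innerprod{t'}{x}}\pm O(h/R)$ with $|c(t')|\le e^{O(h)}$. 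Each exponential $e^{\i\innerprod{t'}{x}}$ factors through the 1D projection along $u_{t'}:=t'/\|t'\|$ as the $\|t'\|$-Lipschitz function $y\mapsto e^{\i\|t'\|y}$, with $\|t'\|\le R$; rounding $u_{t'}$ to its nearest net direction $u\in\calN$ adds only $O(R\delta)$ per term. Applying~\eqref{eq:trans} in 1D gives $\bigl|\int e^{\i\|t'\|y}\,d(q_u-\widetilde{q}_u)\bigr|\le R\epsilon_3$, so summing over the $O(R^h)$ lattice points yields $\bigl|\int f\,d(\mu-\widetilde{\mu})\bigr|\le e^{O(h)}R^h\cdot R(\epsilon_3+\delta)+O(h/R)$, and~\eqref{eq:trans} completes the bound on $\tran_2(\mu,\widetilde{\mu})$.

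The main obstacle, and the source of the $(k/\epsilon)^{O(k^2)}$ sample complexity, is balancing these parameters: the $O(h/R)$ Yudin tail forces $R=\Theta(h/\epsilon_2)$, after which the $e^{O(h)}R^{h+1}$ prefactor forces the per-direction accuracy down to $\epsilon_3=(\epsilon/k)^{\Omega(k)}$; Theorem~\ref{thm:1dkspike} then demands $(k/\epsilon_3)^{O(k)}=(k/\epsilon)^{O(k^2)}$ samples, which suffice simultaneously for all directions in $\calN$ via the union bound. The net size $|\calN|$ and the LP grid contribute to running time but not to the sample count. Finally, the $O(n\log n/\epsilon^3)$ and $O(k^4n^3\log n/\epsilon^6)$ 1- and 2-snapshot bounds come directly from Lemma~\ref{lm:expectation} (with $\sigma=O(\epsilon)$) and Lemma~\ref{lm:tA}, which are needed to construct the basis $\tB$ in Reduction~\ref{spaceredn}.
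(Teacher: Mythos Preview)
Your overall strategy matches the paper's: reduce via Theorem~\ref{thm:reduction}, project to many 1D directions and learn each projection via the $k$-spike coin algorithm, then reconstruct $\mix_B$ and certify closeness by a Yudin-based projection lemma. The reconstruction and projection-lemma parts are essentially equivalent to the paper's Section~\ref{subsec:reconstruct} and Lemma~\ref{lm:projection}, with cosmetic differences: you use a $\delta$-net of unit directions and round each Yudin frequency $t'$ to the net, whereas the paper takes the scaled lattice $T=\{t'/(hR):t'\in\mathbb{Z}^h\cap\B_2^h(R)\}$ as its set of directions, so no rounding is needed; and you enforce moment-matching plus enumeration over $k$-element supports, whereas the paper's $\LP_2$ encodes the 1D transportation constraint directly with auxiliary flow variables and does not require the output to be $k$-spike.

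There is, however, a genuine gap in your conversion to 1D. Replacing each letter $\ell_j$ by the real number $\innerprod{u}{b'_{\ell_j}}/L$ does \emph{not} produce a $(2k-1)$-snapshot in the sense required by Theorem~\ref{thm:1dkspike}: that theorem consumes binary coin flips, and the frequency vector $\tfreq$ (hence the empirical moment vector $\tg=\pas\,\tnfreq$ you invoke) is only defined for $\{0,1\}$-valued samples. The paper handles this with a randomized Bernoulli conversion (Section~\ref{subsec:project1d}): for direction $t$, each sampled letter $i$ is replaced by an independent bit that is $1$ with probability $\phi(t_i)=\tfrac{t_i}{2\|t\|_\infty}+\tfrac12$. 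Because $\phi$ is affine, a snapshot drawn from spike $s$ becomes a genuine $(2k-1)$-snapshot of a coin with bias $\phi(\innerprod{t}{s})$, so Theorem~\ref{thm:1dkspike} applies verbatim and Lemma~\ref{lm:samplebound1d} recovers the $(k/\epsilon)^{O(k^2)}$ bound. Alternatively you could keep the real-valued $Z_j=\innerprod{u}{b'_{\ell_j}}/L$ and estimate the $i$th projected moment by averaging products $Z_{j_1}\cdots Z_{j_i}$ over distinct indices (this is exactly where $K=2k-1$ is needed), but then you must redo the concentration analysis rather than cite Theorem~\ref{thm:1dkspike}. Either fix is routine; as written, the step ``applying Theorem~\ref{thm:1dkspike}'' does not typecheck.

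A minor point: since $\mix$ is $k$-spike, Lemma~\ref{lm:C}(i) lets you take $C=O(k/\epsilon)$ rather than $O(k^2/\epsilon)$, though this only affects constants hidden inside the $(k/\epsilon)^{O(k^2)}$ bound.
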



\subsection{Projecting to one dimension}
\label{subsec:project1d}
Assume $B=\{b_1,\ldots, b_h\}$, where $h=\dim(\sspan(B))\leq k$.
We use the following parameters:
$C=O(k/\epsilon)$ as suggested in Lemma~\ref{lm:C},
$\epsilon_1$ and $L$ are defined as in \eqref{eq:parameters1}, and
\begin{align}
\label{eq:parameters}
K=2k+1, \quad
R=O\left(\frac{h}{\epsilon_1}\right), \quad
\epsilon_2= \epsilon_1^{O(h)}L.
\end{align}

Let $T$ be a set of $n$-dimensional vectors (we call them {\em directions}) in $\sspan(B)$, where
each $t\in T$ is given by $t=\sum_{i=1}^h t_i b_i$ with
$t_i \in \frac{1}{hR}\cdot \{-R,\ldots, R\}$.
In other words,
each direction $t=(t_1,\ldots, t_h)\in T$
has the form $t_i \in \frac{1}{hR}\cdot \{-R,\ldots, R\}$ in basis $B$.
It is easy to see for any $t\in T$, $\|t\|_2\leq 1$.
Consider the set of 1-dimensional ``projected" measures
$\{\mix_t\}_{t\in T}$, where
$\mix_t$ is defined as
$$
\mix_t(S) := \mix(\{x\mid \innerprod{t}{x}\in S\}) \quad \text{for any }S \subset \mathbb{R}.
$$

Now, we show how to estimate the projected measure $\mix_t$ for each $t\in T$.
Since $\|x\|_1=1$ for any $x\in \support(\mix)$, we can see $\mix_t$ is supported within $[-\|t\|_\infty,\|t\|_\infty]$.
Let
$
\phi(x) = \frac{x}{2\|t\|_\infty}+\frac{1}{2}
$
which maps $[-\|t\|_\infty,\|t\|_\infty]$ to $[0,1]$.
Suppose we get a $K$-snapshot sample from the original mixture.
We need to describe how to convert this sample to a $K$-snapshot sample
for the 1-dimensional problem for estimating $\mix_t$.

\begin{enumerate}
\item
For each sampled letter in the $K$-snapshot sample, say the letter is $i\in [n]$,
we get a sample $``1"$ for the 1-d problem with probability $\phi(t_i)$ ($t_i$ is the $i$th coordinate of $t$),
and a sample $``0"$ with probability $1-\phi(t_i)$.
\item
We feed those $K$-snapshot samples to the algorithm for the 1-d problem (see Section~\ref{subsec:kspikeoned})
and obtain a measure $\tmix'_t$.
Our estimation for $\mix_t$ is $\tmix_t$ defined as $\tmix_t(S)= \tmix'_t(\phi(S))$ for any $S\subset [-\|t\|_\infty,\|t\|_\infty]$.
\end{enumerate}
We first need a bound on how good our estimation $\tmix_t$ is.
\begin{lemma}
\label{lm:samplebound1d}
 Using $(kL/\epsilon_2)^{O(k)}=(k/\epsilon)^{O(k^2)}$ many $K$-snapshot samples, the above algorithm can produce,
with probability 0.99,
an estimation $\tmix_t$
such that
$
\tran_2(\tmix_t, \mix_t)\leq \epsilon_2
$
for each $t\in T$.
\end{lemma}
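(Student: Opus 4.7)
The plan is to first show that the sampling procedure produces bona fide $K$-snapshots from a coin mixture whose constituent distribution on $[0,1]$ is exactly $\phi_*\mix_t:=\mix_t\circ \phi^{-1}$, then invoke the 1-dimensional $k$-spike algorithm (Theorem~\ref{thm:1dkspike}) to recover it, and finally undo the linear map $\phi$ to bound $\tran_2(\tmix_t,\mix_t)$. A union bound over $t\in T$ completes the argument.

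For the first step, fix $t\in T$ and condition on a single draw $p\sim\mix$. Each letter $\ell_j$ of the $K$-snapshot is drawn i.i.d.\ from $p$, so the independently generated bit has probability
$\sum_{i\in[n]} p_i\, \phi(t_i)=\phi(\innerprod{t}{p})$
of being $1$, using linearity of $\phi$. Hence, conditional on $p$, the $K$ output bits are i.i.d.\ Bernoulli with bias $\phi(\innerprod{t}{p})\in[0,1]$, and marginalizing over $p\sim\mix$ this is precisely a $K$-snapshot from the coin mixture $\phi_*\mix_t$ on $[0,1]$. Moreover, since $\mix$ is $k$-spike and the map $p\mapsto \phi(\innerprod{t}{p})$ is linear, $\phi_*\mix_t$ is again $k$-spike.

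For the second step, recall from Lemma~\ref{lm:vectorinB}(i) that every $t\in T$ (with $\|t\|_2\le 1$) has $\|t\|_\infty\le L$, and the affine map $\phi\colon[-\|t\|_\infty,\|t\|_\infty]\to[0,1]$ has scale factor $\tfrac{1}{2\|t\|_\infty}$. Hence, if $\tmix'_t$ is the output of the 1-dimensional algorithm on the generated $K$-snapshots, then the pullback $\tmix_t(S)=\tmix'_t(\phi(S))$ satisfies
\[
\tran_2(\tmix_t,\mix_t)=2\|t\|_\infty\cdot \tran(\tmix'_t,\phi_*\mix_t)\le 2L\cdot\tran(\tmix'_t,\phi_*\mix_t).
\]
So it suffices to guarantee $\tran(\tmix'_t,\phi_*\mix_t)\le \epsilon_2/(2L)$. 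By Theorem~\ref{thm:1dkspike} applied with accuracy parameter $\epsilon':=\epsilon_2/(2L)$, using $(k/\epsilon')^{O(k)}\log(1/\delta)=(kL/\epsilon_2)^{O(k)}\log(1/\delta)$ i.i.d.\ $K=(2k-1)$-snapshots suffices with probability $1-\delta$ (we may take $K=2k+1$ as stated as this only helps).

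For the union bound, note $|T|=(2hR+1)^h=(h/\epsilon_1)^{O(h)}=(k/\epsilon)^{O(k)}$. Setting $\delta=0.01/|T|$ adds only a $\log|T|=O(k\log(k/\epsilon))$ factor to the per-direction sample size. Importantly, the same pool of $K$-snapshot samples from $\mix$ can be recycled across all $t\in T$: for each direction we simply regenerate the Bernoulli bits from the same pool, so a single pool of size $(kL/\epsilon_2)^{O(k)}\cdot O(k\log(k/\epsilon))$ yields simultaneously good estimates $\tmix_t$ for every $t\in T$. Finally, substituting $L=O(\sqrt{k/n}\cdot C/\epsilon)=\poly(k,1/\epsilon)$ and $\epsilon_2=\epsilon_1^{O(h)}L$ with $\epsilon_1=\poly(\epsilon/k)$ gives $kL/\epsilon_2=(k/\epsilon)^{O(k)}$, and therefore the overall sample complexity is $(kL/\epsilon_2)^{O(k)}=(k/\epsilon)^{O(k^2)}$, as claimed.

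The main obstacle (if any) is the bookkeeping in the scale conversion: because our accuracy requirement is measured in $L_2$-transportation on the original segment $[-\|t\|_\infty,\|t\|_\infty]$, not on $[0,1]$, we pay a factor of $2\|t\|_\infty\le 2L$ when invoking Theorem~\ref{thm:1dkspike}, and this factor must be absorbed into the choice of $\epsilon_2$; the parameter settings in \eqref{eq:parameters} are precisely calibrated to make this work.
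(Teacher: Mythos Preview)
Your proof is correct and follows essentially the same approach as the paper: identify the converted bits as $K$-snapshots from the pushforward mixture $\mix_t\circ\phi^{-1}$, apply Theorem~\ref{thm:1dkspike} with accuracy $\epsilon'=\epsilon_2/(2L)$, and rescale by $2\|t\|_\infty\le 2L$; you additionally spell out the union bound over $t\in T$, which the paper leaves implicit. One small slip: you write $L=O(\sqrt{k/n}\cdot C/\epsilon)=\poly(k,1/\epsilon)$, but $L$ depends on $n$; fortunately this does not matter since $L$ cancels in $kL/\epsilon_2=k/\epsilon_1^{O(h)}$, and your final bound is correct.
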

\begin{proof}
Let $\mix'_t$ be the 1-dimensional measure supported on $[0,1]$ defined as
$\mix'_t(S)=\mix_t(\phi^{-1}(S))$ for any $S\subseteq [0,1]$.
A moment reflection shows that $\mix'_t$
is exactly the mixture that generates the converted $K$-snapshot samples (i.e., the $0/1$ samples generated in step 1).
Let $\epsilon'=\epsilon_2/L=\epsilon_1^{O(h)}$.
By Theorem~\ref{thm:1dkspike}, using $(k/\epsilon')^{O(k)}$, the algorithm returns  $\tmix'_t$ with
$\tran(\tmix'_t, \mix'_t)\leq \epsilon'$.
The function $\phi$ stretches the length by a factor of $1/2\|t\|_\infty$ (shifting by a constant does not affect transportation distance),
so
$$
\hspace{3.5cm}
\tran_2(\tmix_t, \mix_t)=\tran(\tmix'_t, \mix'_t)\cdot 2\|t\|_\infty
\leq 2\|t\|_\infty \epsilon'  \leq 2L\epsilon' = \epsilon_2. \qquad \qquad \qedhere
$$
\end{proof}

\subsection{\boldmath Reconstructing $\mix_B$ from the 1D-projections}
\label{subsec:reconstruct}
We use $\Pi_B$ as a short for $\Pi_{\sspan(B)}$
and use $\mix_B$ to denote the projection of $\mix$ to $\sspan(B)$, i.e.,
$\mix_B=\Pi_{B}(\mix)$.
We now reconstruct $\mix_B$ from the 1-dimensional projections $\{\tmix_t\}_{t\in T}$.

Now, we show how to obtain a probability measure $\tmix_B$
such that
$\tran(\tmix_B,\mix_B)\leq O(\epsilon)$.
Let $\supp=\sspan(B)\cap \B^n_{2}(L)$
where $\B^n_{2}(L)$ is the $L_2$ ball in $\R^n$ with radius $L$.
By Lemma~\ref{lm:vectorinB} (iii), $\mix_B=\Pi_B(\mix)$ is supported on $\supp$.
It is well known that there is a $\epsilon_2$-net $\calN$ of size $(L/\epsilon_2)^{O(h)}=(k/\epsilon)^{O(h^2)}$ for $\supp$
(see e.g., \cite{dudley1984course,bourgain1989approximation}),
i.e., for any point $p\in \supp$, there is a point $s\in \calN$ such that $||p-s||_2\leq \epsilon_2$.
Therefore, for any probability measure $\mix$ supported over $\supp$,
there is a discrete distribution $Q$ with support $\calN$ such that $\tran_2(\mix,Q)\leq \epsilon_2$.
Now, we try to find a distribution $Q$ such that
$\tran_1(\tmix_t, Q_t)\leq \epsilon_2$
for each $t\in T$, where $\epsilon_2$ is defined in \eqref{eq:parameters}.
Consider the following linear program ($\LP_2$):
For each point $q\in \calN$, we have a variable $y_q$ ($y_q\geq 0$) corresponding to the probability mass at point $q$
a variable $x_{pq}\geq 0$ representing the mass transported from a point $p\in \support(\tmix_t)$ to $q\in \calN$.
Note that $\tmix_t$ is also a discrete distribution, so
the constraint about the transportation distance can be encoded exactly as a linear program:
\begin{align*}
\LP_2:\quad\quad
& \sum_{p} x_{pq} =y_q \text{ for all }q\in \calN; \quad \\
& \sum_{q}x_{pq}=\tmix_t(\{p\}) \text{ for all }p\in \support(\tmix_t); \quad \\
& \sum_{p,q} |p-\innerprod{q}{t}| \,x_{pq}\leq \epsilon_2; \quad\quad
\sum_q y_q=1.
\end{align*}
Suppose $Q$ is a discrete distribution with support $\calN$ such that $\tran_2(Q,\tmix)\leq \epsilon_2$.
From Lemma~\ref{lm:proj3}, we can see that $\tran(Q_t, \tmix_t)\leq \epsilon_2$ for all $t\in T$ as well
($\innerprod{t}{x}$ for $\|t\|_2\leq 1$ is a contraction). Hence, $\LP$ has a feasible solution.
We obtain a feasible solution $Q$ to $\LP$ and let $\tmix_B=Q$ be our estimate of $\mix_B$.

\paragraph{Analysis.}
Any feasible solution $Q$ to $\LP$
satisfies $\tran(Q_t, \mix_t)\leq \epsilon_2$.
The following crucial lemma asserts that if the corresponding 1-dimensional projections of
two measures are close in transportation distance for every direction, the original
measures must be close too. Thus, we obtain that $\tran_1(\tmix_B,\mix_B)\leq O(\eps_1)$;
combining this with Theorem~\ref{thm:reduction} yields Theorem~\ref{thm:kspike}.

\begin{lemma}
\label{lm:projection}
For any probability measure $P\in \supp $, we use $P_t$ to denote the 1-dimensional
measure $P_t(S):=P(\{x\mid \innerprod{t}{x}\in S\})$ for any $S\subset \mathbb{R}$.
Consider two probability measures $P$ and $Q$ over $\supp$.
If $\,\tran(P_t, Q_t)\leq O(\epsilon_2 )$ for all $t\in T$, then
$
\tran_1(P, Q)\leq O(\epsilon_1).
$
\end{lemma}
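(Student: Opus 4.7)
The plan is to combine Kantorovich-Rubinstein duality with Yudin's multidimensional Jackson theorem (Theorem~\ref{thm:yudin}) to lift the hypothesized one-dimensional closeness along directions in $T$ to full $\tran_1$ closeness on $\sspan(B)$. First I would transfer everything to $\R^h$ using the orthonormal basis $B$: setting $\mu = B^T P$ and $\nu = B^T Q$, both measures are supported in $\B_2^h(L)$ by Lemma~\ref{lm:vectorinB}(iii), and rescaling by $1/L$ sends them to $\tilde\mu,\tilde\nu$ supported in $\B_2^h(1)$. Transportation distances transform predictably under the isometry $B^T$ and the scaling, and the final passage from a bound on $\tran_2$ to a bound on $\tran_1$ uses the crude inequality $\|v\|_1\leq\sqrt{n}\,\|v\|_2$ on $\sspan(B)$ (equivalently, any $L_1$-Lipschitz $f$ on $\supp$ is $\sqrt{n}$-$L_2$-Lipschitz there). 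The parameter choices in \eqref{eq:parameters1} and \eqref{eq:parameters} are designed so that the resulting $\sqrt{n}\,L$ factor is absorbed into the target $O(\epsilon_1)$.

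By K--R duality it then suffices to bound $|\int f\,\d(\tilde\mu-\tilde\nu)|$ uniformly over $f\in\lip$ (in $L_2$) on $\B_2^h(1)$. Yudin's theorem produces coefficients $c(t')$ with $|c(t')|\leq\exp(O(h))$ for $t'\in\mathbb{Z}^h\cap\B_2^h(R)$ such that $f(x)=\sum_{t'} c(t')\,e^{\i\innerprod{t'}{x}}\pm O(h/R)$ on $\B_2^h(1)$. The truncation contributes $O(h/R)$ to $|\int f\,\d(\tilde\mu-\tilde\nu)|$, and the key combinatorial point is that the net $T$ has been built precisely so that each Fourier index corresponds, after rescaling, to a direction in $T$: for $t'\in\mathbb{Z}^h\cap\B_2^h(R)$ one has $\|t'\|_\infty\leq R$, so $t'_i/(hR)\in\tfrac{1}{hR}\{-R,\dots,R\}$, and hence the vector $\tilde t:=\sum_i (t'_i/(hR))\,b_i$ lies in $T$. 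The pushforward of $\tilde\mu$ along $y\mapsto\innerprod{t'}{y}$ therefore equals $P_{\tilde t}$ rescaled on the real line by the factor $hR/L$, and since $\cos$ and $\sin$ both lie in $\lip$, one-dimensional K--R duality applied to the real and imaginary parts yields $\bigl|\int e^{\i\innerprod{t'}{x}}\,\d(\tilde\mu-\tilde\nu)\bigr|\leq O(hR/L)\cdot\tran(P_{\tilde t},Q_{\tilde t})\leq O(hR\,\epsilon_2/L)$.

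Summing over the $O(R^h)$ Fourier modes produces a total Fourier contribution of at most $\exp(O(h))\cdot R^{h+1}\cdot h\,\epsilon_2/L$. With $R=O(h/\epsilon_1)$ the Yudin truncation error is $O(\epsilon_1)$, and the large exponent in $\epsilon_2/L=\epsilon_1^{O(h)}$ is chosen precisely to dominate the polynomial blowup $R^{h+1}\exp(O(h))$, so $\tran_2(\tilde\mu,\tilde\nu)=O(\epsilon_1)$; undoing the rescalings and the $L_2$-to-$L_1$ conversion then yields $\tran_1(P,Q)\leq O(\epsilon_1)$. The main obstacles I anticipate are (i) verifying that Yudin's complex Fourier modes $e^{\i\innerprod{t'}{\cdot}}$ really do reduce to scaled one-dimensional projections along directions in $T$ --- this is what makes the hypothesis on $\{\tran(P_t,Q_t)\}_{t\in T}$ usable --- and (ii) the careful bookkeeping needed to track three separate rescalings (between $\supp$ and $\B_2^h(1)$, between the Fourier lattice $\mathbb{Z}^h\cap\B_2^h(R)$ and $T$, and between $L_1$- and $L_2$-Lipschitz constants on $\sspan(B)$) and confirm that the choice $\epsilon_2=\epsilon_1^{O(h)}L$ really does close the calculation.
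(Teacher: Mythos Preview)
Your proposal is correct and follows the same route as the paper: Kantorovich--Rubinstein duality combined with Yudin's theorem, matching each Fourier index $t'\in\mathbb{Z}^h\cap\B_2^h(R)$ to the direction $t'/(hR)\in T$ and invoking the one-dimensional hypothesis mode by mode. The only presentational difference is that the paper applies K--R directly to $L_1$-Lipschitz test functions on $\supp$ (using Lemma~\ref{lm:vectorinB}(ii) to control their $L_2$-Lipschitz constant, then scaling into $\B_2^h(1)$ to apply Yudin) rather than first bounding $\tran_2$ and converting at the end; note that your conversion factor $\sqrt{n}\,L$ equals $\poly(k,1/\epsilon)$ rather than $O(1)$, so it is the slack in $\epsilon_2/L=\epsilon_1^{O(h)}$ --- exactly your obstacle~(ii) --- that must absorb it.
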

\begin{proof}
\eat{
We need the following convergence result about
multidimensional Fourier series expansion:
For a function
$f: [-\pi,\pi]^{h}\rightarrow \mathbb{C}$ (assuming $f$
is $2\pi$-periodic in each axis):
$$
f(x) \sim \sum_{t'\in \mathbb{Z}^{h}} c_{t'}  e^{\i \innerprod{t'}{x}}
$$
where
$c_{t'} = {1 \over (2 \pi)^d} \int_{x\in [-\pi,\pi]^{d}} f(x) e^{-\i \innerprod{t'}{x}}\, \d x $.
The {\em rectangular partial sum} is defined to be
$$
S_{N} f(x) =\sum_{|t'_{1}|\leq N}\ldots\sum_{|t'_{h}|\leq N} c_{t'}  e^{\i \innerprod{t'}{x}}.
$$
We use $|t'|\leq N$ as a shorthand notation for $\sum_{|t'_{1}|\leq N}\ldots\sum_{|t'_{h}|\leq N}$.
It is known that the rectangular partial sum  $S_{N}f(x)$ converges uniformly to $f(x)$ in $[-\pi,\pi]^{h}$
for many function classes as $n$ tends to infinity.
In particular, the following result is known~\cite{alimov92multiple}:
If $f\in \lip$ (in $L_2$ distance), then
\begin{align}
\label{eq:fourier}
|f(x)-(S_Nf)(x)|\le O\Bigl( {\ln^{h} N\over N}\Bigr) \ \ \ \text{for }x\in [-\pi,\pi]^{h}.
\end{align}
}
Consider a function $f$ that is supported on $\supp=\sspan(B)\cap \B^n_{2}(L)$ and $\lip$ in $L_1$ distance
(denoted as $f\in \lip(\supp, L_1)$).
From Lemma~\ref{lm:vectorinB} (ii), we can see $f(x)$ is $\Lip{\frac{1}{L}}$ in $L_2$ distance.
Hence, $f(xL)$, supported on $\supp=\sspan(B)\cap \B^n_{2}(1)$, is $\lip$ in $L_2$ distance.
From now on, let us switch to the representation in basis $B$ for the rest of the proof.
For any $f\in \lip(\supp, L_1)$, using Yudin' Theorem (Theorem~\ref{thm:yudin}) and after scaling, we can see that
there exist $c(t')\in \mathbb{C}$ for $t'\in \mathbb{Z}^{h} \cap \B_2^h(R)$ such that
$|f(x)-(\overline{U}_R f)(x)|\le O(h/ R)$
where
$\overline{U}_{R} f(x) =\sum_{t'\in \mathbb{Z}^{h} \cap \B_2^h(R)} c(t')  e^{\i \innerprod{t'}{x}/L}$

Now, fix some $t\in T$.
In basis $B$, $t'= Rh t$ is an integer vector.
By Kantorovich-Rubinstein theorem, for any $t\in T$,
we have $|\int g \,\d (P_t- Q_t)|\leq \alpha\epsilon_1$
for any $g\in \Lip{\alpha}$ where $\alpha$ is a positive number.
Consider function $e^{\i \alpha x}$ where $\i$ is the imaginary unit.
It is easy to see both its real part and imaginary part are in $\Lip{\alpha}$.
Therefore, we have
$$
\left|\int e^{\i \alpha x} \,\d (P_t- Q_t)\right|\leq   O(\alpha\epsilon_2).
$$
Now, we make a simple but crucial observation that links the projected measure $P_t$ to
the characteristic function of $P$:
$$
\int  e^{\i \innerprod{t'}{x}} \,\d P=\int  e^{\i hR x} \,\d P_t  \quad \text{for any }t\in T \text{ and } t'= Rh t.
$$
In fact, this can be seen from \eqref{eq:change}, by viewing $P_t$ as the image measure of
$P$ under the function
$\innerprod{t}{x}$.

By the Kantorovich-Rubinstein theorem,
$\tran(P,Q) = \sup_{f\in \lip(\supp,L_1)} \left|\int f \d(P-Q) \right|$.
Consider an arbitrary $f\in \lip(\supp,L_1)$. We have that
\begin{align*}
\left|\int f \d(P-Q) \right| & \leq \left|\int \overline{U}_R f \,\d(P-Q)\right| + O\Bigl( {h \over R}\Bigr) \\
& \leq \sum_{t'\in \mathbb{Z}^{h} \cap \B^h(R)}  |c(t')| \cdot \left|\int  e^{\i \innerprod{t'}{x}/L} \,\d(P-Q)\right| + O\Bigl( {h \over R}\Bigr)\\
& = \sum_{t'\in \mathbb{Z}^{h} \cap \B^h(R)}  |c(t)| \cdot \left|\int  e^{\i hRx/L} \,\d(P_t-Q_t)\right| + O\Bigl( {h \over R}\Bigr)\\
&\leq  \frac{hR\epsilon_2}{L}\cdot  \sum_{t'\in \mathbb{Z}^{h} \cap \B^h(R)}  |c(t')|  + O\Bigl( {h \over R}\Bigr)
\end{align*}
\eat{
\begin{align*}
\left|\int f \d(P-Q) \right| & \leq \left|\int \overline{U}_R f \,\d(P-Q)\right| + O\Bigl( {h \over R}\Bigr) \\
& \leq \sum_{t'\in \mathbb{Z}^{h} \cap \B^h(R)}  |\hat{f}(t') \lambda(t'/R)| \cdot \left|\int  e^{\i \innerprod{t'}{x}/L} \,\d(P-Q)\right| + O\Bigl( {h \over R}\Bigr)\\
& = \sum_{t'\in \mathbb{Z}^{h} \cap \B^h(R)}  |\hat{f}(t') \lambda(t'/R)| \cdot \left|\int  e^{\i hRx/L} \,\d(P_t-Q_t)\right| + O\Bigl( {h \over R}\Bigr)\\
&\leq  \frac{hR\epsilon_2}{L}\cdot  \sum_{t'\in \mathbb{Z}^{h} \cap \B^h(R)}  |\hat{f}(t') \lambda(t'/R)|  + O\Bigl( {h \over R}\Bigr)
\end{align*}

We can easily show that $\hat f(t')\leq O((2\pi)^{h} /||t' ||_\infty)$ (using the same proof as \cite[pp.92]{stein2003fourier}).
Moreover, $|\lambda(x)| = |\int_{\B(1)} \phi(y)\phi(x-y)\d y| \leq \|\phi\|^2_2=1$ for any $x\in \B_2^h(1)$.
}
Since $|c(t')| \leq \exp(O(d))$, choosing $R=O(\frac{h}{\epsilon_1})$ and $\epsilon_2=(\epsilon_1/h)^{O(h)}L$, we have that
$
\left|\int f \d(P-Q) \right| \leq O(\epsilon_1).
$
Taking supremum on both sides completes the proof of the lemma.
\eat{
\jiannote{
Can we get rid of the exponential dependency on $h$?
Is there some other specific decomposition other than Fourier decomposition that we can use?
Note that for high dimension Fourier series, we can not hope for a much better bound of the absolute sum of the Fourier coefficients
(the bound has to be exponential on $h$).
Can we use any property about $k$-spike distributions?
Wavelet decomposition??
Or the moment method (Maybe for $k$-spike distributions, the moment method may be the right way to go as only the first 2k-1 moments matters.
The characteristic function contains information about all moments which seems to be wasteful.)?
}
}
\end{proof}

\begin{proofof}{Theorem~\ref{thm:kspike}}
As noted earlier, any feasible solution $Q$ to $\LP$ satisfies
$\tran(Q_t, \mix_t)\leq\epsilon_2$.
By Lemma~\ref{lm:projection} below and noticing that
$$
\mix_t=\Pi_t(\mix)=\Pi_t(\Pi_B \mix)=\Pi_t(\mix_B)=(\mix_B)_t,
$$
we can see that $\tran_1(Q, \mix_B)\leq O(\epsilon_1)$.
Reduction~\ref{spaceredn} and Theorem~\ref{thm:reduction} therefore show that we obtain
$\hmix$ satisfying the stated transportation-distance bound.

The sample size bounds for 1- and 2-snapshots below follow from Lemma~\ref{lm:expectation}
(taking $\sigma=O(\eps)$) and Lemma~\ref{lm:tA} respectively.
Overall, we need to estimate
$R^{O(h)}=(h/\epsilon)^{O(h)}$ many
$\mix_t$s, each requiring
$(k/\epsilon)^{O(k^2)}$ many $(2k-1)$-snapshot samples (by Lemma~\ref{lm:samplebound1d}).
\end{proofof}

\bibliographystyle{plain}
\bibliography{probdb}

\appendix


\section{Proof of Lemma~\ref{lm:proj3}} \label{app:prelim}
Since $\tran(\mu,\nu)\leq \epsilon$,
there exists a coupling $W$ between $\mu$ and $\nu$
such that
$$
\int \|x-y\|_Y \d(W(x,y))\leq \epsilon.
$$
$W$ can also be viewed as a coupling between $T\mu$ and $T\nu$. Therefore,
$$
\hspace{2.0cm}
\tran(T\mu,T\nu) \leq \int \|Tx-Ty\|_Y \d(W(x,y))
\leq \int \|T \|_{X\rightarrow Y}\cdot \|x-y\|_X\, \d(W(x,y))
\leq \|T\|_{X\rightarrow Y} \,\epsilon.
$$
The second statement can be shown in exactly the same way.
The third is as simple.
Suppose $W$ is the optimal coupling between $\mu$ and $\nu$.
Then, we can see that
\begin{align*}
\tran(T\mu, T'\nu)  & \leq \int \| T x - T' x' \|_Y \,W(\d (x,x')) \\
&\leq \int (\| T x - Tx' \|_Y +\| Tx' - T' x' \|_Y) \,W(\d (x,x')) \\
&\leq \|T\|_{X\rightarrow Y} \int \|x-x'\|_X\, W(\d(x,x')) + \| x' \|_X \int \|T-T'\|_{X\rightarrow Y}  \, W(\d(x,x')) \\
&\leq \|T\|_{X\rightarrow Y} \tran(\mu, \nu) + \|x'\|_X\cdot \epsilon =O(\epsilon). \tag*{$\qedsymbol$}
\end{align*}

\section{Proofs from Section~\ref{sec:reduction}} \label{app:reduction}

\eat{
\jiannote{
\paragraph{Why standard SVD does not suffice.}
Recall our goal is to find a subspace $\sspan(B)$ of dimension at most $k$,
such that if we can learn the projected mixture $\prod_{B}(\mix)$
within $L_1$-transportation distance at most $\epsilon_1$,
we can learn $\mix$
within $L_1$-transportation distance at most $\epsilon$.
For ease of discussion, we treat $\epsilon$ and $k$ as constants,
but $n$ as a parameter which can be very large.
Note that
$\epsilon_1$ should only depend on $\epsilon$ and $k$, but not $n$.
Otherwise, the number of $K$-snapshot samples required to
estimate $\prod_{B}(\mix)$
within $L_1$-transportation distance at most $\epsilon_1$
would
depend on $n$.
(The numbers of $K$-snapshot samples in both Theorem~\ref{thm:kspace}
and Theorem~\ref{thm:kspike} are independent of $n$.)
}

\jiannote{
It is a tempting idea to use the standard SVD method
to find the subspace $\sspan(B)$.
Suppose we even know the matrix $A=\int_{\simplex^n} xx^T \mix(\d x)$
exactly in advance.
Hence we can directly use $\sspan(A)$ as the subspace.
In fact, it is not difficult to learn $\mix=\prod_{A}\mix$
within $L_2$ transportation distance $\epsilon_1$
using a sample size independent of $n$.
This is mainly due to the fact that $L_2$ is rotationally invariant,
and hence it is equivalent to a learning problem in $\R^k$.
However, the same is not true for the $L_1$ distance.
Note that we place no assumption on $A$.
So, in order to obtain an estimate $\tmix$ of $\mix=\prod_{A}\mix$
within $\tran_1(\tmix,\mix)\leq \epsilon_1$,
we essentially need to
make $\tran_2(\tmix,\mix)\leq \epsilon_1/\sqrt{n}$
(because the $L_1$ distance and the $L_2$ distance can
differ by an $\sqrt{n}$ factor in $\R^n$).
Since $L_2$ is rotationally invariant,
the problem appears to be roughly equivalent to learning a mixture supported
in an arbitrary $k$-dimensional slice of $\simplex^n$
within $L_2$-transportation distance $\epsilon_1/\sqrt{n}$,
which would require a sample size depending $n$.
\footnote{
	The reason that we can avoid the loss of the $\sqrt{n}$ factor
	using the subspace we found in Section~\ref{sec:reduction}
	is because, roughly speaking, the $L_1$ ball in $\sspan(B)$ is nearly spherical,
	and hence $\prod_B(\mix)$ is supported in an $L_2$-ball of radius $O(1/\sqrt{n})$,
        which is of the same magnitude as the precision requirement).
}
Similarly, using the subspace spanned by
the first few eigenvectors of $A$ suffers the same problem
(there is no guarantee that the $L_1$-ball in this subspace is nearly
spherical).
}}

\begin{proofof}{Lemma~\ref{lm:isotropy}}
With $O(\frac{1}{\sigma^3}n\log n)$ independent 1-snapshot samples,
we can assume that $\tr_i$s satisfy the statement of Lemma~\ref{lm:expectation}.
We modify the mixture as follows:
If there is a letter $i\in [n]$ such that $\tr_i\leq 2\sigma/n$, we simply eliminate this letter.
The total probability of eliminated letters is at most $4\sigma$, which incurs
at most an additive $4\sigma\leq \epsilon$ term in transportation distance.
For each of the remaining letter $i\in [n]$,
we ``split" it into $n_i=\lfloor n\tr_i/\sigma\rfloor$ copies, and the probability of $i$ is equally spit
among these copies.
For the eliminated letter $i$, we can think $n_i=0$.
Let $\hmix$ be the modified mixture.

Consider an $m$-snapshot from the original mixture $\mix$.
If the snapshot includes an eliminated letter, we ignore this snapshot.
Otherwise, each letter $i$ in the snapshot is replaced with one of its $n_i$ copies, chosen
uniformly at random. Then, we feed the algorithm for learning $\hmix$ with this snapshot
(we can easily see the snapshot is distributed exactly the same as one generated from $\hmix$).
Suppose $\tmix$ is an estimate of $\hmix$ (returned by the algorithm).
To obtain an estimate of the original mixture, for each constitute of $\tmix$,
we have a constitute in which the probability of letter $i$ is the sum of the probabilities of the $n_i$ copies.

Now, we show $\hmix$ is isotropic.
Let $n'=\sum_i n_i\leq n/\sigma$ be number of new letters in $\hmix$.
We can see $n'\geq \sum_i \frac{2n\tr_i}{3\sigma} \geq \frac{5n}{8\sigma}$.
For each non-eliminated item $i$, we have
$\tr_i/r_i\in [31/32,33/32]$.
Then, we can easily verify that for each new item $i'$,
we have $\hat{r}_{i'}=\frac{ r_i}{\lfloor n\tr_i/\sigma\rfloor} \in [\frac{1}{2n'}, \frac{2}{n'}]$.
Therefore, $\hmix$ is isotropic.
\end{proofof}


\begin{proofof}{Lemma~\ref{lm:tA}}
Let $D=N_2(\tA-A)$.
It is easy to see that  $\Exp[D_{ij}]=0$.
Moreover, since $N_2$ is a Poisson random variable, $D_{ij}$s are independent of each other.
Let $X^{\ell}_{ij}=1$ if the $\ell$-th snapshot is $(i,j)$.
Let $Y^{\ell}_{ij}=X^{\ell}_{ij}-A_{ij}$.
So, $D_{ij}=\sum_{\ell=1}^{N_2} Y^{\ell}_{ij}$.
We can see that
$$
\Var[D_{ij}\mid N_2=n_2] = n_2 \Var[Y^1_{ij}] \leq n_2 A_{ij} \leq n_2/n.
$$
Let $K=O(k^2 n^{3/2}\log n /\epsilon^3)$.
Using Bernstein's inequality (Proposition~\ref{prop:bernstein}), we can see that
for any $n_2\leq 2\Exp[N_2]$,
$$
\Pr[|D_{ij}|\geq K \mid N_2=n_2] \leq 2\max\left\{\exp\left(- \frac{K^2 n}{n_2}\right),\exp(-3K)\right\} \leq 1-\frac{1}{\exp(n)}.
$$
With a union bound and the fact that $\Pr[n_2\geq 2\Exp[N_2]]\leq 1-\exp(-n)$ , we can see that
with probability $1-\exp(-n)$,
$|D_{ij}|\leq K$ for all $i,j$.
Let $\calE$ denote the event $|D_{ij}|\leq K$ for all $i,j$.
Notice that conditioning on $\calE$,
$D_{ij}$s are still independent of each other.
Moreover,
$$
\Var[D_{ij}\mid \calE] =\Var[D_{ij}\mid |D_{ij}|\leq K] \leq \Var[D_{ij}]=\Exp[N_2] \Var[Y^1_{ij}] \leq \Exp[N_2]/n.
$$
Conditioning on $\calE$, we can apply Theorem~\ref{thm:vu2005spectral} and obtain that, with probability
$1-1/\poly(n)$,
$$
\| D \| \leq 2\sqrt{\frac{E[N_2]}{n}} \cdot \sqrt{n} + O(\sqrt{K} (\Exp[N_2])^{1/4}\ln n)
$$
Plugging in the value of $\Exp[N_2]$ and $K$,
we can see that, with high probability $1-1/\poly(n)$,
$$
\hspace{4cm}
\|\tA-A\| \leq \frac{1}{N_2}\|D\| \leq \frac{2}{\Exp[N_2]} \|D\| \leq O\left(\frac{\epsilon^3}{k^2n^{3/2}}\right).
\qquad \qquad \qedhere
$$
\end{proofof}

\begin{proofof}{Lemma~\ref{lm:mixandmixQ2}}
Suppose  the spectral decomposition of $A$ is
$
A=\sum_{i=1}^k \lambda_i v_iv_i^T.
$
where $\lambda_1\geq \lambda_2\geq \ldots \geq \lambda_k\geq 0$ are the eigenvalues.
Let $\gamma=\epsilon^2/kn$.
Suppose $\gamma\leq \lambda_{k'}\leq \lambda_{k'+1}\leq \ldots \leq \lambda_k$.
It is easy to see that there must a value $k'\leq j\leq k$ such that
$\lambda_{j-1}-\lambda_j \geq \gamma/k$.
Define $A'$ to be the truncation
$$
A'=\sum_{i:i < j} \lambda_i v_iv_i^T.
$$
First, we can see from the definition of $A$ that for any $i$,
$$
\lambda_i=\innerprod{v_i}{A v_i} = \int \innerprod{v_i}{x}^2 \mix(\d x)
=\int \innerprod{v_i}{x}^2 \mix(\d x).
$$
Then, we have that
\begin{align*}
\int \|x-\proj{A'}x \|_2^2 \mix(\d x) & \leq
 \int \Bigl\|\sum_{i:i\geq j}  \innerprod{v_i}{x} v_i \Bigr\|_2^2 \mix(\d x) \\
& = \int \sum_{i:i\geq j} \innerprod{v_i}{x}^2 \mix(\d x)
 \leq \int \sum_{i:\lambda_i< \gamma} \innerprod{v_i}{x}^2 \mix(\d x) \\
& = \sum_{i:\lambda_i< \gamma} \lambda_i  \leq \epsilon^2/n.
\end{align*}
Now, we can bound the transportation distance between $\mix$ and $\tmix$, using Cauchy-Schwarz, as follows:
\begin{align*}
\tran_2(\mix, \proj{A'}\mix) & \leq \int \| x - \proj{A'}x \|_2 \mix(\d x) \\
& \leq \left(\int \| x - \proj{A'}x \|_2^2 \mix(\d x) \int 1 \mix(\d x) \right)^{1/2} \\
& \leq  O(\epsilon/\sqrt{n}).
\end{align*}
Suppose $\tA$ has the spectral decomposition
$
\tA=\sum_{i=1}^{k'} \eta_i u_iu_i^T
$
and
$
\tA'=\sum_{i: i<j} \eta_i u_iu_i^T.
$
Note that
$\proj{\tA'}\mix= \proj{\tA'}(\proj{\tA}\mix)$.
Exactly the same proof also shows that
\begin{align*}
\tran_2(\proj{\tA}\mix, \proj{\tA'}\mix)
 \leq  O(\epsilon/\sqrt{n}).
\end{align*}
All nonzero eigenvalues of $A'$ and $\tA'$ are at least $\epsilon^2/(kn)$.
Let $\Phi$ be the matrix of canonical angles between $\sspan(A')$ and $\sspan(\tA')$.
Using Wedin's Theorem (Theorem~\ref{thm:wedin}) and since
$\|A-\tA\|\leq O\left(\frac{\epsilon^3}{k^2n^{3/2}}\right)$, 
we can see that
\begin{align*}
\tran_2(\proj{A'}\mix, \proj{\tA'}\mix)
& \leq \int \| \proj{A'}(x)- \proj{\tA'}(x) \|_2 \mix(\d x)
\leq \int \| \proj{A'}- \proj{\tA'}\| \cdot \|x\|_2 \mix(\d x) \\
& \leq \int \| \sin \Phi \|_2  \mix(\d x)
 \leq  \| \sin \Phi \|_2   \\
&\leq  \frac{\|A-\tA \| }{\lambda/k}
\leq  O(\epsilon/\sqrt{n}).
\end{align*}
Combining the above inequalities,
we can see that
$$
\tran_1(\mix, \mix_{\tA})\leq \sqrt{n} \tran_2(\mix,\mix_{\tA}) \leq O(\epsilon).
$$
To show that
$\tran_1( \mix_{\tA},\mix_{\tcalQ})\leq \epsilon$, we can use exactly the same proof as
that of Lemma~\ref{lm:mixandmixQ}.
\end{proofof}


\end{document}